\newcommand{\ba}        {\mathbf{a}}
\newcommand{\bA}        {\mathbf{A}}
\newcommand{\cA}        {\mathcal{A}}
\newcommand{\cD}        {\mathcal{D}}
\newcommand{\E}         {\mathbb{E}}
\newcommand{\cE}        {\mathcal{E}}
\newcommand{\cF}        {\mathcal{F}}
\newcommand{\bg}        {\mathbf{g}}
\newcommand{\bh}        {\mathbf{h}}
\newcommand{\bI}        {\mathbf{I}}
\newcommand{\cN}        {\mathcal{N}}
\newcommand{\bbP}       {\mathbb{P}}
\newcommand{\R}         {\mathbb{R}}
\newcommand{\bu}        {\mathbf{u}}
\newcommand{\bv}        {\mathbf{v}}
\newcommand{\cV}        {\mathcal{V}}
\newcommand{\bw}        {\mathbf{w}}
\newcommand{\bW}        {\mathbf{W}}
\newcommand{\cW}        {\mathcal{W}}
\newcommand{\wtbw}      {\widetilde{\bw}}
\newcommand{\bx}        {\mathbf{x}}
\newcommand{\wtbx}      {\widetilde{\bx}}
\newcommand{\Z}         {\mathbb{Z}}
\newcommand{\bz}        {\mathbf{z}}
\newcommand{\cZ}        {\mathcal{Z}}
\newcommand{\bxi}       {\boldsymbol{\xi}}
\newcommand{\bzeta}    {\boldsymbol{\zeta}}
\newcommand{\bSigma}    {\boldsymbol{\Sigma}}
\newcommand{\bzero}     {\mathbf{0}}
\newcommand{\nN}[1]             {\left[\!\left[{#1}\right]\!\right]}
\DeclareMathOperator*{\argmin}  {arg\,min}
\DeclareMathOperator*{\argmax}  {arg\,max}
\newcommand{\tT}        {\mathrm{T}}
\newcommand{\DMK}{\mbox{DM-Krasulina}}
\newcommand{\gap}{{\tt gap}}
\theoremstyle{plain}
\newtheorem{theorem}{Theorem}
\newtheorem{corollary}{Corollary}
\theoremstyle{definition}
\newtheorem{definition}{Definition}
\theoremstyle{remark}
\newtheorem{remark}{Remark}
\newcommand{\revise}[1]{#1}
\begin{document}

% paper title
\title{Scaling-up Distributed Processing of Data Streams for Machine Learning}

% authors
\author{Matthew~Nokleby, Haroon~Raja, and Waheed~U.~Bajwa,~\IEEEmembership{Senior~Member,~IEEE}% <-this % stops a space
% thanks
\thanks{M.~Nokleby ({\tt matthew.nokleby@target.com}) is with the Target Corporation, Minneapolis, MN. H.~Raja ({\tt hraja@umich.edu}) is with the Department of Electrical Engineering and Computer Science at the University of Michigan, Ann Arbor, MI. W.U.~Bajwa ({\tt waheed.bajwa@rutgers.edu}) is with the Department of Electrical and Computer Engineering and Department of Statistics at Rutgers University--New Brunswick, NJ.}
\thanks{The work of WUB has been supported in part by the National Science Foundation under awards CCF-1453073, CCF-1907658, and OAC-1940074, by the Army Research Office under award W911NF-17-1-0546, and by the DARPA Lagrange Program under ONR/NIWC contract N660011824020. The work of HR has been supported in part by the National Science Foundation under awards CCF-1845076 and IIS-1838179, and by the Army Research Office under award W911NF-19-1-0027.}}

% running headings
%\markboth{}{}

% make the title area
\maketitle

%***************
\vspace{-2\baselineskip}
\begin{abstract}
Emerging applications of machine learning in numerous areas---including online social networks, remote sensing, internet-of-things systems, smart grids, and more---involve continuous gathering of and learning from streams of data samples. Real-time incorporation of streaming data into the learned machine learning models is essential for improved inference in these applications. Further, these applications often involve data that are either inherently gathered at geographically distributed entities due to physical reasons---e.g., internet-of-things systems and smart grids---or that are intentionally distributed across multiple computing machines for memory, storage, computational, and/or privacy reasons. Training of machine learning models in this distributed, streaming setting requires solving stochastic optimization problems in a collaborative manner over communication links between the physical entities. When the streaming data rate is high compared to the processing capabilities of individual computing entities and/or the rate of the communications links, this poses a challenging question: how can one best leverage the incoming data for distributed training of machine learning models under constraints on computing capabilities and/or communications rate? A large body of research in distributed online optimization has emerged in recent decades to tackle this and related problems. This paper reviews recently developed methods that focus on large-scale distributed stochastic optimization in the compute- and bandwidth-limited regime, with an emphasis on convergence analysis that explicitly accounts for the mismatch between computation, communication and streaming rates, and that provides sufficient conditions for order-optimum convergence. In particular, it focuses on methods that solve: ($i$) distributed stochastic convex problems, and ($ii$) distributed principal component analysis, which is a nonconvex problem with geometric structure that permits global convergence. For such methods, the paper discusses recent advances in terms of distributed algorithmic designs when faced with high-rate streaming data. Further, it reviews theoretical guarantees underlying these methods, which show there exist regimes in which systems can learn from distributed processing of streaming data at order-optimal rates---nearly as fast as if all the data were processed at a single super-powerful machine.
\end{abstract}

\begin{IEEEkeywords}
Convex optimization, distributed training, empirical risk minimization, federated learning, machine learning, mini-batching, principal component analysis, stochastic gradient descent, stochastic optimization, streaming data
\end{IEEEkeywords}

%***************
\section{Introduction}\label{sec:intro}

\subsection{Motivation and Background}\label{ssec:intro.motivation}
Over the past decade---and especially the past few years---there has been a rapid increase in research and development of \emph{artificial intelligence} (AI) systems across the public and private sectors. A significant fraction of this increase is attributable to remarkable recent advances in a subfield of AI that is termed \emph{machine learning}. Briefly, a machine learning system uses a number of data samples---referred to as \emph{training data}---in order to \emph{learn} a mathematical \emph{model} of some aspect of the physical world that can then be used for automated decision making; see Fig.~\ref{fig:intro.quad.chart}(a) for an example of this in the context of automated tagging of images of cats and dogs. Training a machine learning model involves mathematical optimization of a \emph{data-driven} function with respect to the model variable. The decision making capabilities of a machine learning system, in particular, tend to be directly tied to one's ability to solve the resulting optimization problem up to a prescribed level of accuracy.

While solution accuracy remains one of the defining aspects of machine learning, the advent of \emph{big data}---in terms of data dimensionality and/or number of training samples---and the adoption of large-scale models with millions of parameters in machine learning methods such as \emph{deep learning}~\cite{LeCunBengioEtAl.N15} has catapulted the computing time for training (i.e., the \emph{training time}) to another one of the defining parameters of modern %machine learning
systems. It is against this backdrop that stochastic optimization methods such as \emph{stochastic gradient descent} (SGD) and its variants~\cite{bottou2010large,lan2012optimal,johnson2013accelerating,li2014efficient}, in which training data are processed one sample or a small batch of samples---referred to as a \emph{mini batch}---per %algorithmic
iteration, as opposed to deterministic optimization methods such as gradient descent~\cite{Nocedal.Wright.Book2006}, in which the entire batch of training data is used in each %algorithmic
iteration, have become the de facto standard for faster training of %machine learning
models.

\begin{figure*}[t]
    \centering
    \includegraphics[width=\textwidth]{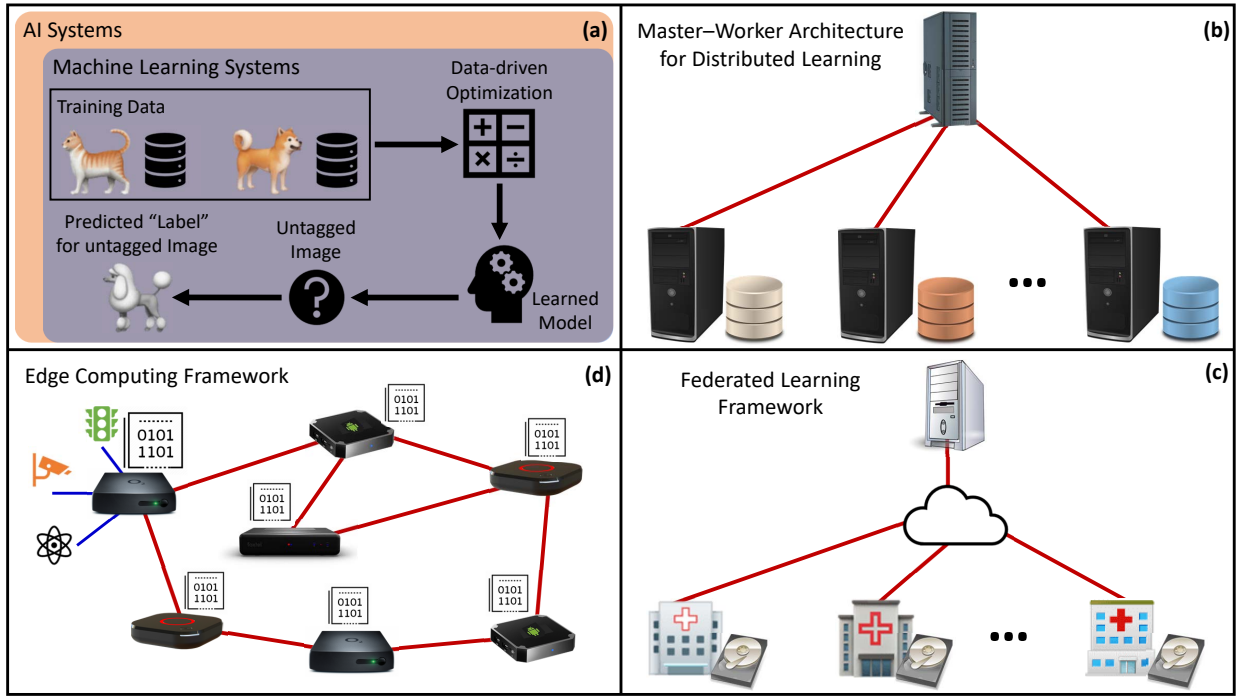}
    \caption{A schematic quad chart illustrating four different concepts in machine learning. (a) A simplified representation of a machine learning system within the context of an image tagging application. (b) Master--worker architecture that can be used for distributed training of a machine learning model within a compute cluster. (c) A federated machine learning system representing the federation of a group of autonomous hospitals. (d) A representative edge computing framework that can be used for decentralized training of a machine learning model.}
    \label{fig:intro.quad.chart}
\end{figure*}

Another major shift in machine learning practice concerns the use of \emph{distributed} and \emph{decentralized} computing platforms, as opposed to a single computing unit, for training of models. There are myriad reasons for this paradigm shift, which range from the focus on further decreasing the training times %of machine learning models
and preserving privacy of data to adoption of machine learning for decision making in inherently decentralized %engineering
systems. In particular, distributed and decentralized training of machine learning models can be epitomized by the following three prototypical frameworks.
\begin{itemize}
    \item \emph{Distributed computing framework:} A distributed computing framework, %sometimes
    also referred to as a \emph{compute cluster}, brings together a set of computing units such as CPUs and GPUs %in order
    to accelerate training of large-scale machine learning models from big data in a more cost-effective manner than a single computer with comparable storage capacity, memory, and computing power. Computing units/machines in a compute cluster typically communicate among themselves using either Ethernet or InfiniBand interconnects, with the intra-cluster communication infrastructure often abstracted in the form of a graph in which vertices/nodes correspond to computing units. A typical graph structure that is commonly utilized for distributed training %of machine learning models
    within compute clusters is \emph{star graph}, which corresponds to the so-called \emph{master--worker} architecture; see, e.g., Fig.~\ref{fig:intro.quad.chart}(b). Training data within this setup is split among the worker nodes, which perform bulk of the computations, while the master node coordinates the distributed training of machine learning model among the worker nodes.

    \item \emph{Federated learning framework:} The term ``federated learning,'' coined in \cite{KonecnyMcMahanEtAl.arxiv16}, refers to any machine learning setup in which a collection of autonomous entities (e.g., smartphones and hospitals), each maintaining its own private training data, collaborate under the coordination of a central server to learn a ``global'' machine learning model that best describes the collective non-collocated/distributed training data. A typical federated learning system, in which entities collaborate only through communication with the central server and are prohibited from sharing raw training samples with the server, can also be abstracted as a star graph; see, e.g., Fig.~\ref{fig:intro.quad.chart}(c). However, unlike a master--worker distributed machine learning system---in which the primary objective is reduction of the wall-clock time for training of machine learning models, the first and foremost objective of a federated learning system is to preserve privacy of the data of collaborating entities.

    \item \emph{Edge computing framework:} The term ``edge computing'' refers to any decentralized computing system comprising geographically distributed and compact computing devices that collaboratively complete a computational task through local computations and device-to-device communications. Some of the defining characteristics of an edge computing system, which set it apart from a compute cluster, include lack of a coordinating central server, (relatively) slower-speed device-to-device communications (e.g., wireless communications and power-line communications), and abstraction of inter-device communication infrastructure in terms of arbitrary graph topologies (as opposed to star topology). Many emerging edge computing systems, such as the \emph{internet-of-things} (IoT) systems and smart grids, have each computing device connected to a number of data-gathering sensors that generate large volumes of data. Since exchange of these large-scale ``local'' data among the computing devices becomes prohibitive due to communication constraints, machine learning in such systems necessitates decentralized collaborative training that involves each device learning (approximately) the same ``local'' model through inter-device communications that best fits the collective system data; see, e.g., Fig.~\ref{fig:intro.quad.chart}(d).
\end{itemize}

The purpose of this paper is to provide an overview of some important aspects of distributed/decentralized machine learning that have implications for all three of the aforementioned %computing
frameworks. We slightly abuse terminology in the following for ease of exposition and refer to training of machine learning models under any one of these frameworks as distributed machine learning. When one considers distributed training of (large-scale) %machine learning
models from (big) distributed datasets, it raises a number of important questions; these include: ($i$) What are the fundamental limits on solution accuracy of distributed machine learning? ($ii$) What kind of %(deterministic or stochastic)
optimization frameworks and communication strategies (which exclude exchange of raw data among subcomponents of the system) result in near-optimal distributed learning? ($iii$) How do the topology of the graph underlying the distributed computing setup and the speed of communication links in the setup impact the learning performance of any optimization framework? A vast body of literature in the last decade has addressed these (and related) questions for distributed machine learning by expanding on foundational works in distributed consensus~\cite{Tsitsiklis.Athans.ITAC1984,xiao2004fast}, distributed diffusion~\cite{ChenSayed.ITSP12}, distributed optimization~\cite{Tsitsiklis.etal.ITAC1986,nedic2009distributed}, and distributed computing~\cite{DeanGhemawat.ConfOSDI04}. Several of the key findings of such works have also been elucidated through excellent survey articles and overview papers in recent years~\cite{OlfatiSaber.etal.PI2007,olshevsky2011convergence,Boyd.etal.Book2011,Sayed.BookNOW14,Nedic.BookCh18,NedicOlshevskyEtAl.PI18,LiSahuEtAl.arxiv19,KairouzMcMahanEtAl.arxiv19,YangGangEtAl.arxiv19,XinKarEtAl.ISPM20}. Nonetheless, there remains a need to better understand the interplay between solution accuracy, communication capabilities, and computational resources in distributed %machine learning
systems that carry out training using ``streaming'' data. Indeed, distributed training using streaming data necessitates utilization of \emph{single-pass} stochastic optimization within the distributed framework, which gives rise to several important operational changes that are not widely known%in the literature
. It is in this regard that this overview paper summarizes some of the key research findings, and their implications, in relation to distributed machine learning from streaming data.

\subsection{Streaming Data and Distributed Machine Learning}\label{ssec:intro.streaming.data}
Continuous gathering of data is a hallmark of the digital revolution; in countless applications, this translates into streams of data entering into the respective machine learning systems. Within the context of distributed machine learning, the continuous data gathering has the effect of training data associated with each ``node'' in the distributed system being given in the form of a data stream (cf.~Fig.~\ref{fig:DistributedNetwork} in Section~\ref{sec:ProblemFormulation}). Since ``(full) batch processing'' is practically infeasible in the face of continuous data arrival, distributed training of %machine learning
models from streaming data requires (single-pass) stochastic optimization methods. Accordingly, we provide in this paper an overview of some of the state-of-the-art concerning stochastic optimization-based distributed training from streaming data.

Unlike much of the literature on centralized machine learning from streaming data, (relative) streaming rate of data---defined as the (average) number of new data samples arriving per second---fundamentally shapes the discussion of streaming-based distributed machine learning. In this regard, our objective is to elucidate the performance challenges and fundamental limits when the streaming rate of data is \emph{fast} compared to the processing speed of computing units and/or the communications speed of inter-node links in the system. In particular, this involves addressing of the following question: \emph{what happens to the solution accuracy of distributed machine learning when it is impossible to have high-performance computing machines for computing nodes and/or (multi-)gigabit connections for inter-node communication links?} Note that this question cannot be addressed by simply ``slowing down'' the data stream(s) through regular discarding of incoming samples. Within a distributed computing framework, for instance, letting some of the incoming samples pass without updating the model would be antithetical to its overarching objective of accelerated training. Similarly, downsampling of time-series data streams in an edge computing system would cause the system to lose out on critical high-frequency modes of data. In short, processing all data samples arriving into the distributed system and incorporating them into the learned model is both paramount and non-trivial.

There are many ways to frame and analyze the problem of distributed machine learning from fast streaming data, leading to far more relevant works than we can discuss in this overview paper. Instead, we provide a very brief discussion of the different framings, and motivate our prioritization of the following system choices under the general umbrella of distributed machine learning: \textbf{decentralized-parameter} systems, \textbf{synchronous-communications} distributed computing, and \textbf{statistical risk minimization} for training of machine learning models. We dive into the relevant distinctions for these system choices in the following.

\subsection{General Framing of the Overview}\label{ssec:intro.framing}
The %general
area of distributed machine learning is far too rich and broad to be covered in a %single overview
paper. Instead, we cover %in this paper
only some aspects of the area that are the most relevant to the topic of distributed machine learning from streaming data. %In order to
To %
put the rest of our discussion %in the paper
in context, we give a very coarse description of these aspects in the following, drawing out some of the crucial distinctions and pointing out which aspects %of the area
remain uncovered in the paper.

\textbf{System models for distributed learning.} We abstract away the dependence on any particular computing architecture by modeling the architecture as an interconnected network of (computing) nodes having a certain topology (e.g., star topology for the master--worker architecture). Accordingly, our discussion is applicable to any of the computing frameworks discussed in Section~\ref{ssec:intro.motivation} that adhere to the data and system assumptions described later in Section~\ref{sec:ProblemFormulation}. In the interest of generality, we also move away from the so-called \emph{parameter-server} system model that is used in some distributed %machine learning
environments~\cite{LiAndersenEtAl.ConfOSDI14,AbadiBarhamEtAl.ConfOSDI16}. In the simplest version of this %system
model, a single node---termed parameter server---maintains and updates parameters of the machine learning model, whereas the remaining nodes in the network compute gradients of their local data that are then transmitted to the parameter server and used to make updates to the shared set of parameters. We instead center our discussion around the {\em decentralized-parameter} system model, where each node maintains and updates its own copy of the parameters. This system model is more general, since any result that holds for a decentralized-parameter network also holds for a parameter-server network, it prevents a single point of failure in the system, and it allows us to present a unified discussion that transcends multiple system models.

\textbf{Models for message passing and communications.} Algorithmic-level synchronization (or lack thereof) among different computing nodes is one of the most important design choices in distributed %machine learning
implementations. On one hand, \textit{synchronous} implementations (which often make use of ``blocking'' message passing protocols for synchronization~\cite{DongarraOttoEtAl.TechRep95,GabrielFaggEtAl.ConfOpenMPI04}) can slow down training times due to either message passing (i.e., communications) delays or ``straggler'' nodes taking longer than the rest of the network to complete their subtasks. On the other hand, \textit{asynchronous} implementations have the potential to drastically impact the solution accuracy. Such tradeoffs between synchronous and asynchronous implementations, as well as approaches that hybridize the two%implementations
, have been investigated %by many researchers
in recent years%; see, e.g.,
~\cite{Recht.etal.AiNIPS22011,TsianosLawlorEtAl.ConfAllerton12,zhang2016hogwild++,ChenMongaEtAl.ConfICLRWT16,TandonLeiEtAl.ConfICML17,WangTantiaEtAl.arxiv19}. In this paper, we focus exclusively on synchronous implementations %(and blocking message passing primitives)
for the sake of concreteness. In addition, we abstract lower-level communications within the synchronous system as happening in discrete, pre-defined epochs (time intervals, slots, etc.). While such an abstraction models only a restrictive set of communications protocols, it greatly simplifies the exposition without sacrificing too much of the generality.

\textbf{Optimization framework for distributed machine learning.} \revise{Machine learning problems involve the optimization of a ``loss'' function with respect to the machine learning model.} And this optimization side of machine learning can be framed in two major \revise{interrelated} ways. The first (and perhaps most well-known) framing is referred to as \emph{empirical risk minimization} (ERM). The objective in this case is to minimize the \emph{empirical} risk \revise{$\hat{f}(\bw)$}, defined as the empirical average of the so-called \emph{(regularized) loss function} \revise{$\ell(\bw,\cdot)$} evaluated on the training samples\revise{, with respect to the model variable $\bw$}. Under mild assumptions on the loss function, data distribution, and training data, the ERM solution \revise{$\bw^*_{\textsc{erm}} \in \argmin_{\bw} \hat{f}(\bw)$} is known to converge (with high probability) to the minimizer \revise{$\bw^*$} of the ``true'' risk \revise{$f(\bw)$}, i.e., the \emph{expected} loss \revise{$f(\bw) := \E[\ell(\bw,\cdot)]$}~\cite{vapnik1999statistical}, with study of the rates of this convergence being a long-standing and active research area~\cite{ShalevShwartzShamirEtAl.ConfCOLT09}. Distributed learning literature within the ERM framework typically supposes a fixed and finite number \revise{$T$} of training samples distributed across computing nodes, and primarily focuses on understanding convergence of \revise{the output $\widehat{\bw}^*_{\textsc{erm}}$ of} different \emph{distributed optimization} schemes to the ERM solution \revise{$\bw^*_{\textsc{erm}}$}~\cite{NedicOlshevskyEtAl.PI18}. The accuracy of the final solution, termed \emph{excess risk} \revise{and defined as $f(\widehat{\bw}^*_{\textsc{erm}}) - f(\bw^*)$}, is then provided either implicitly or explicitly in the works as the sum of two gaps: \revise{($i$)} gap \revise{between the risk of the} distributed optimization solution \revise{and that of} the ERM solution\revise{, i.e., $f(\widehat{\bw}^*_{\textsc{erm}}) - f(\bw^*_{\textsc{erm}})$,} and \revise{($ii$)} gap \revise{between the risk of} the ERM solution \revise{and that of} the optimal solution, termed \emph{Bayes' risk}\revise{, i.e., $f(\bw^*_{\textsc{erm}}) - f(\bw^*)$}. In contrast to the ERM framework, the second optimization-based framing of machine learning---termed \emph{statistical risk minimization} (SRM)---facilitates a direct bound on the excess risk; see, e.g., Fig.~\ref{fig:erm.vs.srm}. This is because the objective in SRM framework is to minimize expected loss (risk) over the true data distribution, as opposed to empirical loss over the training data in ERM framework. The SRM framework falls squarely within the confines of stochastic optimization, with a large body of existing work---covering both centralized and distributed machine learning---that characterizes the excess risk of the resulting solution \revise{$\bw_{\textsc{srm},T}$} under the assumption that either the number of training samples \revise{$T$} is sufficiently large or it grows asymptotically. Since we are concerned with %the case of
streaming data, in which a virtually unbounded number of %data
samples may arrive at the system, we focus on the SRM-based framework and single-pass stochastic optimization for distributed machine learning. We discuss further the distinction between the convergence results derived under the frameworks of ERM and SRM in the sequel.

\begin{figure*}[t]
    \centering
    \includegraphics[width=0.55\textwidth]{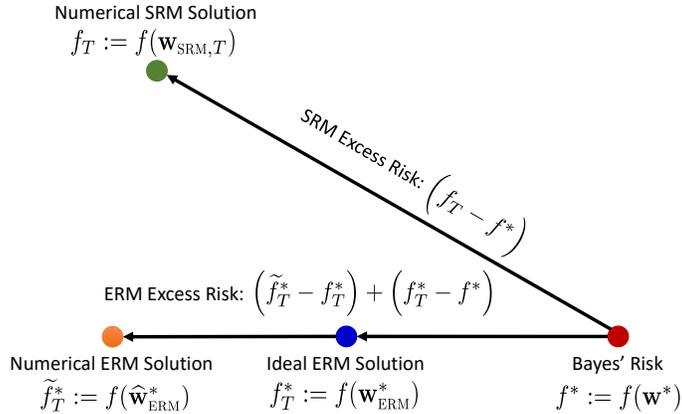}
    \caption{A geometrical view of ``excess risk,'' defined as the gap between the expected loss \revise{$f(\bw)$} (i.e., risk) of the solution $\bw$ of a practical machine learning algorithm and that of an ideal solution $\bw^*$ (i.e., Bayes' risk \revise{$f^* := f(\bw^*)$}), under the ERM and SRM optimization frameworks after receiving a total of $T$ training data samples. Under the ERM framework, the excess risk of the solution is bounded as the sum of bounds on two terms, namely, the gap between the risk of an ideal ERM solution and Bayes' risk, $(f^*_T - f^*)$, and the gap \revise{in risk} due to the error associated with numerical optimization, $(\widetilde{f}^*_T - f^*_T)$. In contrast, excess risk under the SRM framework is directly captured in terms of the gap between the risk of the numerical SRM solution $\bw_{\textsc{srm},T}$ and Bayes' risk, i.e., $(f_T - f^*)$.}
    \label{fig:erm.vs.srm}
\end{figure*}

\textbf{Structure of the optimization objective function.} The vast majority of works at the intersection of (stochastic) optimization and (distributed) learning suppose that the loss function is \emph{convex} with respect to the model parameters. But some of the most exciting recent results in machine learning have come about in the context of deep learning, where the objective function tends to be highly nonconvex and most practical methods do not even concern themselves with \emph{global optimality} of the solution~\cite{LeNgiamEtAl.ConfICML11,NeyshaburSalakhutdinovEtAl.ConfNeurIPS15}. Nevertheless, for the purpose of being able to carry out analysis, we focus in this overview on either convex problems or \emph{structured} nonconvex problems, such as \emph{principal component analysis} (PCA), where the structure can be exploited by local search methods to find a global solution. %We do, however, sketch out how one might obtain similar flavored results for more general nonconvex problems, as well as describe the challenges nonconvexity poses to distributed machine learning from streaming data.

\subsection{An Outline of the Overview Paper}\label{subsec:paper_Organization}
We now provide an outline of the remainder of this paper. Section~II gives a formal description of the learning and system models considered in this paper, including the loss function, the streaming data model, the communications model, and the way compute nodes exchange messages with each other during distributed learning. In Section~III, we discuss relevant results in (distributed) machine learning that prefigure the state-of-the-art being reviewed in the paper. Section~IV and Section~V of the paper are devoted to coverage of the state-of-the-art in terms of distributed machine learning from fast streaming data. The main distinction between the two sections is the nature of the communications infrastructure underlying the distributed computing framework. %Specifically,
Section~IV focuses on the case of (relatively) high-speed communications infrastructure that enables completion of message-passing primitives such as {\tt AllReduce}~\cite{GabrielFaggEtAl.ConfOpenMPI04} in a ``reasonable'' amount of time, whereas Section~V discusses distributed machine learning from streaming data in systems with (relatively) lower-speed communications infrastructure. In both cases, we discuss scenarios and distributed algorithms that can lead to near-optimal excess risk for the final solution as a function of the number of data samples arriving at the system; in addition, we present results of numerical experiments to corroborate some of the stated results. One of the key insights delivered by these two sections is that a judicious use of (implicit or explicit) mini-batching of data samples in distributed systems is fundamental in dealing with fast streaming data in compute- and/or communications% bandwidth
-limited scenarios. To this end, we provide theoretical results for the optimum choice of the size of (network-wide and local) mini-batches as well as conditions on when mini-batching is sufficient to achieve near-optimal convergence. We conclude the paper in Section~VI with a brief recap of the implications of presented results for the practitioners as well as a discussion of possible next steps for researchers working on distributed machine learning.

\subsection{Notational Convention}\label{ssec:intro.notation}
We use regular-faced (e.g., $a$ and $B$), bold-faced lower-case (e.g., $\ba$), and bold-faced upper-case (e.g., $\bA$) letters for scalars, vectors, and matrices, respectively. We use calligraphic letters (e.g., $\cA$) to represent sets, while $\nN{N} := \{1,\dots,N\}$ denotes the set of first $N$ natural numbers, and $\R_{\geq 0}$ and $\Z_+$ denote the sets of non-negative real numbers and positive integers, respectively. %Given a scalar $a \in \R$, $[a]^+ := \max\{a,0\}$ denotes the positive part of $a$.
Given a vector $\ba$ and a matrix $\bA$, $\|\ba\|_2 := \sqrt{\sum_{i}|a_i|^2}$, $\|\bA\|_2 := \argmax_{\bv : \bv \not= \bzero}\frac{\|\bA\bv\|_2}{\|\bv\|_2}$, and $\|\bA\|_F := \sqrt{\sum_{i,j}|a_{ij}|^2}$ denote the $\ell_2$-norm of $\ba$, the %operator (
spectral%)
~norm of $\bA$, and the Frobenius norm of $\bA$, respectively. Given a symmetric matrix $\bA \in \R^{d \times d}$, $\lambda_i(\bA)$ denotes its $i$-th largest-by-magnitude eigenvalue, i.e., $|\lambda_1(\bA)| \geq \dots \geq |\lambda_d(\bA)| \geq 0$. Given a function $f : \R^d \times \cZ \rightarrow \R$ that is partially differentiable in the first argument, $\nabla f$ denotes the gradient of $f(\cdot,\cdot)$ with respect to its first argument. Given functions $f(x)$ and $g(x)$, we use Landau's \emph{Big-O} notation (e.g., $f(x) = O(g(x))$ or $f(x) = o(g(x))$) to describe the scaling relationship between them. Finally, $\E\{\cdot\}$ denotes the expectation operator, where the underlying probability space $(\Omega,\cF,\bbP)$ is either implicit from the context or is explicitly noted.

\section{Problem Formulation}\label{sec:ProblemFormulation}
In this section, we %formally
discuss the problem of distributed processing of fast streaming data for machine learning %; this discussion is carried out
in three parts. First, we describe the general statistical optimization problem underlying machine learning. Second, we describe the system model that formalizes distributed processing of streaming data %for the purposes of machine learning
. Finally, we formalize the notion of \emph{fast} streaming data in terms of, among other things, data streaming rate, processing rate of compute nodes, and communications rate of inter-node links within the distributed environment.

\subsection{Statistical Optimization for Machine Learning}\label{subsec:StatisticalLearning}
% The concept of a \emph{loss function}, which assigns cost to machine learning models in terms of their inability to describe data drawn from an unknown probability distribution, is central to the discussion of machine learning.
Most machine learning problems can be posed as data-driven optimization problems, with the objective termed a \emph{loss function} that quantifies the error (classification, regression or clustering error, mismatch between the learned and true data distributions, etc.) in a candidate solution. We denote this loss function by $\ell: \cW \times \cZ \rightarrow \R_{\geq 0}$, where $\cW$ denotes the space of candidate machine learning models and $\cZ$ denotes the space of data samples. Given a model $\bw \in \cW$, %the non-negative number
$\ell(\bw,\bz)$ measures the modeling loss associated with $\bw$ in relation to the %particular
data sample $\bz \in \cZ$.\revise{\footnote{\revise{The model space $\cW$ in most formulations is taken to be one that is completely described by a set of parameters. For example, if $\cW$ denotes the space of all polynomials of degree $(d-1)$, a model $\bw \in \cW$ is uniquely characterized by the $d$ coefficients of the respective polynomial. In this paper, we slightly abuse the notation and use $\bw$ to denote both the model and, when the model is parameterizable, its respective parameters.}}}

Several examples of loss functions \revise{and their respective model space(s)} for \emph{supervised learning} (e.g., regression and classification) and \emph{unsupervised learning} (e.g., feature learning and clustering) problems are listed below.

\textbf{Loss functions \revise{and models} for supervised machine learning.} Data samples in supervised %machine
learning can be expressed as tuples $\bz := (\bx,y) \in \R^d \times \R =: \cZ$, with $\bx$ %simply
referred to as data and $y$ referred to as its \emph{label}. In particular, focusing on the linear classification problem with label $y \in \{-1,1\}$, \revise{augmented data $\wtbx := [\bx^\tT \ 1]^\tT \in \R^{d+1}$, and model space $\cW := \R^{d+1}$, the model $\bw$ describes an affine hyperplane in $\R^d$ and} two common choices of loss functions are:
%\begin{itemize}
    %\item
        ($i$) \emph{Hinge loss:} $\ell(\bw, \bz) := \max\left(0, 1-y \cdot \bw^\tT \revise{\wtbx}\right)$%.
            , and %
    %\item
        ($ii$) \emph{Logistic loss:} $\ell(\bw, \bz) := \ln\left(1+\exp{(-y\cdot\bw^\tT \revise{\wtbx})}\right)$.
%\end{itemize}

\textbf{Loss functions \revise{and models} for unsupervised machine learning.} Data samples in unsupervised machine learning do not have labels, with the \emph{unlabeled} data sample $\bz \in \R^d =: \cZ$ in this case. We now describe the loss functions \revise{and models/model parameters} associated with two popular unsupervised machine learning problems.
\begin{itemize}
    \item \emph{Principal component analysis (PCA):} The $k$-PCA problem is a feature learning problem in which the model space is $\cW := \left\{\bA\in\R^{d\times k}:\bA^{\tT}\bA=\bI\right\}$\revise{, the matrix-valued model $\bW \in \cW$ describes a $k$-dimensional subspace of $\R^d$,} and the loss function is $\ell(\bW,\bz) := \|\bz - \bW\bW^\tT\bz\|_2^2$.
    \item \emph{Center-based clustering:} The $k$-means clustering problem has the model space $\cW := \underbrace{\R^d \times \dots \times \R^d}_{k \text{ times}}$, \revise{the model $(\bw_1,\dots,\bw_k) \in \cW$ is a $k$-tuple of $d$-dimensional \emph{cluster centroids}, and} a common choice of the loss function is $\ell\left((\bw_1,\dots,\bw_k),\bz\right) := \min_{1\leq i \leq k}\|\bz - \bw_i\|_2^2$.
\end{itemize}

In this paper, our discussion of machine learning revolves around the statistical learning viewpoint~\cite{vapnik1999statistical}. To this end, we suppose each data sample $\bz$ is drawn from some unknown probability distribution $\cD$ that is supported on $\cZ$. The overarching goal in (statistical) machine learning then is to obtain a model $\bw \in \cW$ that has the smallest loss \emph{averaged over all} $\bz \in \cZ$. Specifically, let
\begin{align}\label{eqn:ExpectedLoss}
    f(\bw) := \E_{\bz \sim \cD}\big\{\ell(\bw, \bz)\big\}
\end{align}
denote the expected loss, also referred to as \emph{(statistical) risk}, associated with model $\bw$ for the entire data space $\cZ$. Then, the objective of machine learning from the statistical learning perspective is to approach the \emph{Bayes optimal} solution $\bw^*$ that minimizes the statistical risk, i.e.,
\begin{align}\label{eqn:StatisticalLearningCentralized}
    \bw^* \in \arg\min_{\bw\in\cW}f(\bw).
\end{align}
The risk incurred by $\bw^*$ (i.e., $f(\bw^*)$) is termed \emph{Bayes' risk}. The main challenge in machine learning is that the distribution $\cD$ is unknown and therefore \eqref{eqn:StatisticalLearningCentralized} cannot be directly solved. Instead, one uses training data samples $\{\bz_{t'}\}_{t' \in \Z_+}$ that are independently drawn from $\cD$ to obtain a model $\widehat{\bw}$ whose risk comes close to Bayes' risk as a function of the number of training samples. In particular, the performance of a machine learning algorithm is measured in terms of either the \emph{excess risk} of its solution, defined as $f(\widehat{\bw}) - f(\bw^*)$, or the \emph{parameter estimation error} calculated in terms of some distance between the solution $\widehat{\bw}$ and the set of minimizers $\arg\min_{\bw\in\cW}f(\bw)$.

%It can be seen from the preceding discussion that (data-driven) optimization is at the heart of machine learning.
%Because (statistical)
Since %
optimization is central to machine learning, the geometrical structure and properties of the loss function determine whether and how easily %an optimization
a %
method finds a solution $\widehat{\bw}$ that has (nearly) minimal excess risk / estimation error. We describe this structure and properties of $\ell(\bw, \bz)$ %in the paper
in terms of its \emph{gradients}, \emph{convexity}, and \emph{smoothness}.
\begin{definition}[Existence of Gradients]
    A loss function $\ell(\bw, \bz)$ is said to have its gradients exist everywhere if $\nabla_\bw \ell(\bw, \bz)$ exists for all $(\bw, \bz) \in \mathcal{W} \times \mathcal{Z}$.
\end{definition}
\begin{definition}[Convexity and Strong Convexity]
    A loss function $\ell(\bw, \bz)$ is {\em convex} in $\bw$ if for all $\bw_1, \bw_2 \in \mathcal{W}$, all $\bz \in \mathcal{Z}$, and all $\alpha \in [0,1]$, we have
    \begin{equation*}
        \ell(\alpha \bw_1 + (1-\alpha)\bw_2, \bz) \leq \alpha\ell(\bw_1, \bz) + (1-\alpha)\ell(\bw_2, \bz).
    \end{equation*}
    \revise{In words, the function $\ell(\cdot, \bz)$ for all $\bz \in \mathcal{Z}$} must lie below any chord for the loss function to be convex in $\bw$. Further, a loss function whose gradients exist everywhere is said to be {\em strongly convex} with modulus \revise{$m > 0$} if for all $\bw_1, \bw_2 \in \mathcal{W}$ and all $\bz \in \mathcal{Z}$, we have
    \begin{equation*}
        (\nabla_\bw \ell(\bw_1, \bz) - \nabla_\bw \ell(\bw_2, \bz))^\tT(\bw_1 - \bw_2) \geq m \| \bw_1 - \bw_2 \|_2^2.
    \end{equation*}
\end{definition}
\begin{definition}[Smoothness]
    We say that a loss function whose gradients exist everywhere is {\em smooth} if its gradients are Lipschitz continuous with some constant $L>0$, i.e., \revise{for all $\bw_1, \bw_2 \in \mathcal{W}$ and all $\bz \in \mathcal{Z}$, we have}
    \begin{align*}
        \|\nabla_{\bw}\ell(\bw_1, \bz) - \nabla_{\bw}\ell(\bw_2, \bz)\|_2\leq L\|\bw_1 -\bw_2\|_2.
    \end{align*}
\end{definition}
Going forward, we drop the subscript $\bw$ in $\nabla_\bw \ell(\bw,\bz)$ for notational compactness. Note that in the case of a smooth, convex (loss) function, gradient-based local search methods are guaranteed to converge to a global minimizer of the function. In addition, the global minimizer is unique for {\em strongly} convex functions and convergence of gradient-based methods to the minimizer of these functions is provably fast.

Our discussion %in the paper
revolves around both convex and (certain structured) nonconvex loss functions. Some of %this discussion
it %
in relation to convex %loss
losses %
%functions
requires an assumption on the variance of the gradients with respect to the data distribution.
\begin{definition}[Gradient Noise]\label{def:gradient.noise.variance}
    We say the gradients of $\ell(\cdot, \bz)$ have {\em bounded variance} if for every $\bw \in \mathcal{W}$, we have
    \begin{equation*}
        \E_{\bz \sim \mathcal{D}}\left\{\|\nabla \ell(\bw, \bz) - \nabla f(\bw)\|_2^2\right\} \leq \sigma^2.
    \end{equation*}
\end{definition}
In the following, we term $\sigma^2$ as the \emph{gradient noise variance}. In addition, we use the notion of \emph{single-sample covariance noise variance} in lieu of gradient noise variance in relation to our discussion of the nonconvex loss function associated with the $1$-PCA problem.
\begin{definition}[Sample-covariance Noise]\label{def:covariance.noise.variance}
    We say the single-sample covariance matrix $\bz\bz^\tT$ associated with data sample $\bz$ drawn from distribution $\cD$ has \emph{bounded variance} if we have
    \begin{equation*}
        \E_{\bz \sim \mathcal{D}}\left\{\left\|\bz\bz^\tT - \E_{\bz \sim \mathcal{D}}\left\{\bz\bz^\tT\right\}\right\|_F^2\right\} \leq \sigma^2.
    \end{equation*}
\end{definition}
The gradient (resp., sample covariance) noise variance controls the error associated with evaluating the gradient (resp., sample covariance) at individual sample points $\bz$ instead of evaluating it at the statistical mean of the unknown distribution $\mathcal{D}$. Smaller gradient (resp., sample covariance) noise variance results in faster convergence, and the main message of this paper is that leveraging distributed streams to average out gradient (resp., sample covariance) noise is often an optimum way to speed up convergence in compute- and/or communications-limited regimes.

The last definition we need is that of a \emph{bounded model space}, which plays a role in the analysis of optimization methods for convex loss functions.
\begin{definition}[Bounded model space]\label{def:bounded.model.space} Let $D_\mathcal{W} := \sqrt{\max_{\bu,\bv\in\cW}\|\bu - \bv\|_2^2/2}$ denote the \emph{expanse} of the model space~$\cW$. We say that an optimization problem has bounded model space if $D_\mathcal{W} < \infty$.
% We say that a optimization problem has {\em bounded model space}, with {\em diameter}
% \begin{equation*}
%     D_\mathcal{W} := \sqrt{\max_{\bu,\bv\in\cW}\|\bu - \bv\|_2^2/2}
% \end{equation*}
% if the diameter $D_\mathcal{W}$ is finite. We will suppose this condition and need the diameter for the analysis of machine learning problems with convex losses.
\end{definition}

%Because the focus of this paper is machine learning
Since our focus is training %
from fast streaming data that necessitates distributed processing%(cf.~Section~\ref{sec:intro}),
, %
we next formalize the distributed processing / communications framework underlying the %distributed machine learning
algorithms being discussed in the paper.

\subsection{Distributed Training of Machine Learning Models from Streaming Data}\label{subsec:DistributedLearning}
In addition to optimality, in the face of large volumes and high dimensionality of data in modern %machine learning
applications, the solution needs to be efficient in terms of resource utilization as well (e.g., computational, communication, storage, energy, etc.). In Section~\ref{sec:intro}, we discussed three mainstream distributed frameworks for resource-efficient machine learning, where each of the frameworks is primarily designed to adhere to specific practical constraints posed due to characteristics of the training data. One such characteristic is the physical locality of data, which results in following two common scenarios involving streaming data: ($i$) for the \emph{master--worker} learning framework, the data stream arrives at a single master node and, in order to ease the computational load and accelerate training time, the data stream is then divided among a total of $N$ worker nodes (Fig.~\ref{fig:intro.quad.chart}(b) and Fig.~\ref{fig:DistributedNetwork}(a)), or ($ii$) for the \emph{federated learning} and \emph{edge computing} frameworks, there is a collection of $N$ geographically distributed nodes---each of which receives its own independent stream of data---and the goal is to learn a machine learning model using information from all these nodes (Fig.~\ref{fig:intro.quad.chart}(c), Fig.~\ref{fig:intro.quad.chart}(d), and Fig.~\ref{fig:DistributedNetwork}(b)). Despite the apparent physical differences between these two scenarios, we can study them under a unified abstraction that assumes the data are arriving at a \emph{hypothetical} ``data splitter'' that then evenly distributes the data across an interconnected network of $N$ nodes for distributed processing (Fig.~\ref{fig:DistributedNetwork}(c)).

\begin{figure*}[t]
    \centering
    \includegraphics[width=0.75\textwidth]{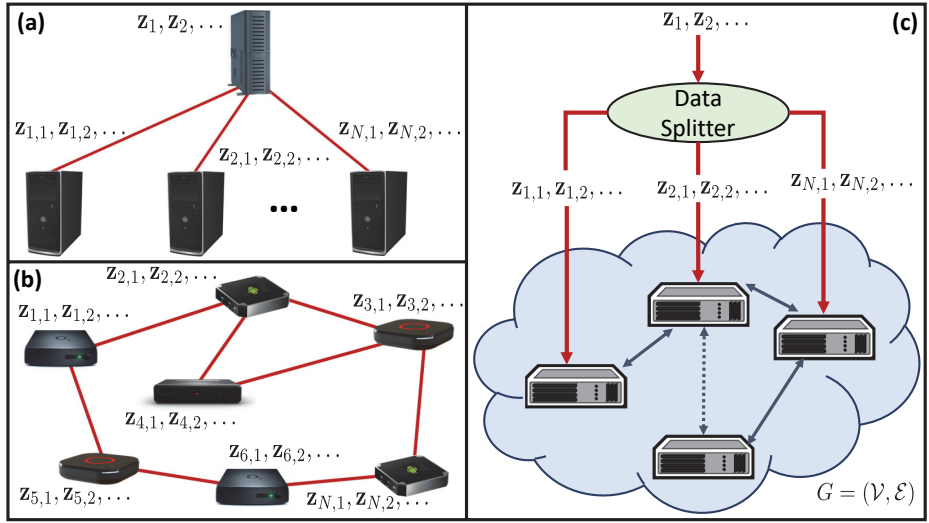}
    \caption{Distributed training of machine learning models from streaming data can arise in several contexts, including (a) the master--worker computing framework and (b) the edge computing and federated learning frameworks. In this paper, we study a unified abstraction (c) of such frameworks, in which a data stream is split into $N$ parallel streams, one for each compute node in a network $G = (\cV, \cE)$ of $N$ nodes.}
    \label{fig:DistributedNetwork}
\end{figure*}

% \begin{figure}[t]
% 	\centering
% 	\subfigure[]%Using a splitter to split data among different nodes in network.
%         {\includegraphics[width=0.4\columnwidth]{Figs/DistributedNetwork_2a}}
% 	\qquad
% 	\subfigure[]%Data arriving at different nodes in a network.]
%         {\includegraphics[width=0.4\columnwidth]{Figs/DistributedNetwork_1a}}
%     \caption{The distributed statistical problem, which involves distributed processing of data over a network of $N$ processors, can
%         arise in two contexts. (a) A data splitter can split a data stream into $N$ parallel streams, one for each processor in the network. In relation to the original data stream, this effectively reduces the data arrival rate for each parallel stream by a factor of $N$. (b) Data can be inherently distributed, as in the Internet-of-Things systems, and can arrive at $N$ different processing nodes as $N$ separate data streams.}
% 	\label{fig:DistributedNetwork}
% \end{figure}

Mathematically, let us discretize the data arrival time as $t'=1,2,\dots$, and let $\bz_{t'}$ be a stream of independent and identically distributed (i.i.d.) data samples arriving at the splitter at a fixed rate of $R_s$ samples per second. The splitter then evenly distributes the data stream across %an interconnected
a %
network of $N$ nodes, which we represent by an undirected connected graph $G := (\cV, \cE)$; here, $\cV := \nN{N}$ denotes the set of all nodes in the network and $\cE \subseteq \cV \times \cV$ denotes the set of edges corresponding to the communication links between these nodes, i.e., $(n,k) \in \cE$ means there is a communication link between nodes $n$ and $k$. We also define $t\in\Z_{+}$ to be the index that denotes the total number of data-splitting operations that have been performed within the system. Without loss of generality, we take each data-splitting round to be the time in which nodes carry out a single iteration of a distributed %optimization
algorithm; i.e., after $t$ data-splitting rounds, the nodes have carried out $t$ iterations of the distributed algorithm under study.

We next set the notation for the distributed data streams within our data-splitting abstraction to facilitate the prevalent practice of training %machine learning models
using mini-batches of data samples. To this end,
%. In order to facilitate analysis of the prevalent practice of mini-batch training of machine learning models within our data-splitting abstraction,
we assume without loss of generality that a total of $B \in \{N, 2N, 3N, \dots\}$ samples arrive at the network during each data-splitting round. That is, a \emph{system-wide} mini-batch of size $B$ is processed by the network during each algorithmic iteration (see, e.g., Fig.~\ref{fig:TimelinesNW}). Hence, each splitting operation results in a mini-batch of size $\tfrac{B}{N} \in \Z_+$ arriving at each node. The data splitting across $N$ nodes in the system therefore gives rise to $N$ i.i.d.~streams of \emph{mini-batched} data, where we denote the $\tfrac{B}{N}$ i.i.d.~data samples within the $t$-th mini-batch at node $n$ as $\left\{\bz_{n,b,t} \stackrel{\text{i.i.d.}}{\sim} \cD\right\}_{b=1,t \in \Z_+}^{B/N}$, with the mapping of these samples to the ones in the original data stream $\bz_{t'}$ given in terms of the relationship $t' = b+(n-1)B/N+(t-1)B$.

\revise{Given this distributed, streaming data model, our goal %in the paper
is study of machine learning algorithms that can efficiently process and incorporate the $B$ newest-arriving network-wide samples into a running approximation of the Bayes optimal solution (cf.~\eqref{eqn:StatisticalLearningCentralized}) \emph{before} the arrival of the next mini-batch of data. In order to highlight the challenges involved in the designs of such algorithms, we can divide the task of processing of a mini-batch of $B$ samples within the network into two phases (cf.~Fig.~\ref{fig:TimelinesNW}): ($i$) the \emph{computation phase}, in which each node performs computations over its \emph{local} mini-batch of $B/N$ data samples, and ($ii$) the subsequent \emph{communications phase}, in which nodes share the outcomes of their local computations with each other for eventual incorporation into the (network-wide, decentralized) machine learning models $\{\bw_{n,t}\}_{n \in \cV}$.}

\begin{figure}[t]
    \centering
    \includegraphics[width=0.55\textwidth]{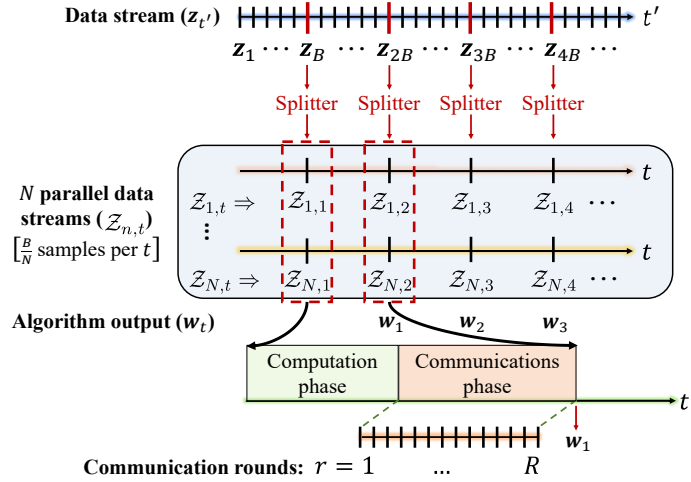}
    \caption{Data arrival, splitting, processing, and communications timelines within the distributed mini-batch framework of the paper. A stream of data, $\bz_{t'}$, arriving at a splitter at the rate of $R_s$ samples per second, is evenly split every $B$ samples across $N$ nodes in the network. This results in \emph{mini-batched} data streams $\cZ_{n,t} := \left\{\bz_{n,b,t}\right\}_{b=1,t \in \Z_+}^{B/N}$ within the network, with $\cZ_{n,t}$ denoting the $t$-th mini-batch of $B/N$ samples at node $n$. The system processes these distributed mini-batches by engaging in local computations followed by $R$ rounds of inter-node communications and produces an output $\bw_t$ before the arrival of the next set of mini-batches.}
    \label{fig:TimelinesNW}
\end{figure}

\revise{Consider now the \emph{compute-limited} regime within our framework, in which the distributed system comprising $N$ compute nodes is incapable of finishing computations on $B$ samples between two consecutive data-splitting instances because of the fast data streaming rate. (Indeed, the time between two data-splitting instances decreases as $R_s$ increases.) One could push the system out of this compute-limited regime by adding more compute nodes to the system. Keeping the system-wide mini-batch size $B$ fixed (and large), this will result in smaller local mini-batch size $B/N$. Alternatively, keeping the local mini-batch size fixed, this will result in larger time between two data-splitting instances. And in either case, the system no longer remains compute limited. However, as one adds more and more compute nodes into the distributed system, it could be pushed into the \emph{communications-limited} regime, in which the size/topology of the network prevents the nodes from completing full exchange of their local computations between two consecutive data-splitting instances. This communications-limited regime---which becomes especially pronounced in systems with slower communications links---can only be mitigated through larger data-splitting intervals, which in turn necessitates a larger $B$ for any fixed data streaming rate $R_s$. But this can again push the system into the compute-limited regime. Therefore, any machine learning algorithm intending to process fast streaming data in an optimal fashion must strike a balance between the compute- and the communications-limited regimes through judicious choices of system parameters such as $B$ and $N$. We now formalize some of this discussion in the following, which should lead to a better understanding of the interplay between the data streaming rate, the computational capabilities of compute nodes, the communications capabilities of the network, the system-wide mini-batch size $B$, and the number of compute nodes $N$ in distributed systems.}

\subsection{Interplay Between System Parameters in Distributed, Streaming Machine Learning}\label{subsec:HighrateStreaming}
We have already defined $R_s$ as the number of data samples $\bz_{t'}$ arriving per second at the splitter. We also assume the $N$ compute nodes in the system to be homogenous in nature and use $R_p$ to denote the processing/compute rate of each of these nodes, defined as the number of data samples per second that can be locally processed per node during the computation phase. Distributed algorithms also involve the use of message passing routines for inter-node communications. We use $R_c$ to denote the rate of messages shared among nodes using such routines, defined as the number of messages (synchronously) communicated between nodes per second during the communications phase. This parameter $R_c$ also subsumes within itself any overhead associated with implementation of the message passing routine such as time spent on additional computations or communications necessitated by the implementation.

Distributed machine learning algorithms typically involve multiple message passing rounds within the communications phase (cf.~Fig.~\ref{fig:TimelinesNW}), which we denote by $R \in \Z_+$. This parameter $R$, which we assume remains fixed for the duration of the training, can be constrained in terms of the system parameters $B$, $N$, $R_s$, $R_p$, and $R_c$ as follows:
\begin{align}\label{eqn:Rounds}
    0 < R \leq \Bigg\lfloor BR_c \Bigg(\frac{1}{R_s}-\frac{1}{NR_p}\Bigg)\Bigg\rfloor.
\end{align}
Our focus in this paper is on algorithms that make use of either ``exact'' or ``inexact'' \emph{distributed averaging} procedures within the communications phase for information sharing. Specifically, let $\{\bv_n\in\R^d\}_{n \in \cV}$ be a set of vectors that is distributed across the $N$ nodes in the network at the start of any communications phase and define $\widehat{\bv}_n$ to be an estimate of the average $\bar{\bv} := \tfrac{1}{N}\sum_{n\in\cV} \bv_n$ of these vectors at node $n$. We then have the following communications-related characterizations of the algorithms being studied in the paper.
\begin{enumerate}
    \item \textbf{Exact averaging algorithms.} After $R$ message passing rounds within the communications phase, these algorithms can exactly estimate the average at each node, i.e., $\forall n \in \cV, \|\widehat{\bv}_n-\bar{\bv}\|_2=0$.
    \item \textbf{Inexact averaging algorithms.} After $R$ message passing rounds within the communications phase, these algorithms can only guarantee $\epsilon$-accurate estimates, i.e., $\forall n \in \cV, \|\widehat{\bv}_n-\bar{\bv}\|_2 \leq \epsilon$ for some parameter $\epsilon > 0$ that typically increases as $R$ decreases and/or $N$ increases.
\end{enumerate}
Exact averaging algorithms often find applications in settings like high-performance computing clusters and enterprise cloud computing systems, where communications is typically fast and reliable. In contrast, inexact averaging algorithms tend to be more prevalent in settings like edge computing systems, multiagent systems, and IoT systems, where the network connectivity can be sparser and the communications tend to be slower and unreliable.

\revise{We have now described all the system parameters needed to formalize the notion of \emph{effective (mini-batch) processing rate}, $R_e$, of the distributed system, which is defined as the number of mini-batches comprising $B$ samples that can be processed by the system per second. (In the non-distributed setting, corresponding to $N=1$, it is straightforward to see that $R_e = R_p/B$.) Under the assumption of a synchronous system in which computation and communications phases are carried out one after the other, the parameter $R_e$ can be defined as follows:
\begin{align}\label{eqn:Re}
R_e := \frac{1}{\text{time spent in computation} + \text{time spent in communication} } = \frac{1}{\tfrac{B}{N R_p}+\tfrac{R}{R_c}}=\Bigg(\frac{B}{N R_p} + \frac{R}{R_c}\Bigg)^{-1}.
\end{align}
This expression formally highlights the tradeoff between the compute-limited and the communications-limited regimes. In the case of fixed $R_p$ and $R_c$, for instance, increasing the effective processing rate requires an increase in $N$. Doing so, however, necessitates an increase in $R$ that---beyond a certain point---can only be accomplished through an increase in $B$ (cf.~\eqref{eqn:Rounds}), which in turn also increases the first term in \eqref{eqn:Re}.}

\revise{The overarching theme of this paper is discussion of algorithmic strategies that can be used to tackle the challenge of \emph{near-optimal} training of machine learning models from fast streaming data, where ``fast'' is defined in the sense that $\frac{R_s}{R_e} \gg B$. This discussion involves allowable selections of system parameters such as the network-wide mini-batch size $B$, the number of nodes $N$, and the number of communications rounds $R$ that facilitate taming of the fast incoming data stream without compromising the fidelity of the final solution. In particular, the recommended strategies end up pushing the ratio $\frac{R_s}{R_e}$ to satisfy either $\frac{R_s}{R_e} \leq B$ or $\frac{R_s}{R_e} = \left(B + \mu\right)$ for an appropriate parameter $\mu \in \Z_+$, where the latter scenario involves discarding of $\mu$ samples per splitting instance $t$ at the data splitter.}

\revise{In order to prime the reader for subsequent discussion, we also provide a simple example in Fig.~\ref{fig:DifferentRates} that illustrates the impact of the choice of (network-wide) mini-batch size $B$ on system performance. We suppose a network of $N=10$ compute nodes, and focus on the exact averaging paradigm described above. We assume a data streaming rate of $R_s=10^{6}$ samples per second, whereas the data processing rate per node is taken to be $R_p=1.25\times 10^{5}$ samples per second. We plot the ratio of the streaming rate and the effective (mini-batch) processing rate $R_e$ as defined in \eqref{eqn:Re}, for communications rates $R_c = 10^3$ and $R_c = 10^4$, as a function of the mini-batch size $B$. As noted earlier, the number of samples effectively processed by the network keeps pace with the number of samples arriving at the system provided $R_s/R_e \leq B$, and we observe that for sufficiently large mini-batch size $B$, the ratio indeed drops below the $R_s/R_e = B$ line plotted in Fig.~\ref{fig:DifferentRates}.}

\revise{Next, we also overlay corresponding plots of the excess risk predicted for {\em Distributed Minibatch} SGD, presented in Section~\ref{subsec:ConvexMinibatch}, after $t'=10^6$ samples have arrived at the system. These plots show that increased mini-batch size helps the excess risk, but only to a point. Eventually, $B$ becomes so large that the reduction in the number of algorithmic iterations carried out by the network hurts the overall performance more than the increase in the effective processing rate helps it. This illustrates that the mini-batch size $B$ must be chosen judiciously, and in the following sections we will discuss theoretical results that shed light on this choice.}

\begin{figure}
\centering
%\subfigure[Impact of the increasing number of processors on the total processing rate of the network. We consider two network topologies in this figure, first one is a star topology which results in a constant communication rate of $R_c=0.1$ while the other is for a tree-structured network which results in a communication rate of $O(R_c/log(N))$.]{\includegraphics[width=0.3\columnwidth]{Figs/Rate_Vary_N.eps}\label{fig:NodesImpact}}
%	\qquad
 	\includegraphics[width=0.5\textwidth]{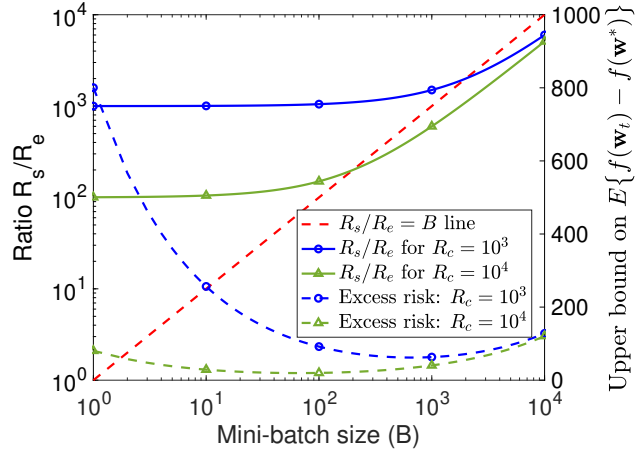}
%  	\subfigure[]{\includegraphics[width=0.4\columnwidth]{Figs/Rate_Vary_B.eps}}
%  	\qquad
%  	\subfigure[]{\includegraphics[width=0.4\columnwidth]{Figs/Consensus_Vary_B.eps}}
 \caption{An illustration of the impact of mini-batching on distributed, streaming processing under the exact averaging paradigm.
% , $N=10$ processing nodes in the network, and it is assumed that we will only observe $T=10^6$ in total. These examples show the impact of mini-batch size, $B$, on \emph{effective processing rate} $R_e$ and \emph{excess risk} $\E\{f(\bw_t) - f(\bw^*)\}$. For $\sigma=100$ and $L=1$, the order-wise excess risk of $O(L/t+\sigma/\sqrt{t})$ which is given in Theorem~\ref{thm:SGD_Mini}, is plotted here. (a) Example illustrating the relationship between high streaming rate of data $R_s$ and the effective processing rate $R_e$ of network. A network of $N=10$ nodes is considered and rates used are $R_s=10^6$ samples per second, $R_p=10^5$ samples per second. For two choices of communication rates $R_c=10^4$ and $R_c=10^5$ samples per second we vary mini-batch size $B$ from 1 to $10^4$. Important observation here is that for slower communication rate we need larger mini-batch size to achieve $R_e\geq R_s$. (b) This plot explains the relationship between $R_s$ and $R_e$ as we vary mini-batch size $B$ for algorithms requiring multiple rounds of communication, $R$, during the communication phase. For these example plots the communication rate is assumed to be $R_c=10^5$ samples per second. We can see that more rounds of communication (larger value of $R$) require larger mini-batch sizes to make $R_e\geq R_s$.
}
 	\label{fig:DifferentRates}
 \end{figure}

\section{\revise{An Overview of the Technical Landscape}}\label{sec:RelatedWork}
This paper ties together research in optimization and distributed processing within the context of machine learning. To elucidate the state of the art and set the stage for the results described in Sections~\ref{sec:DM_Krasulina} and~\ref{sec:Decentralized}, we present an overview of these areas and describe in detail key results that will be used later.

\subsection{Optimization for Machine Learning}\label{subsec:Optimization_ML}
As mentioned in Section~\ref{ssec:intro.framing}, the literature on optimization for machine learning \revise{can be roughly divided into two interrelated frameworks, namely, \emph{statistical risk minimization} (SRM) and \emph{empirical risk minimization} (ERM). Both these frameworks aim to find a solution to the statistical optimization problem~\eqref{eqn:StatisticalLearningCentralized} and, as such, fall under the broad category of \emph{stochastic optimization} (SO) within the optimization literature~\cite{Spall.BookCh12}. In particular, the SRM framework is often referred to as \emph{stochastic approximation} (SA) and the ERM framework is sometimes termed \emph{sample-average approximation} (SAA) in the literature~\cite{KimPasupathyEtAl.BookCh15}. In terms of specifics, the SA/SRM framework} considers directly the statistical learning problem~\eqref{eqn:StatisticalLearningCentralized}, and researchers have developed algorithms that minimize the risk $f(\bw)$ using ``noisy'' (stochastic) samples of its gradient $\nabla f(\bw)$. \revise{In contrast, the risk in the SAA/ERM framework} is approximated by the empirical distribution over a fixed training dataset $\cZ_T := \{\bz_1, \dots, \bz_T\}$ of $T$ data samples. This \emph{empirical} risk, defined as $\hat{f}(\bw) := \frac{1}{T}\sum_{t'=1}^T \ell(\bw, \bz_{t'})$, is then minimized directly within the ERM framework, usually via some form of gradient-based (first-order) deterministic optimization methods. In the following, we describe a few key results from these two frameworks that are the most relevant to our discussion in this paper.

\subsubsection{Stochastic \revise{Approximation} (SA)}\label{sect:so}
The general assumption within the SA framework is that one has access to a stream of noisy gradients $\{\bg_1, \bg_2, \dots\}$ in order to solve \eqref{eqn:StatisticalLearningCentralized}, where the noisy gradient $\bg_t$ at iteration $t$ is defined as
\begin{equation}
    \bg_t := \nabla f(\bw_t) + \bzeta_t,
\end{equation}
with $\bzeta_t$ denoting i.i.d. noise with mean zero and finite variance, i.e., $\E\{\| \bzeta_t \|_2^2\} \leq \sigma^2$. In the parlance of SA, we have access to a first-order ``oracle'' that can be queried for a noisy gradient evaluated at the query point $\bw_t$. In the parlance of machine learning, we have a stream of data samples $\{\bz_1, \bz_2, \dots \}$, each drawn i.i.d. according to the data distribution $\mathcal{D}$, and we solve \eqref{eqn:StatisticalLearningCentralized} using the gradients $\bg_t := \nabla \ell(\bw_t, \bz_t)$, which have gradient noise variance as defined in Definition~\ref{def:gradient.noise.variance}.\footnote{Note that the data arrival index $t'$ and the algorithmic iteration index $t$ are one and the same in a centralized setting; we are using $t$ here in lieu of $t'$ to facilitate comparisons with results in distributed settings.} It is straightforward to verify that these two formulations are equivalent: $\E \{\bg_t\} = \nabla f(\bw_t)$, so we can define $\bzeta_t := \bg_t - \nabla f(\bw_t)$ to be the zero-mean gradient noise in our problem setup.

The prototypical SA algorithm for loss functions whose gradients exist is \emph{stochastic gradient descent} (SGD)~\cite{RobbinsMonro.AMS51}, in which iterations/iterates take the form
\begin{equation}\label{eqn:SGD}
    \bw_{t+1} = \left[ \bw_{t} - \eta_t \bg_t \right]_\mathcal{W},
\end{equation}
where  $[\cdot]_\mathcal{W}$ denotes projection onto the constraint set $\cW$ \revise{and $\eta_t > 0$ denotes an appropriate \emph{stepsize} that is either fixed (\emph{constant} stepsize) or that decays to $0$ with increasing $t$ according to a prescribed strategy (\emph{decaying} stepsize)}.
\begin{remark}
The term `stochastic gradient descent' is overloaded in the literature. Many papers (e.g., \cite{bach2011nonasymptotic,Shamir.ConfICML16}) use the term in the SA sense described here, with a continuous stream of data in which no sample is used more than once. However, other papers (e.g., \cite{bottou2010large,BijralSarwateEtAl.ITAC17}) use the term within the ERM framework to describe algorithms that operate on a fixed dataset, from which mini-batches of data are sampled with replacement and noisy gradients are computed. To disambiguate, some authors (e.g., \cite{FrostigGeEtAl.ConfCOLT15}) use the term {\em single-pass} SGD to indicate the former usage.
\end{remark}

\textbf{Convex Problems.} A common elaboration on SGD for convex loss functions is {\em Polyak--Ruppert averaging} \cite{Polyak.AIT90,Ruppert.TechRep88,polyak1992acceleration,bach2011nonasymptotic}, in which a running average of iterates $\bw_t$ is maintained as $\bw_t^{\textsf{av}} := \frac{1}{t}\sum_{\tau=0}^{t-1} \bw_\tau.$ The convergence rates of SGD for convex SA have been studied under a variety of settings, both with and without iterate averaging. The following result with a modified form of Polyak--Ruppert averaging comes from \cite{lan2012optimal}, in which iterate averaging takes the form
\begin{align}
    \label{eqn:Lan.iterate.ave}
    \bw_t^{\textsf{av}} := \left(\sum_{\tau = 0}^{t-1}\eta_\tau\right)^{-1} \sum_{\tau=0}^{t-1}\eta_\tau\bw_{\tau+1}.
\end{align}
\begin{theorem}\label{thm:SGD_Mini}
    For convex and smooth loss functions $\ell(\bw, \bz)$ \revise{with (gradient) Lipschitz constant $L$}, gradient noise variance $\sigma^2$, and bounded model space with expanse $D_\mathcal{W}$, there exist stepsizes $\eta_t$ such that the approximation error of SGD with iterate averaging in \eqref{eqn:Lan.iterate.ave} satisfies:
    \begin{equation}
        \label{eqn:thm.SGD.rate}
        \E\{f(\bw_t^{\textsf{\emph{av}}})\} - f(\bw^*) = O(1) \left[ \frac{L}{t} + \frac{\sigma}{\sqrt{t}} \right].
    \end{equation}
\end{theorem}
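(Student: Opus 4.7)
The plan is to follow the classical energy-function argument for SGD, specialized to smooth convex losses, and then balance the two error terms through a careful stepsize choice. First, using the non-expansiveness of the projection $[\cdot]_\cW$ and the update rule \eqref{eqn:SGD}, I would obtain the one-step expansion
\begin{equation*}
\|\bw_{t+1}-\bw^*\|_2^2 \leq \|\bw_t-\bw^*\|_2^2 - 2\eta_t \bg_t^\tT (\bw_t-\bw^*) + \eta_t^2 \|\bg_t\|_2^2.
\end{equation*}
Taking conditional expectation with respect to $\bz_t$ given $\bw_t$, using the unbiasedness $\E\{\bg_t \mid \bw_t\} = \nabla f(\bw_t)$, and applying the convexity inequality $\nabla f(\bw_t)^\tT(\bw_t-\bw^*) \geq f(\bw_t)-f(\bw^*)$ would yield a per-iteration bound relating the suboptimality gap $f(\bw_t)-f(\bw^*)$ to the drop in the ``energy'' $\|\bw_t-\bw^*\|_2^2$ and a residual $\eta_t^2 \E\|\bg_t\|_2^2$.

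Next, to obtain the $L/t$ term in the claimed rate, smoothness must be exploited to control $\E\|\bg_t\|_2^2$. Decomposing $\bg_t = \nabla f(\bw_t) + \bzeta_t$, invoking Definition~\ref{def:gradient.noise.variance} on the noise variance, and then applying the standard co-coercivity consequence of smoothness for convex functions, namely $\|\nabla f(\bw_t)\|_2^2 \leq 2L(f(\bw_t)-f(\bw^*))$, yields $\E\|\bg_t\|_2^2 \leq 2L(f(\bw_t)-f(\bw^*)) + \sigma^2$. Substituting back and imposing the mild constraint $\eta_t \leq 1/(2L)$ lets the $L$-dependent contribution be absorbed into the left-hand side, leaving the clean telescoping recursion
\begin{equation*}
\eta_t \bigl(\E f(\bw_t)-f(\bw^*)\bigr) \leq \E\|\bw_t-\bw^*\|_2^2 - \E\|\bw_{t+1}-\bw^*\|_2^2 + \eta_t^2 \sigma^2.
\end{equation*}

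Summing this telescoping inequality over $\tau = 0,\dots,t-1$, bounding $\|\bw_0-\bw^*\|_2^2 \leq 2D_\cW^2$ via Definition~\ref{def:bounded.model.space}, dividing through by $\sum_\tau \eta_\tau$, and applying Jensen's inequality to the weighted average \eqref{eqn:Lan.iterate.ave} would produce
\begin{equation*}
\E\{f(\bw_t^{\textsf{av}})\} - f(\bw^*) \leq \frac{2 D_\cW^2 + \sigma^2 \sum_{\tau=0}^{t-1}\eta_\tau^2}{\sum_{\tau=0}^{t-1}\eta_\tau}.
\end{equation*}
The constant stepsize $\eta_\tau \equiv \min\bigl\{1/(2L),\, D_\cW/(\sigma\sqrt{t})\bigr\}$ then delivers the claimed rate in \eqref{eqn:thm.SGD.rate}: when the second minimand is active, the bound collapses to $O(D_\cW \sigma/\sqrt{t})$; when the first is active, the condition $\sigma\sqrt{t} < 2 L D_\cW$ forces the residual noise contribution to be $O(L D_\cW^2/t)$. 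The main delicate step is the appeal to co-coercivity; without it, the $\|\nabla f(\bw_t)\|_2^2$ remnant cannot be absorbed and the argument recovers only the pessimistic uniform $O(1/\sqrt{t})$ rate rather than the sharper hybrid rate advertised by the theorem. The remaining arithmetic---tracking the $D_\cW$ and $\sigma$ dependencies hidden inside the $O(1)$ prefactor of \eqref{eqn:thm.SGD.rate}---is routine.
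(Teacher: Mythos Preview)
Your proposal is correct, and in fact the paper does not supply its own proof of this statement: it is quoted directly from \cite{lan2012optimal}, with Remark~\ref{rem:stepsize} recording the same constant stepsize $\eta=\min\{1/(2L),\,D_\cW/\sqrt{2T}\}$ that your argument arrives at. Your energy-function derivation, with the co-coercivity bound $\|\nabla f(\bw_t)\|_2^2\le 2L\bigl(f(\bw_t)-f(\bw^*)\bigr)$ used to absorb the gradient magnitude into the suboptimality gap, is exactly the Euclidean specialization of Lan's mirror-descent analysis, so there is no methodological divergence to flag.
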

\begin{remark}\label{rem:stepsize}
\revise{In \cite{lan2012optimal}, an optimal {\em constant} stepsize $\eta_t = \eta$ is given in the case where the optimization ends at a finite time horizon $t := T$ known in advance. In this case, the prescribed stepsize is $\eta := \min\left\{1/(2L), \sqrt{D_\mathcal{W}^2/(2T)} \right\},$ and this achieves the bound %stated in Theorem~\ref{thm:SGD_Mini}
in \eqref{eqn:thm.SGD.rate}%
. When the time horizon is unknown, a varying stepsize policy $\eta_t = O(1/\sqrt{t})$ achieves expected excess risk $O(\sigma/\sqrt{t})$, which is optimum for $t$ much larger than $L$. For simplicity, we are working with the optimum stepsize proposed in \cite{lan2012optimal} to retain the analysis for $t$ not necessarily much larger than $L$.}
\end{remark}

\begin{remark}
\revise{It is desirable in some applications to state the SGD results in terms of convergence of the averaged iterate $\bw_t^{\textsf{av}}$ to $\bw^*$. In the case of convex, smooth, and twice continuously differentiable loss functions, \cite{polyak1992acceleration} provides such results for Polyak--Ruppert averaging in the almost sure sense and also proves asymptotic normality of $\bw_t^{\textsf{av}}$, i.e., $\sqrt{t}(\bw_t^{\textsf{av}} - \bw^*)$ converges to a zero-mean Gaussian vector. In the case of strongly convex and smooth loss functions, \cite{bach2011nonasymptotic} derives non-asymptotic convergence results for the Polyak--Ruppert averaged iterate $\bw_t^{\textsf{av}}$ in the mean-square sense. However, since machine learning is often concerned with minimizing the excess risk $\E\{ f(\bw_t^{\textsf{av}}) - f(\bw^*) \}$, we do not indulge further in discussion of convergence of the SGD iterates to the Bayes optimal solution $\bw^*$.}
\end{remark}

A natural question is whether the convergence rate of Theorem~\ref{thm:SGD_Mini} can be improved upon by another algorithm. It has been shown that incorporating {\em Nesterov's acceleration}~\cite{nesterov1983method} into SGD can indeed improve this rate somewhat. Roughly speaking, Nesterov's acceleration introduces a ``momentum'' term into the SGD iterations, allowing the directions of previous gradients to impact the direction taken during the current step and thereby speeding up convergence. The following formulation is an SGD-based simplification of the {\em accelerated stochastic mirror descent} algorithm of \cite{lan2012optimal}. Define the {\em accelerated SGD} updates as follows:
\begin{align}\label{eqn:ASGD}
    \bu_t &= \beta_t^{-1}\bv_t + (1-\beta_t^{-1})\bw_t,\\
    \bv_{t+1} &= \left[ \bu_{t} - \eta_t \bg_t \right]_\cW, \ \text{and}\\
    \bw_{t+1} &= \beta_t^{-1}\bv_{t+1} + (1 - \beta_t^{-1})\bw_{t},
    %\bu_t &= \beta_t^{-1}\bw_t + (1-\beta_t^{-1})\bv_t,\\
    %\bw_{t+1} &= \left[ \bu_{t} - \eta_t \bg_t \right]_\cW, \ \text{and}\\
    %\bv_{t+1} &= \beta_t^{-1}\bw_{t+1} + (1 - \beta_t^{-1})\bv_{t},
\end{align}
where $\bg_t := \nabla f(\bu_t) + \bzeta_t$, and $\beta_t > 0$ and $\eta_t > 0$ are stepsizes. We then have the following result from \cite{lan2012optimal}.
\begin{theorem}\label{thm:ASGD}
    For convex and smooth loss functions $\ell(\bw, \bz)$ \revise{with (gradient) Lipschitz constant $L$}, gradient noise variance $\sigma^2$, and bounded model space with expanse $D_\mathcal{W}$, there are stepsizes $\eta_t$ and $\beta_t$ such that the expected risk of accelerated SGD is bounded by
    \begin{equation}
        \label{eqn:thm.ASGD.rate}
        \E\{f(\bw_t)\} - f(\bw^*) = O(1) \left[ \frac{L}{t^2} + \frac{\sigma}{\sqrt{t}} \right].
    \end{equation}
\end{theorem}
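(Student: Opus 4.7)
The plan is to follow the standard Nesterov-style acceleration analysis, augmented by a stochastic variance-control argument, in the style of Lan's accelerated stochastic approximation paper. I would introduce a potential (Lyapunov) function of the form $\Phi_t = A_t \left( f(\bw_t) - f(\bw^*) \right) + \tfrac{1}{2}\|\bv_t - \bw^*\|_2^2$, where $A_t$ is a growing sequence of nonnegative weights matched to the choices of $\beta_t$ and $\eta_t$, and attempt to show that $\E\{\Phi_{t+1}\} \leq \E\{\Phi_t\} + C_t \sigma^2$ for some summable-after-rescaling coefficients $C_t$. Telescoping then yields a bound whose deterministic part decays like $1/A_t$ and whose stochastic part accumulates like $\sum_{\tau<t} C_\tau \sigma^2 / A_t$; the stepsize schedule is chosen so that $A_t = \Theta(t^2)$ and $\sum_{\tau<t} C_\tau = \Theta(t^{3/2})$, yielding the claimed two-term bound.

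The first step is to exploit smoothness of $f$ at the extrapolated point $\bu_t$: the descent lemma gives
\begin{equation*}
    f(\bw_{t+1}) \leq f(\bu_t) + \nabla f(\bu_t)^\tT (\bw_{t+1}-\bu_t) + \tfrac{L}{2}\|\bw_{t+1}-\bu_t\|_2^2.
\end{equation*}
Using the update relation $\bw_{t+1}-\bu_t = \beta_t^{-1}(\bv_{t+1}-\bv_t)$ converts this into a bound in terms of the ``primary'' iterate sequence $\{\bv_t\}$. Next I would use the projection/optimality characterization of the $\bv_{t+1}$ update, namely $\langle \bv_{t+1}-\bu_t+\eta_t\bg_t,\,\bw^*-\bv_{t+1}\rangle \geq 0$, to introduce $\tfrac{1}{2}\|\bv_{t+1}-\bw^*\|_2^2 - \tfrac{1}{2}\|\bv_t-\bw^*\|_2^2$ via the three-point identity and to produce an inner product of $\bg_t$ with $(\bv_{t+1}-\bw^*)$. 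Splitting $\bg_t=\nabla f(\bu_t)+\bzeta_t$ and combining the convexity inequality $f(\bw^*)\geq f(\bu_t)+\nabla f(\bu_t)^\tT(\bw^*-\bu_t)$ with the definition $\bu_t=\beta_t^{-1}\bv_t+(1-\beta_t^{-1})\bw_t$ creates a convex combination of $f(\bw^*)$ and $f(\bw_t)$ on the right, which is what enables the ``acceleration.''

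The third step is to take conditional expectation given the history $\cF_{t}$: since $\E\{\bzeta_t \mid \cF_t\}=\bzero$ and $\E\{\|\bzeta_t\|_2^2\mid\cF_t\}\leq\sigma^2$ (Definition~\ref{def:gradient.noise.variance}), the cross term $\langle\bzeta_t,\bw^*-\bv_t\rangle$ vanishes in expectation, while the residual $\langle\bzeta_t,\bv_{t+1}-\bv_t\rangle$ is handled by a Young-type inequality $\langle\bzeta_t,\bv_{t+1}-\bv_t\rangle\leq \tfrac{1}{2\gamma_t}\|\bzeta_t\|_2^2 + \tfrac{\gamma_t}{2}\|\bv_{t+1}-\bv_t\|_2^2$ for a tunable $\gamma_t$. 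Choosing $\gamma_t$ so that the quadratic residual is absorbed by the $-\tfrac{1}{2}\|\bv_{t+1}-\bv_t\|_2^2$ term produced earlier leaves a clean recursion $\E\{A_{t+1}(f(\bw_{t+1})-f(\bw^*))+\tfrac{1}{2}\|\bv_{t+1}-\bw^*\|_2^2\} \leq \E\{A_t(f(\bw_t)-f(\bw^*))+\tfrac{1}{2}\|\bv_t-\bw^*\|_2^2\} + c\,\eta_t^2\sigma^2$ provided $A_{t+1}-A_t \leq \eta_t \beta_t$ and $L\eta_t\beta_t^{-1}\leq 1-O(\eta_t\gamma_t)$.

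Finally, I would pick $\beta_t=(t+1)/2$ and $\eta_t = \alpha\, t / L$ with $\alpha$ small enough that the smoothness condition holds; this gives $A_t = \Theta(t^2/L)$ and the noise term accumulates as $\sum_{\tau=1}^{t-1}\eta_\tau^2\sigma^2 = \Theta(t^3\sigma^2/L^2)$. Dividing by $A_t$ yields a noise contribution of $\Theta(t\sigma^2/L)$, which is worse than claimed; the remedy, exactly as in Lan's paper, is to use a hybrid stepsize $\eta_t = \min\{\alpha t/L,\,D_\cW \sqrt{t}/(\sigma t^{3/2})\}$ (for a finite horizon $T$, a \emph{constant} $\eta_t$ tuned to $T$ suffices), so that the noise contribution becomes $\Theta(D_\cW \sigma/\sqrt{t})$ while the deterministic contribution remains $\Theta(L D_\cW^2/t^2)$. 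The main obstacle is precisely this stepsize balancing: the deterministic acceleration wants large stepsizes ($\eta_t\propto t$), whereas the stochastic term demands small ones ($\eta_t=O(1/\sqrt{t})$), and one must verify that the hybrid choice simultaneously satisfies all recursion constraints while still producing a telescoping $\Phi_t$. Using boundedness of $\cW$ via $D_\cW$ to control $\|\bv_t-\bw^*\|_2\leq \sqrt{2}D_\cW$ is what finally closes the argument and produces the stated $O(L/t^2+\sigma/\sqrt{t})$ bound.
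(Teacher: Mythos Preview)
The paper does not give its own proof of this theorem: it is quoted directly from \cite{lan2012optimal}, with only a remark specifying the stepsizes $\beta_t = t/2$ and $\eta_t = (t/2)\min\{1/(2L),\ \sqrt{6}D_\mathcal{W}/(\sigma(T+1)^{3/2})\}$ for a known horizon $T$. Your proposal is a faithful reconstruction of Lan's argument---the potential $\Phi_t$ you write down, the three-point identity from the projection step, the split $\bg_t=\nabla f(\bu_t)+\bzeta_t$ with the martingale term vanishing in conditional expectation, the Young-type control of $\langle\bzeta_t,\bv_{t+1}-\bv_t\rangle$, and especially the hybrid stepsize that trades off the $t^2$ deterministic decay against the accumulated variance---are exactly the mechanics of the cited paper, and the stepsizes you arrive at match those quoted in the remark following the theorem.
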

\begin{remark}
\revise{Similar to standard SGD, \cite{lan2012optimal} prescribes stepsizes in the case of known and finite time horizon $T$, with $\beta_t = t/2$ and $\eta = t/2\min \{1/(2L), \sqrt{6}/D_\mathcal{W}/(\sigma(T+1)^{3/2}) \}$. Again a varying stepsize policy achieves excess risk $O(\sigma/\sqrt{T})$ for large $t$, and we suppose the optimum stepsize given in \cite{lan2012optimal} in order to facilitate analysis for $t$ not necessarily much larger than $L$.}
\end{remark}

Both Theorem~\ref{thm:SGD_Mini} and Theorem~\ref{thm:ASGD} explicitly bring out the dependence of the convergence rates on the gradient noise variance $\sigma^2$. In doing so, they hint at the potential performance advantages of (centralized or distributed) mini-batching of data. As the number of samples/iterations $t$ goes to infinity and all else is held constant, the $O(\sigma/\sqrt{t})$ terms dominate the convergence rates in \eqref{eqn:thm.SGD.rate} and \eqref{eqn:thm.ASGD.rate}. Mini-batching can reduce the ``equivalent'' noise variance $\sigma^2$ of the mini-batched data samples and speed up convergence, but only to a point. If mini-batching forces the $O(\sigma/\sqrt{t})$ term smaller than the respective first terms in \eqref{eqn:thm.SGD.rate} and \eqref{eqn:thm.ASGD.rate}, then gradient noise is no longer the bottleneck to performance and mini-batching cannot improve convergence speed any further. Indeed, in the sequel we will choose the mini-batch size to carefully balance the two terms in \eqref{eqn:thm.SGD.rate} and \eqref{eqn:thm.ASGD.rate}, and we will further see that more aggressive mini-batching is advantageous when using accelerated methods.
%
% *****WUB: Replace by the previous paragraph for the camera-ready version*****
%Both Theorem~\ref{thm:SGD_Mini} and Theorem~\ref{thm:ASGD} explicitly bring out the dependence of the convergence rates on gradient noise variance $\sigma^2$. And in doing so, they hint at the potential advantages of (centralized or distributed) mini-batching of data for improved performance. In both instances, as the number of samples/iterations $t$ goes to infinity and all else is held constant, the $O(\sigma/\sqrt{t})$ terms dominate the convergence rates in \eqref{eqn:thm.SGD.rate} and \eqref{eqn:thm.ASGD.rate}. But if the $O(\sigma/\sqrt{t})$ terms can be made to decay faster than their respective first terms then both SGD and accelerated SGD result in improved asymptotic convergence rates in the limit of many data (or gradient) samples. %
%
% *****WUB: Replaced by the previous set of sentences*****
% Whenever $\frac{\sigma}{\sqrt{t}}$ decays faster than $L/t$, accelerated SGD gives an improved asymptotic convergence rate in the limit of many data (or gradient) samples. However, if all else is held constant, the $O(\sigma/\sqrt{t})$ term dominates the error as the number of samples/iterations $t$ goes to infinity.

We conclude our discussion of SA for convex loss functions by noting that the convergence rate of accelerated SGD is provably optimal for smooth, convex SA problems in the {\em minimax} sense: there is no single algorithm that can converge for all such SA problems at a rate faster than $O(L/t^2 + \sigma/\sqrt{t})$. (See \cite{lan2012optimal} for an argument for this.) However, generalized and sometimes improved rates are possible outside of the regime of this setting. In particular, when $\ell(\bw, \bz)$ is smooth and {\em strongly} convex, a convergence rate of $O(\sigma^2/t)$ is possible for $\sigma^2$ bounded away from zero, and it is the minimax rate \cite{kushner2003stochastic,bach2011nonasymptotic}. Results are also available when the loss function is non-smooth, when the solution is sparse or otherwise structured, and when the optimization space has a geometry that can be exploited to speed up convergence \cite{agarwal2009information,xiao2010dual,lan2012optimal}.

\textbf{Nonconvex Problems.} \revise{Nonconvex functions can have three types of critical points, defined as points $\bw$ for which $\nabla f(\bw) = \bzero$: \emph{saddle points}, \emph{local minima}, and \emph{global minima}. This makes optimization of nonconvex (loss) functions using only first-order (gradient) information challenging. While works such as~\cite{ghadimi2013stochastic,ghadimi2016accelerated,ge2015escaping,hazan2016graduated,hazan2017near,reddi2016stochastic,reddi2016bfast} provide convergence rates for nonconvex problems that are similar to their convex programming analogs, the convergence is only guaranteed to a critical point that is not necessarily a global optimum. Nonetheless, global optimization of nonconvex SA problems has been studied in the literature under a variety of assumptions on the geometry of objective functions. A major strand of work in this direction involves modifying the canonical SGD algorithm by injecting slowly decreasing Monte Carlo noise in its iterations. The resulting SA methods have been investigated in works such as~\cite{Kushner.SJAM87,GelfandMitter.SJCAO91,FangGongEtAl.SJCO97,Yin.SJO99,MaryakChin.ITAC08,RaginskyRakhlinEtAl.ConfCOLT17,ErdogduMackeyEtAl.ConfNIPS18} under the monikers of (continuous) \emph{simulated annealing} and \emph{stochastic gradient Langevin dynamics}. (Strictly speaking,~\cite{ErdogduMackeyEtAl.ConfNIPS18} does not fall under the SA framework being discussed in this section.) A recent work~\cite{ShiSuEtAl.arxiv20} also provides global convergence guarantees for SGD for the class of (nonconvex) Morse functions.}

Another major strand of work in global optimization of nonconvex functions involves explicit exploitation of the geometry of \emph{structured} nonconvex problems such as \emph{principal component analysis} (PCA), dictionary learning, phase retrieval, and low-rank matrix completion for global convergence guarantees. In this paper, we focus on one such structured nonconvex SA problem that corresponds to estimating the top eigenvector $\bw^* \in \R^d$ of the covariance matrix $\bSigma \in \R^{d \times d}$ of i.i.d.~samples $\{\bz_1,\bz_2,\dots\} \subset \R^d$. The investigation of this \emph{$1$-PCA} problem in the paper, whose global convergence behavior has been investigated in works such as~\cite{Balsubramani2015,jain2016streaming,allen2016variance,Shamir.ConfICML16}, serves two purposes. First, it helps validate the generality of the main message of this paper that the mismatches between the data streaming rate, compute rate, and communications rate can be accounted for through judicious choices of system parameters such as $R$, $B$, and $N$. Second, it helps crystallize the key characteristics of any global convergence analysis of nonconvex problems that can facilitate the convergence speed-up guarantees for the distributed mini-batch framework.

The loss function for the $1$-PCA problem under the assumption of zero-mean distribution $\cD$ supported on $\R^d$ and having covariance matrix $\bSigma := \E_{\bz \sim \cD}\{\bz\bz^\tT\}$ takes the form
\begin{equation}\label{eqn:PCA_Loss}
    \ell(\bw, \bz) = - \frac{\bw^\tT \bz \bz^\tT \bw}{\| \bw \|_2^2}.
\end{equation}
Note that $\nabla \ell(\bw, \bz) = -\frac{2\bz\bz^\tT\bw}{\|\bw\|_2^2} + \frac{2(\bw^\tT\bz\bz^\tT\bw)\bw}{\|\bw\|_2^4}$ and the optimal solution $\bw^* \in \argmin_{\bw} \left[f(\bw) := \mathbb{E}_{\bz \in \mathcal{D}} \ell(\bw, \bz)\right]$ corresponds to the dominant eigenvector of $\bSigma$. In this paper, we focus on the SA approach termed Krasulina's method~\cite{krasulina1969method} that approximates the optimal solution $\bw^*$ from data stream $\{\bz_1, \bz_2, \dots\}$ using iterations of the form
\begin{align}\label{eqn:Krasulina}
    \revise{\bw_t=\bw_{t-1}+\eta_t\Big(\bz_{t}\bz_{t}^{\tT}\bw_{t-1}-\frac{\bw_{t-1}^{\tT}\bz_{t}\bz_{t}^{\tT}\bw_{t-1}}{\|\bw_{t-1}\|_2^2}\,\bw_{t-1}\Big)}.
\end{align}
Notice that changing $\eta_t$ to $\frac{\eta_t}{\|\bw_{t-1}\|_2^2}$ in \eqref{eqn:Krasulina} gives us the SGD iteration. Despite the empirical success of SA iterations such as \eqref{eqn:Krasulina} in approximating the top eigenvector of $\bSigma$, earlier works only provided asymptotic convergence guarantees for such methods. Recent studies such as \cite{Balsubramani2015,Shamir.ConfICML16,jain2016streaming,allen2016first,tang2019exponentially} have filled this gap by providing non-asymptotic results. The following theorem, which is due to \cite{Balsubramani2015}, provides guarantees for Krasulina's method.
\begin{theorem}\label{thm:Oja}
    Let the i.i.d.~data samples be bounded, i.e., $\forall t, \|\bz_t\|_2 \leq \kappa$, define $\gap := \lambda_1(\bSigma)-\lambda_2(\bSigma) > 0$, fix any $\delta \in (0,1)$, and define $c := \frac{c_0}{2\gap}$ for any $c_0 > 2$. Next, pick any
	\begin{align}\label{eqn:LowerboundL_1}
	   Q\geq \frac{512 e^2 d^2 \kappa^4 \max(1, c^2)}{\delta^4}\ln\frac{4}{\delta}
	\end{align}
	and choose the stepsize sequence as $\eta_t := c/(Q+t)$. Then there exists a sequence $(\Omega_{t}^{'})_{t \in \Z_+}$ of nested subsets of the sample space $\Omega$ such that $\bbP\left(\cap_{t>0}\Omega_t^{'}\right)\geq 1-\delta$ and
	\begin{align}\label{eqn:FinalResult_1}
	\E_t\big\{f(\bw_t)\big\} - f(\bw^*) \leq C_1\Big(\frac{Q + 1}{t + Q + 1}\Big)^{\frac{c_0}{2}} + C_2\Big(\frac{\kappa^2}{t + Q + 1}\Big),
	\end{align}
    where $\E_t$ is the conditional expectation over $\Omega_t^{'}$, and $C_1$ and $C_2$ are constants defined as
	$$C_1 := \frac{\lambda_1(\bSigma)}{2}\Bigg(\frac{4ed}{\delta^2}\Bigg)^{\frac{5}{2\ln2}}e^{2c^2\lambda_1^2(\bSigma)/Q}\quad\textnormal{and}\quad C_2 := \frac{2 c^2 \lambda_1(\bSigma) e^{(c_0+2c^2\lambda_1^2(\bSigma))/Q}}{(c_0-2)}.$$
\end{theorem}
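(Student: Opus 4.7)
The plan is to prove Theorem~\ref{thm:Oja} by tracking a suitable potential that measures the angular mismatch between the Krasulina iterate $\bw_t$ and the leading eigenvector $\bu_1$ of $\bSigma$, relating the excess risk of the PCA loss \eqref{eqn:PCA_Loss} to this potential, and then establishing a high-probability super-martingale recursion whose solution yields the claimed rate. A natural choice of potential is the squared sine of the angle, $\Psi_t := 1 - (\bw_t^\tT \bu_1)^2/\|\bw_t\|_2^2$, since a direct eigen-decomposition of $\bSigma$ gives $f(\bw_t)-f(\bw^*) = \lambda_1(\bSigma) - \bw_t^\tT\bSigma\bw_t/\|\bw_t\|_2^2 \leq \lambda_1(\bSigma)\,\Psi_t$. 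Thus it suffices to control $\E_t\{\Psi_t\}$ on a high-probability event, and to convert the resulting bound into \eqref{eqn:FinalResult_1} via this inequality; the $\lambda_1(\bSigma)$ in $C_1$ is a vestige of exactly this conversion.

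The core computation is a one-step recursion. Substituting the Krasulina update \eqref{eqn:Krasulina} into $\Psi_t$ and expanding, while using $\E\{\bz_t\bz_t^\tT\} = \bSigma$ together with the boundedness $\|\bz_t\|_2 \le \kappa$, I would derive an inequality of the form
\begin{equation*}
    \E\{\Psi_{t+1}\mid\cF_t\} \;\leq\; \bigl(1 - 2\eta_{t+1}\gap\bigr)\Psi_t + \eta_{t+1}^2\,\kappa^4\bigl(a + b\Psi_t\bigr),
\end{equation*}
valid on an event where $\|\bw_t\|_2$ is bounded away from zero. With $\eta_{t+1} = c/(Q+t+1)$ and $c = c_0/(2\gap)$, the contraction factor becomes $1 - c_0/(Q+t+1)$ plus an $O(1/(Q+t+1)^2)$ stochastic correction. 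Unrolling the resulting recursion against the telescoping product $\prod_{\tau} (1 - c_0/(Q+\tau+1))$, which behaves like $((Q+1)/(t+Q+1))^{c_0}$, and bounding the accumulated noise term by a standard $\sum_\tau \eta_\tau^2 \prod_{s>\tau}(1-c_0/(Q+s))$ calculation, yields the two terms in \eqref{eqn:FinalResult_1}: the first (geometric-like decay of initial error) with exponent $c_0/2$ arising from taking square roots inherent in the $\Psi_t$ analysis, and the second $O(\kappa^2/(t+Q+1))$ capturing the steady-state variance.

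The main obstacle, and the reason for the nested events $\Omega_t^{'}$ and the delicate lower bound \eqref{eqn:LowerboundL_1} on $Q$, is that the recursion above is only meaningful while $\Psi_t$ stays bounded away from $1$ and $\|\bw_t\|_2$ stays bounded away from $0$; otherwise the quadratic-in-$\eta_t$ noise term dominates and the argument collapses. To handle this, I would follow the martingale approach of~\cite{Balsubramani2015}: define $\Omega_t^{'}$ as the intersection over $\tau \le t$ of ``good'' events on which $\Psi_\tau$ stays below a threshold, express $\Psi_t \mathbbm{1}_{\Omega_t^{'}}$ (or a logarithmic transform thereof) as a supermartingale up to a controllable drift, and then apply a Doob / Azuma-type maximal inequality. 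The requirement that $Q$ be at least of order $d^2\kappa^4\delta^{-4}\log(1/\delta)\max(1,c^2)$ is exactly what forces the stepsize $\eta_1$ small enough that the cumulative probability of ever leaving any $\Omega_t^{'}$ is at most $\delta$, giving $\bbP(\cap_t \Omega_t^{'}) \geq 1-\delta$. Once this high-probability containment is established, the deterministic recursion argument above applies conditionally on $\Omega_t^{'}$, and multiplying by $\lambda_1(\bSigma)$ produces the explicit constants $C_1$ and $C_2$ stated in the theorem.
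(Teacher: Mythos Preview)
The paper does not give its own proof of this theorem; it simply quotes the result from \cite{Balsubramani2015} (see the sentence immediately preceding the theorem: ``The following theorem, which is due to \cite{Balsubramani2015}, provides guarantees for Krasulina's method''). Your outline is precisely the Balsubramani--Dasgupta--Freund strategy you yourself cite: track the squared-sine potential $\Psi_t = 1 - (\bw_t^\tT\bu_1)^2/\|\bw_t\|_2^2$, bound the excess risk by $\lambda_1(\bSigma)\Psi_t$, derive a one-step supermartingale-type recursion for $\Psi_t$ from the Krasulina update, and localize to nested high-probability events $\Omega_t'$ on which the iterate does not degenerate, with the lower bound on $Q$ calibrated so that $\bbP(\cap_t \Omega_t') \geq 1-\delta$. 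So there is nothing to compare against in the present paper, and your plan matches the source.

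One small point of caution: your explanation that the exponent $c_0/2$ ``aris[es] from taking square roots inherent in the $\Psi_t$ analysis'' is not quite how it emerges in \cite{Balsubramani2015}. There the analysis proceeds in two epochs---a warm-up phase (controlled via a martingale on $\log\Psi_t$ and a Doob maximal inequality) ensuring $\Psi_t$ drops below a fixed threshold, followed by a convergence phase where the recursion for the conditional expectation of $\Psi_t$ is unrolled directly. The factor $1/2$ in the exponent appears because the effective contraction on the \emph{conditional expectation} of $\Psi_t$ restricted to $\Omega_t'$ is $1 - c_0/(2(Q+t))$ rather than $1 - c_0/(Q+t)$, a consequence of the conditioning. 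If you intend to write out the details, you will need to be careful at that step; the rest of your sketch is sound.
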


The convergence guarantees in Theorem~\ref{thm:Oja} depend on problem parameters such as $d$, $\gap$, and $\delta$. Recent works~\cite{allen2016first,simchowitz2018tight} have provided lower bounds on the dependence of convergence rates on these parameters for the stochastic PCA problem. Theorem~\ref{thm:Oja} achieves these lower bounds with respect to $\gap$ and $\delta$ up to logarithmic factors. But the dependence on data dimension in Theorem~\ref{thm:Oja} is $d^4$, while the lower bound suggests $\Omega(\log(d))$ dependence. In addition, convergence guarantees for a variant of Krasulina's method termed Oja's algorithm are known to achieve this lower bound dependence on data dimensionality~\cite{allen2016first,jain2016streaming,yang2018history,tang2019exponentially}.

Despite this somewhat suboptimal nature of Theorem~\ref{thm:Oja}, Krasulina's method lends itself to relatively simpler analysis for the distributed (mini-batch) framework being studied in this paper. Specifically, as alluded to in our discussion in Section~\ref{subsec:StatisticalLearning}, implicit averaging out of the sample-covariance noise is the key reason for the potential speed-up in convergence within any distributed processing framework. And while Theorem~\ref{thm:Oja} does not have an explicit dependence on the noise variance $\sigma^2$, a variance-based analysis of Krasulina's method---discussed in detail in Section~\ref{sec:DM_Krasulina} and having similar dependence on $d$, $\gap$, and $\delta$ as Theorem~\ref{thm:Oja}---has been provided in a recent work~\cite{raja2020distributed}. In contrast, results in~\cite{allen2016first,tang2019exponentially} are oblivious to the variance in sample covariance and hence cannot be used to show faster convergence within distributed frameworks. On the other hand, while the results in~\cite{jain2016streaming,yang2018history} do take the noise variance into account, the probability of success in these works cannot be improved beyond $3/4$ in a single-pass SA setting.
%
% *****WUB: Replaced by the previous paragraph*****
% Our motivation for focusing on result in Theorem~\ref{thm:Oja} is the availability of variant of Krasulina's for distributed and mini-batch settings, which is not the case for Oja's algorithm as explained in the following. As already explained in this section, mini-batching reduces variance of stochastic input which results in faster convergence rate, but result in Theorem~\ref{thm:Oja} and results in~\cite{allen2016first,tang2019exponentially} are oblivious to the variance in input and hence cannot be used to show faster convergence using mini-batching. On the other hand, results in \cite{jain2016streaming,yang2018history} take variance into account, but limitation here is that probability of success cannot be improved beyond 3/4 in streaming settings. Recently, variance based result for Krasulina's method has been provided in \cite{raja2020distributed}, we discuss this result in detail in Section~\ref{sec:DM_Krasulina}. Briefly stating, the result in \cite{raja2020distributed} has similar dependence on $\gap$, $d$, and $\delta$ as Theorem~\ref{thm:Oja}, while the lower bound on choice of constant $Q$ decreases which improves convergence rate, and in terms of dependence on iterations $t$ the result changes from $O(m^2/t)$ to $O(\sigma^2/t)$, which spells improvement in convergence rate by virtue of decreasing variance via averaging of iterates in a mini-batch.

\subsubsection{Empirical Risk Minimization (ERM)}
Given the \emph{fixed} training dataset $\cZ_T$ of $T$ i.i.d.~samples drawn from the distribution $\cD$ and the corresponding empirical risk $\hat{f}(\bw)$, the main objective within the ERM framework is to directly minimize $\hat{f}(\bw)$ in order to obtain the ERM solution $\bw^*_{\textsc{erm}} \in \argmin_{\bw \in \cW} \hat{f}(\bw)$. Such problems, sometimes referred to as {\em finite-sum optimization problems}, have traditionally been solved using (deterministic, projected) gradient descent or similar methods. %
%
% *****WUB: Replaced by the previous set of sentences*****
% Letting $\bw^*_{\textsc{erm}}$ denote the ERM solution, the gap between the empirical risk $\hat{f}(\bw^*_{\textsc{erm}})$ and the true risk $f(\bw^*_{\textsc{erm}})$ is called the {\em generalization error}, which is analyzed via cross-validation or formal analysis \cite{vapnik1999statistical}. Such problems are sometimes called {\em finite-sum optimization problems}. In empirical risk minimization, we suppose access to a fixed dataset $D$ of $T$ i.i.d. samples from the distribution $\mathcal{D}$, and that the objective is to minimize directly the finite-sum empirical risk $\hat{f}(\bw)$.
%
% The simplest approach to ERM is to minimize $\hat{f}(\bw)$ directly via (deterministic) gradient descent or similar methods.
%
But the advent of massive datasets has made direct computations of gradients of $\hat{f}(\bw)$ intractable. %which has led to the development of families of stochastic gradient descent\footnote{Here meant in the {\em finite-sum} sense described above.} algorithms applied to ERM.
This has led to the development of several families of SGD-type methods for the ERM problem, where the stochasiticity in these methods refers to noisy gradients of the empirical risk $\hat{f}(\bw)$, as opposed to noisy gradients of the true risk $f(\bw)$ within the (single-pass) SA framework. Specifically, the prototypical SGD algorithm for the ERM problem samples with replacement a single data sample $\bz_k$ (or a small mini-batch of samples) from $\cZ_T$ in each iteration $k$, computes the gradient $\nabla \ell(\bw_k, \bz_k)$, and takes a step in the negative of the computed gradient's direction. The iterates $\bw_k$ of this particular SGD variant are known to converge reasonably fast to the ERM solution $\bw^*_{\textsc{erm}}$ under various assumptions on the geometry of the loss function $\ell(\bw,\bz)$~\cite{bottou2010large,hardt2015train}.
%
% *****WUB: Replaced by the previous set of sentences*****
% In this type of SGD algorithm, at each iteration a single data point $\bz_t$ is sampled with replacement, the gradient $\nabla \ell(\bw_t, \bz_t)$ is computed, and a gradient descent step is taken in the direction of this gradient. This type of SGD has been shown to have reasonably fast convergence on the ERM solution, \cite{bottou2010large}, and SGD has further been shown to have good generalization properties \cite{hardt2015train}.

A variety of adaptive and more elaborate SGD-style algorithms, such as Adagrad, RMSProp, and Adam~\cite{duchi2011adaptive,kingma2014adam}, which introduce adaptive stepsizes, momentum terms, and Nesterov-style acceleration, have been developed in recent years. Empirically, these methods provide faster convergence to at least a stationary point of $\hat{f}(\bw)$, especially when training deep neural networks. (Note that some of these methods have provable convergence issues, even for convex problems~\cite{reddi2019convergence}.) A family of so-called {\em variance-reduction} methods \cite{johnson2013accelerating,allen2016variance,lei2017non,allen2017natasha,allen2018natasha}, such as \emph{stochastic variance reduced gradient} (SVRG), \emph{stochastically controlled
stochastic gradient} (SCSG), and \textsc{Natasha}, have also been developed in the literature for the ERM problem. In these methods, iterates from previous epochs are averaged to produce a low-complexity estimate of the gradient with provably small variance, which speeds up convergence. In terms of theoretical analysis, SGD-style and variance-reduction algorithms are studied in both convex and nonconvex settings. Unlike the SA framework, however, the convergence analysis of these methods for the ERM setting is in terms of the computational effort, measured in terms of the number of gradient evaluations, needed to approach a global optimum or a stationary point of the empirical risk $\hat{f}(\bw)$.

\revise{Since optimization methods for the ERM framework primarily provide bounds on either $[\hat{f}(\bw_k) - \hat{f}(\bw^*_{\textsc{erm}})]$ or $\|\bw_k - \bw^*_{\textsc{erm}}\|_2$, a bound on the excess risk $[f(\bw_k) - f(\bw^*)]$ under the ERM setting necessitates additional analytical steps that typically involve bounding the \emph{generalization error}, defined as $[f(\bw^*_{\textsc{erm}}) - \hat{f}(\bw^*_{\textsc{erm}})]$, of the ERM solution.
%
% *****WUB: Replaced by the previous set of sentences*****
% ERM methods are used to approximate the statistical learning problem, and a natural question is the quality of this approximation. This question is usually analyzed via {\em generalization error}. Let $\bw^*_{\textsc{erm}} := \arg\min_{\bw} \hat{f}(\bw)$ be the ERM solution; then $f(\bw^*_{\textsc{erm}}) - \hat{f}(\bw^*_{\textsc{erm}})$ is the generalization error.
Classic generalization error bounds have been provided in terms of the {\em Vapnik--Chervonenkis dimension} or {\em Rademacher complexity} of the class of functions induced by $\mathcal{W}$~\cite{vapnik1999statistical,bartlett2002rademacher}, or in terms of the uniform or so-called ``leave-one-out'' stability~\cite{kearns1999algorithmic,bousquet2002stability,mukherjee2002statistical,shalev2010learnability} of the solution. Together, the optimization-theoretic bounds and the \emph{learning-theoretic} bounds on quantities such as the generalization error result in excess risk bounds that decay at rates $O(T^{-1/2})$ or $O(T^{-1})$ for various loss functions as long as the number of optimization iterations $k$ is on the order of the number of training samples $T$. %Typical results have the generalization error decaying at rates $O(T^{-1/2})$ or $O(T^{-1})$. We emphasize that these rates match order-wise the convergence rates of SO methods if one supposes that the number of iterations is on the order of the number of training samples.
Thus, the ERM framework can yield excess risk bounds that match the sample complexity of the ones under the SA framework. Nonetheless, we focus primarily on the SA setting in this paper for two reasons. First, we are concerned with the statistical optimization problem~\eqref{eqn:StatisticalLearningCentralized}, and the SA framework measures performance directly with respect to this problem, whereas the ERM/finite-sum setting yields the final results only after a combination of optimization-theoretic and learning-theoretic bounds. %In this paper, we focus primarily on the SO setting, for two reasons. First, we are concerned with the statistical learning problem, and SO measures performance directly with respect to the statistical risk, whereas the ERM/finite-sum setting considers only an approximation.
Second, the SA framework is naturally well-suited to the setting of streaming data, whereas ERM supposes access to the entire dataset.}

\subsection{Distributed Optimization and Machine Learning}\label{sect:distributed.learning}
Distributed optimization is an extremely broad field, with a rich history. In this paper, we only discuss the portion of the literature most relevant to our problem setting. Specifically, we focus on methods for distributing SGD-style algorithms over collections of computing devices and/or processors that communicate over networks defined by graphs and aggregate data by \emph{averaging} information over the network. We further divide these methods into two categories, based on the nature of distributed averaging that is employed within each algorithm: %
%
%Distributed learning via averaging is a popular strategy with a vast literature, which we divide between two main approaches:
{\em exact averaging}, in which processing nodes use a robust \emph{message passing interface} (MPI) communications primitive such as {\tt AllReduce}~\cite{GabrielFaggEtAl.ConfOpenMPI04} to compute exact averages of gradients and/or iterates in the network, and {\em inexact averaging}, in which an approximate approach such as distributed consensus/diffusion~\cite{Tsitsiklis.Athans.ITAC1984,xiao2004fast,ChenSayed.ITSP12} is used to \emph{approximate} averages of gradients and/or iterates in the network. The former category of algorithms requires careful network configuration in order to coordinate {\tt AllReduce}-style averaging, whereas the latter category requires minimal explicit configuration, but the algorithms can suffer from slower convergence due to approximation error in the averaging step.

\subsubsection{Exact Averaging and Distributed Machine Learning}
In the case of algorithms utilizing exact averaging, processing nodes employ an MPI library to compute exact averages in a robust manner. While implementations differ, a generic approach is to compute averages over a spanning tree in the network. Reusing the notation introduced in Section~\ref{subsec:HighrateStreaming}, let $\{\bv_n\in\R^d\}_{n \in \cV}$ be the set of vectors distributed across the network at the start of the averaging subroutine and let $\bar{\bv} := \tfrac{1}{N}\sum_{n\in\cV} \bv_n$ denote their average. %Letting $G = (\cV,\cE)$ be the graph that describes the network, let $\bv_n \in \mathbb{R}^d$ be the $d$-dimensional signals at each processing node, and let $\bv_{av} = 1/N \sum_{n=1}^N \bv_n$ denote their average. Then, averages are constructed in a two-pass manner.
Then, the average $\bar{\bv}$ can be obtained at each node in a two-pass manner. In the first pass, each leaf node $n$ in the spanning tree passes its vector $\bv_n$ to its parent node, which averages together the vectors of its child nodes and passes the average to {\em its} parent node; this process continues recursively until the root node has the average $\bar{\bv}$. In the second pass, the root node disseminates $\bar{\bv}$ to the network by passing it to its child nodes; this continues recursively until all of the leaf nodes posses $\bar{\bv}$. This type of averaging is provably efficient, requiring only $R = O(N)$ exchange of messages within the network.

This generic approach to computing exact averages has been applied to distributed machine learning via a variety of implementations, especially under the distributed computing framework. {\tt TensorFlow} has a package for parameter-server distributed learning on multiple GPUs that uses exact averaging; worker nodes compute gradients, which are forwarded to the parameter server for exact averaging~\cite{AbadiBarhamEtAl.ConfOSDI16}. By contrast, {\tt Horovod}~\cite{sergeev2018horovod} is a distributed-parameter library for deep learning that averages gradients using {\em ring} {\tt AllReduce}; the GPU nodes are connected into a ring topology, which makes for simple and efficient exact averaging.

\subsubsection{Inexact Averaging and Distributed Machine Learning}\label{sssec:inexact.ave}
In the case of algorithms utilizing inexact averaging, processing nodes use local communications, without network-wide coordination, to compute approximate averages of their data. A widespread method for this is {\em averaging consensus}, a mainstay of distributed control, signal processing, and learning \cite{dimakis2010gossip,olshevsky2011convergence}. Again suppose $\{\bv_n\in\R^d\}_{n \in \cV}$ is the set of vectors distributed across the network at the start of the averaging subroutine and $\bar{\bv}$ denotes the exact average of these vectors. %Again suppose that the network graph is $G = (\cV,\cE)$ with individual signals $\bv_n \in \mathbb{R}^d$ with the average $\bv_{av} = 1/N \sum_{n=1}^N \bv_n$.
Next, define a {\em doubly stochastic matrix} $\bA \in \mathbb{R}^{N \times N}$ that is {\em consistent} with the topology of the network $G = (\cV,\cE)$. That is, $\bA$ is a matrix whose entries are non-negative, whose rows and columns sum to one, whose diagonal entries $a_{n,n}$ are non-zero, and whose $(n,m)$-th entry $a_{n,m} \neq 0$ only when $(n,m) \in \cE$. Averaging consensus then proceeds in multiple rounds of the following iteration using local communications:
\begin{equation}
    \bv_n^{r+1} = \sum_{m=1}^N a_{n,m} \bv_m^r.
\end{equation}
Here, $r \in \Z_+$ denotes the iteration index for averaging consensus, $\bv_n^r$ denotes an approximation of $\bar{\bv}$ at node $n$ after $r$ iterations, and $\bv_n^0 := \bv_n$. In words, each processing node takes a convex combination of the estimate of $\bar{\bv}$ at its neighboring nodes. Under mild conditions, averaging consensus converges geometrically on $\bar{\bv}$, with the approximation error scaling as $\| \bv_n^r - \bar{\bv} \|_2 = O\left(|\lambda_2(\bA)|^r\right)$.

{\em Distributed gradient descent} (DGD) is a classic approach to distributed optimization via inexact averaging~\cite{nedic2009distributed}. It uses only a single round of averaging consensus per iteration, i.e., $R = 1$ using the notation of Section~\ref{subsec:HighrateStreaming}, and it is posed in the setting of finite-sum optimization: each node $n$ has a local cost function $\hat{f}_n(\bw)$, and the objective is to minimize the sum $\hat{f}(\bw) := \sum_{n=1}^N \hat{f}_n(\bw)$. While DGD was originally posed in the framework of distributed control, it applies equally well to the distributed ERM setting in which $\hat{f}_n(\bw)$ corresponds to the empirical risk over the training data at node $n$. In terms of specifics, the original DGD formulation supposes a synchronous communications model in which each node $n$ computes a weighted average of its neighbors' iterates at each iteration $t$, after which it takes a gradient step with respect to its local cost function:
\begin{equation}\label{eqn:dgd}
    \bw_{n,t+1} = \bw_{n,t} + \sum_{m=1}^N a_{n,m} \bw_{m,t} - \eta_{t}\nabla \hat{f}_n(\bw_{n,t}) .
\end{equation}
Thus, each node takes a standard gradient descent step preceded by one-round averaging consensus on the iterates.

Several extensions to DGD have been proposed in the literature, including extensions to time-varying and directed graphs~\cite{nedic2014distributed,nedic2016stochastic,saadatniaki2018optimization} and variations with stronger convergence guarantees~\cite{shi2015extra,xi2017dextra,XinKhan.ICSL18}. Other related works have studied distributed (stochastic) optimization via means other than gradient descent, including distributed dual averaging~\cite{duchi2011dual,tsianos2012push} and the \emph{alternating direction method of multipliers} (ADMM)~\cite{wei20131,shi2014linear,mokhtari2016dqm}. The convergence of DGD-style methods has been studied under a variety of settings; two relevant results are that stochastic DGD-style algorithms have error decaying as $O(\log(t)/\sqrt{t})$ for general smooth convex functions and $O(\log(t)/t)$ for smooth strongly convex functions, even if the network is time varying~\cite{nedic2014distributed,nedic2016stochastic}.

We conclude by noting that inexact averaging-based distributed algorithms have also been analyzed/proposed for nonconvex optimization problems. In particular, DGD-style methods for nonconvex finite-sum problems are presented in~\cite{tatarenko2017non,wai2017decentralized,zeng2018nonconvex}, and convergence rates to stationary points and, when possible, local minima are derived. Further particularization of these works to problems with ``nicer'' geometry of saddle points and to structured nonconvex problems such as PCA can be found in works such as~\cite{li2011distributed,bertrand2014distributed,schizas2015distributed,fan2019distributed}.

\subsection{Roadmap for the Remainder of the Paper}
Putting the results presented in this paper in the context of the preceding discussion, the rest of the paper describes recent results in distributed machine learning from fast streaming data over networks that aggregate the distributed information using both exact and inexact averaging. Specifically, we synthesize results from four recent papers~\cite{dekel2012optimal,raja2020distributed,tsianos2016efficient,NoklebyBajwa.ITSIPN19} that focus on the distributed SA setting of Section~\ref{sec:ProblemFormulation}. Among these works, nodes in~\cite{dekel2012optimal,raja2020distributed} exchange messages using a robust MPI primitive such as {\tt AllReduce}, allowing exact averaging of messages for processing. The main distinction between these two works is that~\cite{dekel2012optimal} focuses on distributed convex SA problems, whereas \cite{raja2020distributed} studies the distributed PCA problem under the SA setting. %In \cite{dekel2012optimal}, mini-batched distributed SGD-style methods are proposed for solving convex SA problems, whereas in \cite{raja2020distributed} distributed {\em nonconvex} SA methods are proposed for solving PCA problems.
In contrast, nodes in~\cite{tsianos2016efficient,NoklebyBajwa.ITSIPN19} exchange messages using multiple rounds of averaging consensus and thus, similar to DGD, aggregate information using inexact averaging of messages. Both these works study distributed convex SA problems, with~\cite{tsianos2016efficient} focusing on dual averaging and \cite{NoklebyBajwa.ITSIPN19} investigating gradient descent as solution strategies.

\section{Distributed Stochastic \revise{Approximation} Using Exact Averaging}\label{sec:DM_Krasulina}
We detail two machine learning algorithms in this section for the distributed mini-batch framework of Section~\ref{sec:ProblemFormulation}, with one algorithm for general convex loss functions and the other one for the nonconvex loss function corresponding to the $1$-PCA problem. Both these algorithms operate under the assumption of nodes aggregating distributed information via exact averaging using {\tt AllReduce}-style communications primitives. The main focus in both these algorithms is to strike a balance between streaming, computing, and communications rates, while ensuring that the error in the final estimates is near optimal in terms of the number of samples arriving at the distributed system.
%
% *****WUB: Replaced by the previous paragraph*****
% In this section, we detail two distributed stochastic methods; one for solving general convex optimization problem and other for computing a top eigenvector of a covariance matrix ($1$-PCA), which is a nonconvex optimization problem with a geometric landscape that is favorable for computing global solution. As mentioned above, here we suppose that nodes aggregate distributed data via {\em exact} averaging via a fast and reliable network that permits {\tt MapReduce} and/or {\tt AllReduce} primitives. The methods presented here are designed to reduce the amount of communications overhead while taking full advantage of all the samples in the data streams.

\revise{Both the algorithms take advantage of the fact that (implicit or explicit) mini-batching reduces (gradient / sample covariance) noise variance. Between any two data-splitting instances, nodes in each algorithm compute average gradients/iterates over the newest (network-wide) $B$ data samples and use these \emph{exactly} averaged quantities for a stochastic update. Given ample compute resources and keeping everything else fixed, an increase in network-wide mini-batch size $B$ under such a strategy decreases both the noise variance and demands on the communications resources. In doing so, however, one also reduces the number of algorithmic iterations that take place within the network per second, which has the potential to slow down the convergence rates of the algorithms to the optimal solutions. An important question then is whether (and when) it is possible to utilize network-wide mini-batch averaging to simultaneously balance the compute-limited and communications-limited regimes in high-rate streaming settings (i.e., ensure $\frac{R_s}{R_e} \not\gg B$), reduce the noise variance, and guarantee that (order-wise) the convergence rate is not adversely impacted. We address this question in the following for the case of exact averaging.}
%
% *****WUB: Replaced the following two paragraphs by the previous paragraph*****
% To do so, these methods perform mini-batching, in which nodes compute average gradients over $B$ data samples and use the averaged gradient for a stochastic update. This reduces the communications overhead, in terms of the number of gradients that the network must communicate, by a factor of $B$. However, mini-batching also slows down the rate of optimization iterations, which may slow down convergence to the solution of the learning problem. An important question is whether (and when) it is possible to use mini-batching to reduce the communications overhead enough to process all of the data samples, but not so much that convergence speed is significantly slowed down.
%
% Hence, a main effort of this section is the derivation of conditions on $B$ that ensure order-optimal excess risk, in terms of the rates at which nodes communicate and data samples arrive to the network. In the following we present algorithms and theoretical results first for solving general convex learning problems using distributed mini-batched SGD, and then we present results for solving $1$-PCA, which as mentioned above is nonconvex but ``well-behaved.'' Finally, experiments over real-world data are provided to corroborate the theoretical findings.

\subsection{Distributed Mini-batched Stochastic Convex Approximation}\label{subsec:ConvexMinibatch}
Due to the high impact of mini-batching on the performance of distributed stochastic optimization, distributed methods deploying mini-batching and utilizing exact averaging have been studied extensively in the past few years; see, e.g.,~\cite{dekel2012optimal,byrd2012sample,li2014efficient,shamir2014distributed}. Among these works, the results in~\cite{dekel2012optimal} provide an upper bound on the network-wide mini-batch size $B$ that ensures sample-wise order-optimal convergence in SA settings. In contrast, \cite{li2014efficient,byrd2012sample,shamir2014distributed} focus on the selection of mini-batch size under ERM settings. Since the SA setting is best suited for the streaming framework of this paper, our discussion here focuses exclusively on the \emph{distributed mini-batch} (DMB) algorithm proposed in~\cite{dekel2012optimal} for stochastic convex approximation. The DMB algorithm is listed as Algorithm~\ref{algo:DMB} in the following and discussed further below. %As already explained SO methods are better suited for streaming settings, therefore, in the following we will explain in detail the DMB algorithm proposed in \cite{dekel2012optimal} (given here as Algorithm~\ref{algo:DMB}).

We begin with the data-splitting model of Section~\ref{sec:ProblemFormulation} and initially assume sufficient provisioning of resources so that $R_s \leq BR_e$. The DMB algorithm at iteration $t$ in this setting has a mini-batch $\{\bz_{t'}, t'=(t-1)B+1,\dots,tB\}$ of $B$ data samples at the splitter, which is then distributed as $N$ smaller mini-batches of size $B/N$ each across the network of $N$ compute nodes. Afterwards, the nodes in the network locally (and in parallel) compute an average gradient $\bg_{n,t}$ of the loss function over their local mini-batch of $B/N$ data samples (see Steps~\ref{alg:DMB.label.3}--\ref{alg:DMB.label.4} in Algorithm~\ref{algo:DMB}). Next, nodes engage in distributed exact averaging of their local mini-batched gradients using an {\tt AllReduce}-style communications primitive to obtain the network-wide mini-batched average gradient $\bg_t$ (cf.~Step~\ref{alg:DMB.label.2}, Algorithm~\ref{algo:DMB}), which is then used to update the network-wide estimate $\bw_t$ of the machine learning model (cf.~Step~\ref{alg:DMB.label.5}, Algorithm~\ref{algo:DMB}).

\begin{algorithm}[t]
    \textbf{Require:} Provisioning of compute and communications resources to ensure fast effective processing rate, i.e., either $R_s \leq B R_e$ or $R_s = \left(B + \mu\right)R_e$, as well as guaranteed exact averaging in $R$ rounds of communications\\
	\textbf{Input:} Data stream $\{\bz_{t'} \stackrel{\text{i.i.d.}}{\sim} \cD\}_{t' \in \Z_+}$ that is split into $N$ streams of mini-batched data $\{\bz_{n,b,t}\}_{b=1,t\in\Z_+}^{B/N}$ across the network of $N$ nodes (after possible discarding of $\mu$ samples per split) and stepsize sequence $\{\eta_t \in \R_+\}_{t \in \Z_+}$\\
    %Incoming data streams at $N$ processors, expressed as $\left\{\bz_{n,t} \stackrel{\text{i.i.d.}}{\sim} \cD\right\}_{n=1,t \in \Z_+}^N$, and a stepsize sequence $\left\{\gamma_t \in \R_+\right\}_{t \in \Z_+}$\\
	\textbf{Initialize:} All compute nodes initialize with $\bw_0 = \bzero \in \R^d$
	%\algsetup{indent=1em}
	\begin{algorithmic}[1]
	\For{$t=1,2,\dots$,}
		\State $\forall n \in \{1, \dots, N\}, \ \bg_{n,t} \leftarrow \bzero \in \R^d$ \label{alg:DMB.label.1}
		    \For{$b=1,\dots,B/N$} \Comment{Node $n$ receives the mini-batch $\{\bz_{n,b,t}\}_{b=1}^{B/N}$ and updates $\bg_{n,t}$ locally} \label{alg:DMB.label.3}
		        \State $\forall n \in \{1, \dots, N\}, \ \bg_{n,b,t} \leftarrow \nabla \ell(\bw_t, \bz_{n,b,t})$
		        \State $\forall n \in \{1, \dots, N\}, \ \bg_{n,t} \leftarrow \bg_{n,t} + \frac{1}{B/N}\bg_{n,b,t}$
		    \EndFor \label{alg:DMB.label.4}
		\State Compute $\bg_t \leftarrow \frac{1}{N}\sum_{n=1}^{N}\bg_{n,t}$ in the network using exact averaging \label{alg:DMB.label.2}
		\State Set $\bw_{t+1}\leftarrow\left[ \bw_{t} - \eta_t \bg_t \right]_\mathcal{W}$ across the network \label{alg:DMB.label.5}
		    \If{$R_s = \left(B + \mu\right)R_e$} \Comment{Slight under-provisioning of compute/communications resources}
            %\If{$N < \tfrac{R_s}{R_p} + \tfrac{N R_s}{B R_c}$}
		        \State The system receives $(B + \mu)$ additional data samples during execution of Steps~\ref{alg:DMB.label.1}--\ref{alg:DMB.label.5}, out of which $\mu \in \Z_+$ samples are discarded at the splitter
            \EndIf
	\EndFor
	\end{algorithmic}
	{\bf Return:} An estimate $\bw_t$ of the Bayes optimal solution after receiving $t'=(B+\mu)t$ samples
	\caption{The Distributed Mini-batch (DMB) Algorithm~\cite{dekel2012optimal}}
	\label{algo:DMB}
\end{algorithm}

The DMB algorithm can also deal with \emph{reasonable} under-provisioning of resources without sacrificing too much in terms of the quality of the estimate $\bw_t$. Recall that the distributed processing framework cannot process all incoming samples when $R_s > BR_e$. However, as long as $R_s \not\gg BR_e$, the DMB algorithm simply resorts to dropping $\mu~(\in \Z_+) := (\tfrac{R_s}{R_e} - B)$ samples per splitting instance at the splitter in this resource-constrained setting and then proceeds with Steps~\ref{alg:DMB.label.1}--\ref{alg:DMB.label.5} using the remaining $B$ samples as before.
%
% *****WUB: Replaced the following paragraph by the above two paragraphs*****
% In the splitter model defined in Section~\ref{sec:ProblemFormulation}, at iteration $t$, DMB algorithm receives a minibatch $\{\bz_{t'},\;\forall t'=(t-1)B+1,\dots,tB\}$, of $B$ data samples, from the splitter. This mini-batch is then distributed across $N$ compute nodes in the network and gradient of loss function is computed for each of these data samples in parallel across the $N$ nodes as elucidated in Steps~4--7 of Algorithm~\ref{algo:DMB}. Next, once the computation is complete at all the nodes then they engage in distributed averaging of the local gradients in Step~8 of Algorithm~\ref{algo:DMB}, which can be implemented efficiently using routines like {\tt MapReduce} or {\tt AllReduce}. Finally, the iterate is updated in Step~12 of DMB algorithm.  Recall that, as explained in Section~\ref{subsec:HighrateStreaming} the time to complete this whole operation (Steps~4--8) depends on the effective processing rate, $R_e$, of the network. In resource constrained settings, i.e., $R_s/B>R_e$, we cannot process all the data samples received during iteration $t$ of DMB which results in discarding $\mu$ samples per iteration. Specifically, as given in Steps~9--11, for fixed values of $R_s$, $R_p$, and $R_c$ if number of nodes, $N$, in the network is less than $\frac{R_s}{R_p}+\frac{N R_s}{B R_c}$ then by the time Steps~4--8 are completed we will have received $(B+\mu)$ new samples at the splitter and out of these we can only process $B$ in the next iteration of DMB algorithm.

The main analytical contribution of~\cite{dekel2012optimal} was providing upper bounds on the mini-batch size $B$ and, when necessary, the number of discarded samples $\mu$ that ensure sample-wise order-optimal convergence for the DMB algorithm. We summarize these results of~\cite{dekel2012optimal} in the following theorem.

% \begin{theorem}\label{thm:DMB}
%     Let the loss function $\ell(\bw, \bz)$ be convex and smooth with $L$-Lipschitz gradient and gradient noise variance $\sigma^2$. Then, assuming bounded model space $\cW$, there exist stepsizes $\eta_t$ such that the approximation error of Algorithm~\ref{algo:DMB} after $t$ iterations is bounded as follows:

%     \todo{Then, assuming bounded model space $\cW$ and choosing stepsizes as $\eta_t=frac{1}{L+(\sigma/D_{\cW})\sqrt{t}}$ the approximation error of Algorithm~\ref{algo:DMB} after $t$ iterations is bounded as follows:}
%     \begin{align}\label{eqn:thm.DMB.1}
%         \E\left\{f(\bw_t)-f(\bw^*)\right\} \leq (B+\mu)\left(\frac{2D_\mathcal{W}^2 L} {t'}+\frac{2D_\mathcal{W}\sigma}{\sqrt{t'}}\right),
%     \end{align}
%     where $t'=(B+\mu)t$ is the total number of samples that arrived at the system by iteration $t$. Furthermore, setting the mini-batch size $B=({t'})^{1/3}$ gives the bound
%     \begin{align}
%          \E\left\{f(\bw_t)-f(\bw^*)\right\} \leq \frac{2D_\mathcal{W}\sigma}{\sqrt{t'}}+\frac{2D_\mathcal{W}}{({t'})^{2/3}}(LD_\mathcal{W}+\sigma\sqrt{\mu})+\frac{2D_\mathcal{W}\sigma}{({t'})^{5/6}}+\frac{2D_\mathcal{W}\sigma}{({t'})^{7/6}}+\frac{2\mu D_\mathcal{W}^2 L}{t'}.
%     \end{align}
%     In fact, if $B=(t'){^\rho}$ for any $\rho\in(0, 1/2)$ and $\mu=o(B)$, then the approximation error is bounded as
%     \begin{align}\label{eqn:thm.DMB.2}
%          \E\left\{f(\bw_t)-f(\bw^*)\right\} \leq \frac{2D_\mathcal{W}\sigma}{\sqrt{t'}}+o\Big(\frac{1}{\sqrt{t'}}\Big).
%     \end{align}
% \end{theorem}

\begin{theorem}\label{thm:DMB}
    Let the loss function $\ell(\bw, \bz)$ be convex and smooth with $L$-Lipschitz gradients and gradient noise variance $\sigma^2$. Then, assuming bounded model space $\cW$ and choosing stepsizes as $\eta_t=\frac{1}{L+(\sigma/D_{\cW})\sqrt{t}}$, the approximation error of Algorithm~\ref{algo:DMB} after $t$ iterations is bounded as follows:
    \begin{align}\label{eqn:thm.DMB.1}
        \E\left\{f(\bw_t)\right\} - f(\bw^*) \leq (B+\mu)\left(\frac{2D_\mathcal{W}^2 L} {t'}+\frac{2D_\mathcal{W}\sigma}{\sqrt{t'}}\right).
    \end{align}
    % where $t'=(B+\mu)t$ is the total number of samples that arrived at the system by iteration $t$. Furthermore, setting the mini-batch size $B=({t'})^{1/3}$ gives the bound
    % \begin{align}
    %      \E\left\{f(\bw_t)-f(\bw^*)\right\} \leq \frac{2D_\mathcal{W}\sigma}{\sqrt{t'}}+\frac{2D_\mathcal{W}}{({t'})^{2/3}}(LD_\mathcal{W}+\sigma\sqrt{\mu})+\frac{2D_\mathcal{W}\sigma}{({t'})^{5/6}}+\frac{2D_\mathcal{W}\sigma}{({t'})^{7/6}}+\frac{2\mu D_\mathcal{W}^2 L}{t'}.
    % \end{align}
    Furthermore, if $B=(t'){^\rho}$ for any $\rho\in(0, 1/2)$ and $\mu=o(B)$, then the approximation error is bounded as
    \begin{align}\label{eqn:thm.DMB.2}
         \E\left\{f(\bw_t)\right\} - f(\bw^*) \leq \frac{2D_\mathcal{W}\sigma}{\sqrt{t'}}+o\Big(\frac{1}{\sqrt{t'}}\Big).
    \end{align}
\end{theorem}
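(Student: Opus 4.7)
The plan is to reduce the DMB algorithm to a standard (centralized) SGD iteration operating on a lower-variance gradient oracle, then invoke Theorem~\ref{thm:SGD_Mini}, and finally convert the iteration-count bound into a sample-count bound.

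First I would observe that the message-averaged gradient $\bg_t$ computed in Step~\ref{alg:DMB.label.2} of Algorithm~\ref{algo:DMB} is an unbiased estimator of $\nabla f(\bw_t)$, since each individual $\nabla \ell(\bw_t, \bz_{n,b,t})$ is unbiased conditional on $\bw_t$. Moreover, the $B$ i.i.d.~samples within the network-wide mini-batch, combined with the bounded gradient noise assumption (Definition~\ref{def:gradient.noise.variance}), imply the reduced variance $\E\{\|\bg_t - \nabla f(\bw_t)\|_2^2\} \leq \sigma^2/B$. Crucially, it is the \emph{exact} averaging (as opposed to inexact averaging) that allows this reduction to take place without additional error terms, which is precisely the role played by the assumed {\tt AllReduce}-style primitive.

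The second step is to recognize that, once the reduced-variance oracle is in hand, each iteration of DMB is operationally identical to a single projected SGD step of the form in \eqref{eqn:SGD}. Applying Theorem~\ref{thm:SGD_Mini} (or a decaying-stepsize variant consistent with $\eta_t = 1/(L + (\sigma/D_\mathcal{W})\sqrt{t})$) with $\sigma^2/B$ in place of $\sigma^2$ then yields the iteration-count bound
\begin{equation*}
\E\{f(\bw_t)\} - f(\bw^*) \leq \frac{2 D_\mathcal{W}^2 L}{t} + \frac{2 D_\mathcal{W} \sigma}{\sqrt{B t}}.
\end{equation*}
Next I would translate from the per-iteration index $t$ to the sample-count index $t' = (B+\mu) t$, noting that each DMB iteration consumes $B+\mu$ samples ($B$ processed, plus $\mu$ possibly discarded at the splitter in the under-provisioned regime). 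Substituting $t = t'/(B+\mu)$ and applying the loose bound $\sqrt{(B+\mu)/B} \leq B+\mu$ to match the factored form in \eqref{eqn:thm.DMB.1} produces the claimed inequality.

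For the second part of the statement, I would plug in $B = (t')^\rho$ with $\rho \in (0, 1/2)$ and $\mu = o(B)$. The first term scales as $(B+\mu)L/t' = O((t')^{\rho - 1})$; because $\rho - 1 < -1/2$, this is $o(1/\sqrt{t'})$ and is absorbed into the residual. The second term, after cancellation of the $B$ factors, reduces to $2 D_\mathcal{W}\sigma/\sqrt{t'}$ times $\sqrt{1 + \mu/B} = 1 + o(1)$, which contributes only a lower-order correction and recovers \eqref{eqn:thm.DMB.2}.

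The main obstacle I expect is nailing down the explicit constants: Theorem~\ref{thm:SGD_Mini} as quoted wraps them inside an $O(1)$, and the stepsize $\eta_t = 1/(L + (\sigma/D_\mathcal{W})\sqrt{t})$ stated in the theorem is a time-varying variant rather than the constant-stepsize policy used to derive the bound in Remark~\ref{rem:stepsize}. A direct re-derivation (or appeal to the decaying-stepsize analysis in~\cite{lan2012optimal}) is required to obtain the exact $2D_\mathcal{W}^2 L$ and $2D_\mathcal{W}\sigma$ constants. A second, more subtle point is reconciling whether $\bw_t$ in the DMB statement denotes the last iterate or a running average of the form \eqref{eqn:Lan.iterate.ave}: the latter makes the reduction to Theorem~\ref{thm:SGD_Mini} immediate, while the former requires an extra smoothness-based argument to pass from the averaged iterate back to the last iterate.
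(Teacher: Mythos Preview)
The paper does not actually supply its own proof of this theorem: it is stated as a summary of results from~\cite{dekel2012optimal}, and the text moves directly from the statement to a discussion of its implications. So there is no in-paper proof to compare against.

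That said, your proposal is the right one and matches the argument in~\cite{dekel2012optimal}. The reduction of DMB with exact averaging to centralized SGD with gradient-noise variance $\sigma^2/B$ is exactly the mechanism, and invoking the SGD bound (Theorem~\ref{thm:SGD_Mini} here, or the underlying result in~\cite{lan2012optimal}) with this reduced variance, followed by the change of variables $t = t'/(B+\mu)$, is how~\eqref{eqn:thm.DMB.1} is obtained. Your derivation of~\eqref{eqn:thm.DMB.2} from the tighter per-iteration bound (rather than from the already-loosened~\eqref{eqn:thm.DMB.1}) is also the right move, since the $(B+\mu)$ prefactor in~\eqref{eqn:thm.DMB.1} is too crude on the $\sigma$-term to recover the leading constant $2D_\mathcal{W}\sigma$.

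The two caveats you flag are real but not obstructions. The constants and the decaying stepsize $\eta_t = 1/(L + (\sigma/D_\mathcal{W})\sqrt{t})$ come from the analysis in~\cite{dekel2012optimal} (which in turn builds on~\cite{lan2012optimal}), so a direct appeal there settles the first point. On the second point, the DMB output as stated in Algorithm~\ref{algo:DMB} is indeed the last iterate $\bw_t$, but the result in~\cite{dekel2012optimal} is stated for the (weighted) averaged iterate; this is a presentation slippage in the overview paper rather than a gap in your argument.
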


It can be seen from Theorem~\ref{thm:DMB} that the DMB algorithm results in near-optimal convergence rate of $O(1/\sqrt{t'})$, which corresponds to speed-up by a factor of $O(B)$, in two cases. First, when $R_s \leq BR_e$ and thus $\mu \equiv 0$, it can be seen from \eqref{eqn:thm.DMB.1} that this speed-up is obtained as long as $B=O(\sqrt{t'})$. Second, even when $R_s > BR_e$ and therefore $\mu = (\tfrac{R_s}{R_e} - B) > 0$, \eqref{eqn:thm.DMB.1} guarantees the convergence speed-up as long as $B=o(\sqrt{t'})$ and $\mu = o(B)$. Stated differently, the speed-up can be obtained provided the streaming rate ($R_s$) does not exceed the effective processing rate per sample ($BR_e$) by too much, i.e., $\frac{R_s}{BR_e} = O(1)$.
%
% *****WUB: Replaced the following paragraph by the previous paragraph*****
% From this result we can conclude that in order to have a near-optimal solution after observing $t'$ samples we need to choose mini-batch size as $B=O(\sqrt{t'})$. Furthermore, this result is also applicable to practical situations when computational and communication resources cannot process all the samples i.e., $R_s/B>R_e$, which results in discarding $\mu>0$ samples per iteration. Result in Theorem~\ref{thm:DMB} shows that as long as the mismatch between data arrival rate and effective processing rate $R_s/B-R_e$ is small enough such that $\mu=o(B)$ we will achieve order-optimal convergence rate.

\subsection{\revise{Numerical Experiments for the DMB Algorithm}}\label{subsec:numerical_DMB}
\revise{We demonstrate the effectiveness of the scaling laws implied by Theorem~\ref{thm:DMB} by using the DMB algorithm to train a binary linear classifier (supervised learning problem) from streaming (labeled) data using \emph{logistic regression}~\cite{bishop.book06}. To this end, we take the labeled data as the tuple $\bz := (\bx,y)$ with $\bx \in \R^d$ and the labels $y \in \{-1,1\}$, define the regression model as $\bw := (\wtbw, w_0) \in \R^d \times \R$, and recall that the convex and smooth loss function for logistic regression can be expressed as $\ell(\bw,\bz) = \ln\left(1 + \exp(-y(\wtbw^\tT\bx + w_0))\right)$. Note that the optimal batch solution for logistic regression corresponds to the maximum likelihood estimate of the ground-truth regression coefficients that generate data $\bz = (\bx,y)$~\cite{bishop.book06}.}

\revise{The experimental results reported in this section correspond to $d=5$ and are averaged over 50 Monte Carlo trials. In order to generate data for each trial, we first generate ground-truth regression parameters via a random draw from the standard normal distribution, $\bw^* = (\wtbw^*, w_0^*) \sim \cN(\bzero, \bI)$. Next, we generate data samples as independent draws from another standard normal distribution, $\bx_{t'} \sim \cN(\bzero, \bI)$, and generate the corresponding labels $y_{t'}$ as independent draws from the Bernoulli distribution induced by the regression coefficients, i.e.,}
\begin{align}
    \revise{\Pr(y_{t'}=1 | \bx_{t'}) = 1/(1+ \exp{(-(\wtbw^*{}^{\tT}\bx_{t'} + w_0^*))}).}
\end{align}

\begin{figure}[t]
	\centering
	\subfigure[Impact of the mini-batch size on the convergence rate of the DMB algorithm for the resourceful regime. Note that the $B=1$ plot is effectively standard SGD.]{
		\includegraphics[width=0.45\columnwidth]{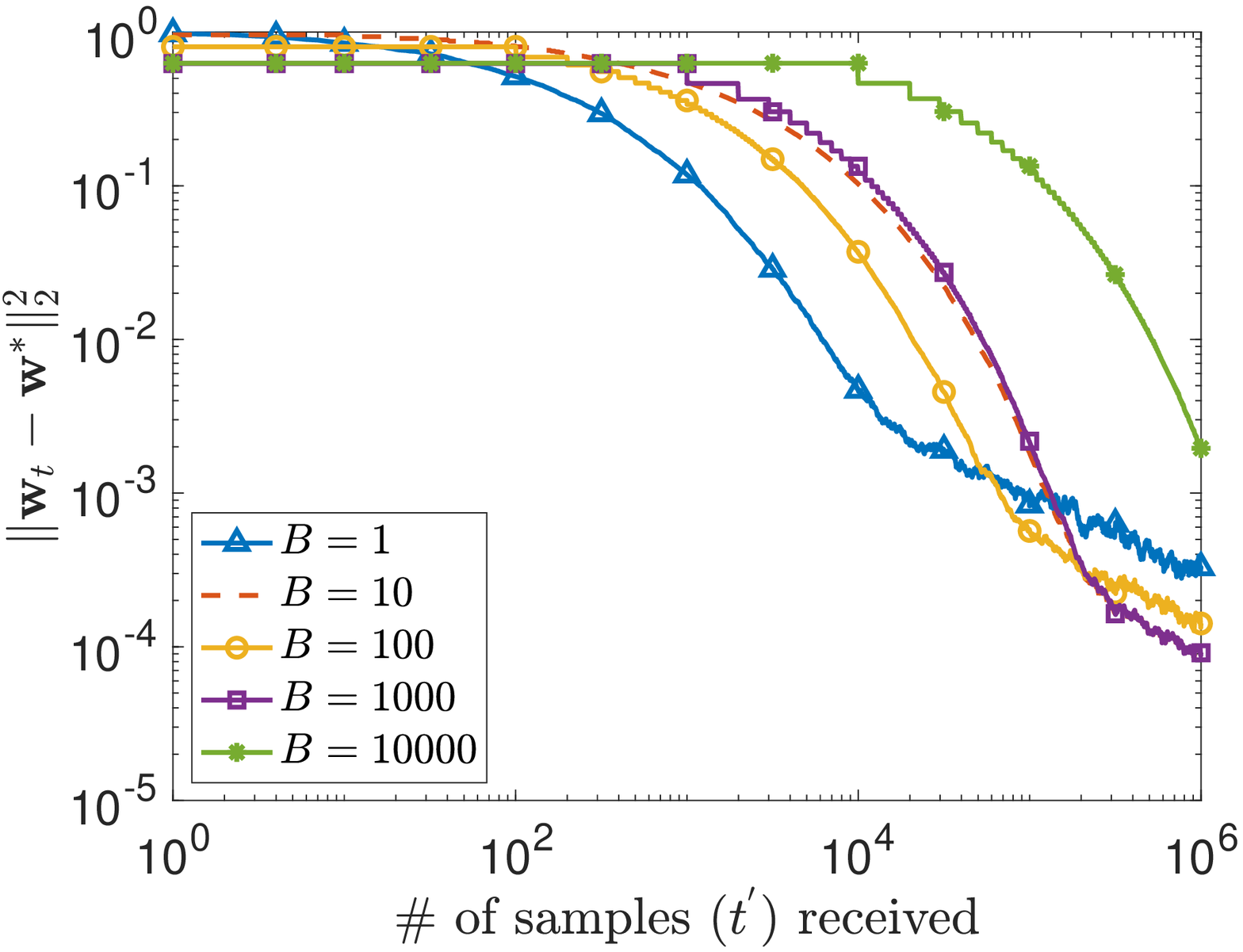}
		\label{fig:Synthetic_regression_NoLatency}}
	\qquad
	\subfigure[Performance of the DMB algorithm in a resource-constrained regime (i.e., $R_s > B R_e$), which causes loss of $\mu$ samples per iteration; here, $(N,B)=(10,500)$.]{
		\includegraphics[width=0.45\columnwidth]{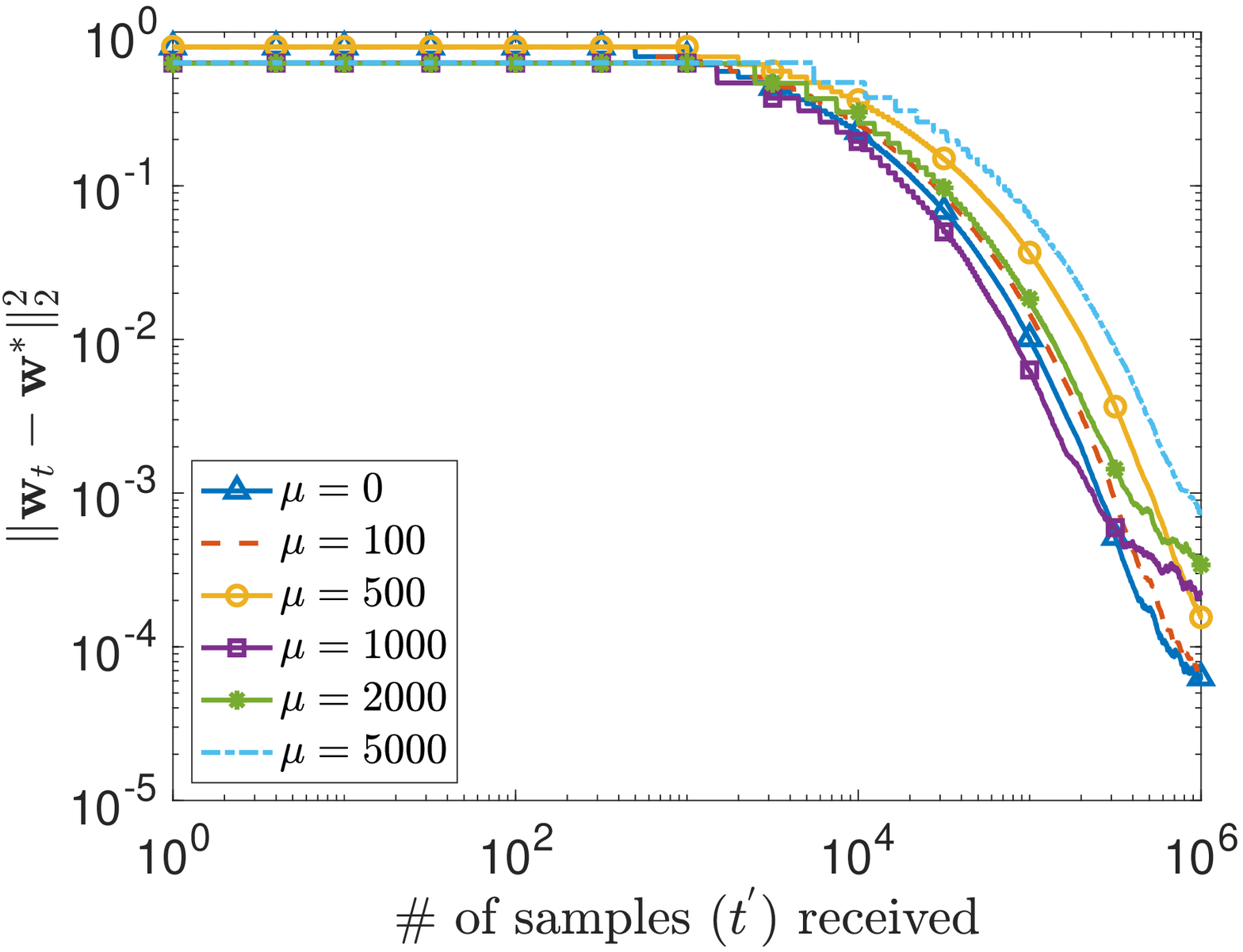}
		\label{fig:Synthetic_regression_Latency}}
	\caption{Convergence behavior of the DMB algorithm for the case of synthetic data under two scenarios: (a) No data loss ($\mu = 0$) and (b) loss of $\mu > 0$ samples per algorithmic iteration.}
\end{figure}

\revise{We report results of two experiments for the distributed, streaming framework of Section~\ref{sec:ProblemFormulation}. The first experiment deals with the resourceful regime, i.e., $R_s \leq BR_e$, and uses mini-batches of size $B \in \{1, 10, 100, 1000\}$. The results, shown in Fig.~\ref{fig:Synthetic_regression_NoLatency}, are obtained for stepsize of the form $c/\sqrt{t}$ (as prescribed by Theorem~\ref{thm:DMB}), where the corresponding value of $c$ chosen for different batch sizes is $c \in \{0.1, 0.1, 0.5, 1, 1\}$. In order to select these values of $c$, we ran the experiment for multiple choices of $c$ and picked the values that achieved the best results. Note that the results in Fig.~\ref{fig:Synthetic_regression_NoLatency} correspond to the optimality gap $\|\bw_t - \bw^{*}\|_2^2$ of the iterates from the ground truth; since the logistic loss is Lipschitz continuous, this trivially upper bounds the \emph{square} of the excess risk $f(\bw_t) - f(\bw^*)$. It can be seen from these results that, as predicted by Theorem~\ref{thm:DMB}, the estimation error $\|\bw_t - \bw^{*}\|_2^2$ after $t=t'/B$ iterations of the DMB algorithm is roughly on the order of $O(1/t')$ for $B \in \{1, 10, 100, 1000\}$, while it is worse by an order of magnitude for $B=10^4>\sqrt{t'}$.} %As predicted by Theorem~\ref{thm:DMB} we can see that after $t=t'/B$ iterations of DMB algorithm $\|\bw_t - \bw^{*}\|_2^2$ is on the order of $O(1/\sqrt{t'})$ for $B=\{1, 10, 100, 1000\}$, while for $B=10^4>\sqrt{t'}$ the error in parameter value is worse by an order of magnitude.

\revise{Next, we demonstrate the performance of the DMB algorithm for resource-constrained settings, i.e., $R_s > B R_e$, which causes the algorithm to discard $\mu = (R_s/R_e - B)$ samples per iteration. The experiment for this setting corresponds to a network of 10 nodes ($N=10$) with network-wide mini-batch of size $B=500$ (i.e., $B/N=50$). We consider different mismatch factors between streaming, processing, and communication rates in this experiment, which result in the number of samples being discarded as $\mu \in \{0, 100, 500, 1000, 2000, 5000\}$. The results are plotted in Fig.~\ref{fig:Synthetic_regression_Latency}, which shows that the error $\|\bw_t - \bw^{*}\|_2^2$ for $\mu=100$ is comparable to that for $\mu=0$ and progressively worsens as $\mu$ increases from $\mu=500$ to $\mu=5000$.}

\subsection{Distributed Mini-batched Streaming PCA}\label{subsec:NonconvexMinibatch}
Mini-batching and variance-reduction techniques have also been utilized for nonconvex stochastic optimization problems in both centralized and distributed settings~\cite{allen2016variance,zhang2016riemannian,reddi2016stochastic,allen2016improved,sato2017riemannian,XinKarEtAl.ISPM20,li2019communication,sun2019improving,assran2018stochastic}. But these and similar works typically either only guarantee convergence to first-order stationary points~\cite{allen2016variance,assran2018stochastic} and/or they are not applicable to the single-pass SA setting~\cite{allen2016improved,reddi2016stochastic,zhang2016riemannian,sato2017riemannian,li2019communication,sun2019improving}. In the following, we focus on the structured nonconvex SA problem of estimating the top eigenvector of a covariance matrix from fast streaming i.i.d.~data samples (i.e., the streaming $1$-PCA problem). Global convergence guarantees for this problem, as noted in Section~\ref{subsec:Optimization_ML}, have been derived in the literature for ``slow'' data streams. Our discussion here revolves around the \emph{distributed mini-batch Krasulina} (\DMK) algorithm that has been recently proposed and analyzed in~\cite{raja2020distributed} for the distributed mini-batch framework of Section~\ref{sec:ProblemFormulation} for fast streaming data. %In the following we describe distributed mini-batch Krasulina (\DMK) algorithm (Algorithm~\ref{algo:DMB_Krasulina}) proposed in \cite{raja2020distributed}, that is shown to converge to the global optimum for top eigenvector computation of a covariance matrix ($1$-PCA).

The \DMK~algorithm (see Algorithm~\ref{algo:DMB_Krasulina}) can be seen as a slight variation on the DMB algorithm for solving the $1$-PCA problem from fast streaming data. In particular, \DMK~is nearly identical to the DMB algorithm except for the fact that the generic gradients $\bg_{n,t}$ and $\bg_t$ in Algorithm~\ref{algo:DMB} are replaced by pseudo-gradient terms $\bxi_{n,t}$ and $\bxi_t$, respectively, in Algorithm~\ref{algo:DMB_Krasulina}. Therefore, the implementation details provided for the DMB algorithm in Section~\ref{subsec:ConvexMinibatch} also apply to \DMK. Nonetheless, the analytical tools utilized by \cite{raja2020distributed} to theoretically characterize the interplay between the solution accuracy of \DMK~and different system parameters differ greatly from those utilized for Theorem~\ref{thm:DMB}.
%
% *****WUB: Replaced the following paragraph by the previous paragraph*****
%\subsubsection{Data Model}\label{subsec:DataModel}
%As already explained in Section~\ref{subsec:Optimization_ML}, minimizing \eqref{eqn:PCA_Loss} from a distributed data stream is equivalent to finding the top principal component in an online manner. Here, we present \DMK~algorithm (Algorithm~\ref{algo:DMB_Krasulina}), which can be seen as an instance of DMB algorithm for solving specific problem of $1$-PCA. We can see \DMK~is very similar to the DMB algorithm except for Step~4, where update is defined explicitly instead of generic gradient term. Therefore, apart from Step~4, the description of DMB algorithm provided in Section~\ref{subsec:ConvexMinibatch} still holds.

\begin{algorithm}[t]
    \textbf{Require:} Same as the DMB algorithm in Algorithm~\ref{algo:DMB}\\
    \textbf{Input:} Same as the DMB algorithm in Algorithm~\ref{algo:DMB}\\
    %\textbf{Input:} Incoming data streams at $N$ processors, expressed as $\left\{\bz_{i,t} \stackrel{\text{i.i.d.}}{\sim} \cD\right\}_{i=1,t \in \Z_+}^N$, and a stepsize sequence $\left\{\gamma_t \in \R_+\right\}_{t \in \Z_+}$\\
    \textbf{Initialize:} All compute nodes initialize with the same $\bw_0\in\R^d$ randomly generated over the unit sphere
%	\algsetup{indent=1em}
	\begin{algorithmic}[1]
	\For{$t=1,2,\dots$,}
		\State $\forall n \in \{1, \dots, N\}, \ \bxi_{n,t} \leftarrow \bzero \in \R^d$ \label{alg:DMK.label.1}
		\For{$b=1,\dots,B/N$} \Comment{Node $n$ receives the mini-batch $\{\bz_{n,b,t}\}_{b=1}^{B/N}$ and updates $\bxi_{n,t}$ locally} \label{alg:DMK.label.3}
    		\State $\forall n \in \{1,\dots,N\}, \ \bxi_{n,t} \gets \bxi_{n,t} + \bz_{n,b,t}\bz_{n,b,t}^{\tT} \bw_{t-1}-\frac{\bw_{t-1}^{\tT}\bz_{n,b,t}\bz_{n,b,t}^{\tT}\bw_{t-1} \bw_{t-1}}{\|\bw_{t-1}\|_2^2}$
		\EndFor \label{alg:DMK.label.4}
		\State Compute $\bxi_t \leftarrow \frac{1}{N}\sum_{n=1}^{N}\bxi_{n,t}$ in the network using exact averaging \label{alg:DMK.label.2}
		\State Update the eigenvector estimate in the network as follows: $\bw_t \leftarrow \bw_{t-1}+\eta_t \bxi_t$ \label{alg:DMK.label.5}
        \If{$R_s = \left(B + \mu\right)R_e$} \Comment{Slight under-provisioning of compute/communications resources}
            \State The system receives $(B + \mu)$ additional data samples during execution of Steps~\ref{alg:DMK.label.1}--\ref{alg:DMK.label.5}, out of which $\mu \in \Z_+$ samples are discarded at the splitter
        \EndIf
		\EndFor
	\end{algorithmic}
	{\bf Return:} An estimate $\bw_t$ of the eigenvector $\bw^*$ associated with $\lambda_1(\bSigma)$ after receiving $t'=(B+\mu)t$ samples
	\caption{Distributed Mini-batch Krasulina (\DMK) Algorithm~\cite{raja2020distributed}}
	\label{algo:DMB_Krasulina}
\end{algorithm}

In order to discuss the convergence behavior of \DMK, we recall the assumptions stated in Section~\ref{subsec:Optimization_ML} for the $1$-PCA problem. Specifically, the i.i.d.~data samples $\bz_{t'}$ have zero mean and are bounded almost surely by some positive constant $\kappa$, i.e., $\E\{\bz_{t'}\}=\bzero$ and $\forall t', \|\bz_{t'}\|_2\leq \kappa$. Notice that Steps~\ref{alg:DMK.label.3}--\ref{alg:DMK.label.2} in Algorithm~\ref{algo:DMB_Krasulina} lead to an implicit computation of an unbiased estimate of the population covariance matrix $\bSigma = \E_{\bz \sim \cD}\{\bz\bz^\tT\}$ from the network-wide mini-batch of $B$ samples, which we denote by $\bA_{t}:=(1/B)\sum_{n=1}^N\sum_{b=1}^{B/N}\bz_{n,b,t} \bz_{n,b,t}^{\tT}$. The results for \DMK~depend on the variance of this unbiased sample covariance, which is defined as follows.
%
% *****WUB: Replaced the following paragraph by the previous paragraph*****
% Despite these striking similarities between DMB and \DMK, the later requires added assumptions on data to work. Specifically, the data distribution $\cD$ needs to be such that $\E\{\bz_{t'}\}=0$ and $\|\bz_{t'}\|_2\leq m$, where $m$ is some positive constant. Further, we associate with each data sample $\bz_{t'}$ a rank-one random matrix $\bA_{t'}:=\bz_{t'} \bz_{t'}^{\tT}$, which is a trivial unbiased estimate of the population covariance matrix $\bSigma$. Similary, for mini-batching in Steps~5--7 of \DMK~we get an unbiased estimator $\bA_{t}:=(1/B)\sum_{n}\sum_{b}\bz_{n,b,t} \bz_{n,b,t}^{\tT}$. We then define the variance of this unbiased estimate as follows.
\begin{definition}[Variance of sample covariance in \DMK]\label{def:SampleVarianceMinibatch}
The variance of the distributed sample covariance matrix $\bA_t$ in \DMK~is defined as follows: $$\sigma_B^2:=\mathbb{E}_{\cD}\left\{\left\|\frac{1}{B}\sum_{n=1}^{N}\sum_{b=1}^{B/N}\bz_{n,b,t}\bz_{n,b,t}^\tT - \bSigma\right\|_F^2\right\}.$$
\end{definition}
Note that $\sigma_B^2$ for $B=1$ corresponds to the single-sample covariance noise variance $\sigma^2$ defined in Section~\ref{sec:ProblemFormulation}. It is also straightforward to show that $\sigma_B^2 \leq \sigma^2/B$. And since all moments of the probability distribution $\cD$ exist by virtue of the norm boundedness of $\bz_{t'}$, the variance $\sigma_B^2$ as defined above exists and is finite. We now provide the main result for \DMK~from \cite{raja2020distributed} that expresses the convergence behavior of \DMK~in terms of the mini-batched noise variance $\sigma_B^2$.
%
% *****WUB: Replaced the following paragraph(s) by the previous sentence*****
% Next, we will define the notion of excess risk that will be used to provide results for \DMK. Letting $\bw^*$ denote the solution, the excess risk has a specific form, which is often termed the {\em potential function} in the literature. For estimate $\bw$ of the top eigenvector, the potential function is defined as
% \begin{align}\label{eqn:Error}
%     \Psi(\bw) := 1-\frac{(\bw^{\tT}\bw^*)^2}{\|\bw\|^2}.
% \end{align}
% It is straightforward to verify that $\Psi(\bw) = f(\bw) - f(\bw^*)$. Indeed, we can see that
% \begin{align}
%     f(\bw) - f(\bw^*)&=\frac{-\bw^T \bSigma \bw}{\|\bw\|_2^2}+\frac{\bw^{{*}^{\tT}}\bSigma \bw^*}{\|\bw^*\|_2^2}\\
%         &= \lambda_1 - \frac{\bw^T \bSigma \bw}{\|\bw\|_2^2}\\
%         &= \lambda_1 - \sum_{i=1}^d\frac{\lambda_i (\bw^T \bw^*_i )^2}{\|\bw\|_2^2}\\
%         &\leq \lambda_1 - \lambda_1\frac{(\bw^{\tT}\bw^*)^2}{\|\bw\|_2^2}\\
%         &\leq \lambda_1 \Psi(\bw).
% \end{align}
%
% \subsubsection{Main Result}\label{subsubsec:MainResultDM_Krasulina}
% The theoretical result for \DMK~is based on understanding the rate at which the \emph{potential function} $\Psi(\bw_t)$ of \DMK~converges to zero as a function of the number of algorithmic iterations $t$. Following is the main result from \cite{raja2020distributed} that provides an optimal choice of $B$ for solving $1$-PCA problem.
\begin{theorem}\label{thm:DMK}
    Let the i.i.d.~data samples be bounded, i.e., $\forall t', \|\bz_{t'}\|_2 \leq \kappa$, define $\gap := \lambda_1(\bSigma)-\lambda_2(\bSigma) > 0$, fix any $\delta \in (0,1)$, and pick $c := \frac{c_0}{2\gap}$ for any $c_0 > 2$. %Under the assumptions that $\|\bz_{t'}\|_2\leq m$ (for any $m>0$), $\E\{\bz_{t'}\}=0$, $\bSigma:=\E\{\bz_{t'}\bz_{t'}^{\tT}\}$, and $\gap:=\lambda_1(\bSigma)-\lambda_2(\bSigma)>0$, fix any $\delta \in (0,1)$ and pick $c:=\frac{c_0}{2\gap}$ for any $c_0 > 2$.
    Next, suppose $R_s \leq BR_e$ (i.e., no discarded data) and define
	\begin{align}\label{eqn:LowerboundL}
	   Q_{1}:= \frac{64 ed\kappa^4 \max(1, c^2)}{\delta^2}\ln\frac{4}{\delta},\quad Q_{2}:=\frac{512 e^2 d^2 \sigma_B^2\max(1, c^2)}{\delta^4}\ln\frac{4}{\delta},
	\end{align}
	pick any $Q\geq Q_1+Q_2$, and choose the stepsize sequence as $\eta_t := c/(Q+t)$. Then, we have for \DMK~that there exists a sequence $(\Omega_{t}^{'})_{t \in \Z_+}$ of nested subsets of the sample space $\Omega$ such that $\bbP\left(\cap_{t>0}\Omega_t^{'}\right)\geq 1-\delta$ and
	\begin{align}\label{eqn:FinalResult}
	\E_t\left\{f(\bw_t)\right\} - f(\bw^*) \leq C_1\Big(\frac{Q + 1}{t + Q + 1}\Big)^{\frac{c_0}{2}} + C_2\Big(\frac{\sigma_B^2}{t + Q + 1}\Big),
	\end{align}
    where $\E_t$ is the conditional expectation over $\Omega_t^{'}$, and $C_1$ and $C_2$ are constants defined as
	$$C_1 := \frac{\lambda_1(\bSigma)}{2}\Bigg(\frac{4ed}{\delta^2}\Bigg)^{\frac{5}{2\ln2}}e^{2c^2\lambda_1^2(\bSigma)/Q}\quad\textnormal{and}\quad C_2 := \frac{2 c^2 \lambda_1(\bSigma)e^{(c_0+2c^2\lambda_1^2(\bSigma))/Q}}{(c_0-2)}.$$
\end{theorem}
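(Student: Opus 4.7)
The plan is to reduce \DMK{} to a variant of centralized Krasulina's method driven by a mini-batched sample covariance, and then rerun a Balsubramani-style potential argument, carefully tracking the mini-batched variance $\sigma_B^2$ in place of the single-sample variance implicit in Theorem~\ref{thm:Oja}.

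First, observe that Steps~\ref{alg:DMK.label.3}--\ref{alg:DMK.label.2} of Algorithm~\ref{algo:DMB_Krasulina} collectively produce $\bxi_t = \bA_t\bw_{t-1} - \bigl(\bw_{t-1}^\tT \bA_t \bw_{t-1}/\|\bw_{t-1}\|_2^2\bigr)\bw_{t-1}$, where $\bA_t := (1/B)\sum_{n,b} \bz_{n,b,t}\bz_{n,b,t}^\tT$ is the network-wide empirical covariance. Three facts make $\bA_t$ a drop-in replacement for $\bz_t\bz_t^\tT$ in the Krasulina recursion~\eqref{eqn:Krasulina}: ($i$) $\E\{\bA_t\} = \bSigma$; ($ii$) $\|\bA_t\|_2 \leq \kappa^2$ almost surely, by the triangle inequality and the boundedness assumption $\|\bz_{n,b,t}\|_2 \leq \kappa$; and ($iii$) $\E\{\|\bA_t - \bSigma\|_F^2\} = \sigma_B^2$, by Definition~\ref{def:SampleVarianceMinibatch}. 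Thus every bound in a centralized Krasulina analysis that was controlled by $\|\bz_t\bz_t^\tT\|_2 \leq \kappa^2$ still applies, while every bound controlled by per-iteration variance must be re-expressed in terms of $\sigma_B^2$ rather than the looser $\kappa^4$-type surrogate.

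Following \cite{Balsubramani2015}, introduce the potential $\Psi_t := 1 - (\bw_t^\tT \bw^*)^2/(\|\bw_t\|_2^2\|\bw^*\|_2^2)$, i.e.\ the squared sine of the angle between $\bw_t$ and the top eigenvector $\bw^*$ of $\bSigma$. Two ingredients are needed. The first is the \emph{risk comparison} $f(\bw_t) - f(\bw^*) \leq \lambda_1(\bSigma)\,\Psi_t$, which reduces the theorem to a bound on $\E_t\{\Psi_t\}$. The second is a \emph{one-step quasi-contraction}: expanding $\Psi_{t+1}$ via $\bw_t \leftarrow \bw_{t-1}+\eta_t\bxi_t$ and taking conditional expectations yields, to first order in $\eta_t$, a factor $(1 - c_0\eta_t\gap)$ times $\Psi_t$ plus an $\eta_t^2$ residual whose dominant piece is proportional to $\sigma_B^2$ and whose subdominant pieces are controlled by $\kappa^4$. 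The two pieces of $Q \geq Q_1 + Q_2$ in~\eqref{eqn:LowerboundL} are precisely what is needed to absorb the almost-sure and variance contributions, respectively, into the contraction; this accounting is what produces the ``$Q_1+Q_2$'' split rather than the single $Q$ of Theorem~\ref{thm:Oja}.

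To make the contraction global rather than valid only ``on a well-behaved sample path,'' define a nested family $\Omega_t' \subseteq \Omega_{t-1}'$ of events on which the iterate stays out of a bad neighborhood of the orthogonal complement of $\bw^*$ (so that the local linearization of $\Psi_t$ remains valid and $1/\|\bw_t\|_2^2$ does not blow up), and use a Doob-style martingale concentration argument---essentially the one in~\cite{Balsubramani2015}---to show $\bbP(\cap_{t>0}\Omega_t') \geq 1-\delta$; the $d^2$ and $\delta^{-4}$ dependencies in $Q_2$ are forced by this step, while the $d$ and $\delta^{-2}$ dependencies in $Q_1$ come from the almost-sure control. Conditional on $\Omega_t'$, chaining the one-step recursion with stepsize $\eta_t = c/(Q+t)$ and solving the resulting linear-fractional recurrence by a standard induction delivers the two-term bound~\eqref{eqn:FinalResult}: the $\bigl(\tfrac{Q+1}{t+Q+1}\bigr)^{c_0/2}$ factor is the attenuated initial potential, and the $\sigma_B^2/(t+Q+1)$ factor is the variance floor, with the explicit $C_1, C_2$ inherited from these manipulations. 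I expect the main obstacle to be the probabilistic step: ensuring that the martingale concentration scales with $\sigma_B^2$ rather than with the pessimistic $\kappa^2$ afforded by the almost-sure bound, as it is this separation that makes the guarantee meaningfully \emph{distributed} and not merely a restatement of Theorem~\ref{thm:Oja}.
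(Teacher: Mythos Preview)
The paper does not actually prove Theorem~\ref{thm:DMK}; it is an overview paper that states the result and attributes the full proof to~\cite{raja2020distributed}. What the paper does say about the argument---that \DMK{} under exact averaging is equivalent to centralized Krasulina driven by the mini-batched sample covariance $\bA_t$, and that the convergence analysis is a ``variance-based'' refinement of the Balsubramani--Dasgupta--Freund argument underlying Theorem~\ref{thm:Oja}---is precisely the route you sketch. Your reduction (replacing $\bz_t\bz_t^\tT$ by $\bA_t$, retaining the almost-sure bound $\|\bA_t\|_2\le\kappa^2$, and replacing the per-iteration variance surrogate by $\sigma_B^2$) and your identification of the $Q_1+Q_2$ split as separating the almost-sure and variance contributions match the structure the paper describes; so your plan is aligned with the cited proof, though the detailed martingale concentration work you flag as the main obstacle would have to be checked against~\cite{raja2020distributed} rather than the present paper.
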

Theorem~\ref{thm:DMK} is similar in flavor to Theorem~\ref{thm:DMB} in the sense that their respective excess risk bounds in \eqref{eqn:thm.DMB.1} and \eqref{eqn:FinalResult} have (asymptotically) dominant error terms that involve the noise variance (gradient noise for the convex problem and covariance noise for the $1$-PCA problem). In particular, since $\sigma_B^2 \leq \sigma^2/B$, the excess risk $f(\bw_t) - f(\bw^*)$ in \DMK~can be driven down faster by increasing the mini-batch size $B$ up to a certain limit, as noted in the next result. But the two theorems also have some key differences, which can be attributed to \DMK's focus on global convergence for the nonconvex $1$-PCA problem. The first difference is that the excess risk in \eqref{eqn:FinalResult} is being bounded in expectation over a subset of the sample space, whereas the expectation in Theorem~\ref{thm:DMB} is over the whole sample space $\Omega$. The second difference is that the result in Theorem~\ref{thm:DMB} is independent of the ambient dimension $d$, whereas the result for the $1$-PCA problem has $d^4$ dependence.
%
% *****WUB: Commented out as redundant*****
% We would like to comment that this result in not tight in terms of dependence on dimension as Allen-Zhu and Li \cite{allen2016first} have proved a lower bound of $\log(d)$ which is achieved by \cite{allen2016first,jain2016streaming}. Issue here is that work in \cite{allen2016first} does not consider distributed settings and such extension does not seem trivial, while the weakness in Jain et al.'s result is that probability of success cannot be improved beyond $3/4$ for streaming settings. Another parameter of interest in PCA problem is the eigen gap $(\lambda_1(\bSigma) - \lambda_2(\bSigma))$, and Theorem~\ref{thm:DMK} achieves optimal dependence of $1/(\lambda_1(\bSigma)-\lambda_2(\bSigma))^2$.

We now provide a corollary of Theorem~\ref{thm:DMK} that highlights the speed-up gains associated with \DMK~as long as the mini-batch size $B$ does not exceed a certain limit.
\begin{corollary}\label{cor:BatchNoLatency}
Let the parameters and constants be as specified in Theorem~\ref{thm:DMK}. Next, pick parameters $(Q_1', Q_2')$ such that $Q_1 ' \geq Q_1$ and $Q_2' \geq Q_2/\sigma_B^2$, and denote the total number of samples processed by \DMK~as $t' := tB$. Then, as long as assumptions from Theorem~\ref{thm:DMK} hold and the network-wide mini-batch size satisfies $B \leq (t')^{1-\tfrac{2}{c_0}}$, there exists a sequence $(\Omega_{t}^{'})_{t \in \Z_+}$ of nested subsets of the sample space $\Omega$ such that $\bbP\left(\cap_{t>0}\Omega_t^{'}\right)\geq 1-\delta$ and
    \begin{align}
	   \E_{t}\left\{f(\bw_t)\right\} - f(\bw^*) \leq c_0 C_1 \frac{{Q_1'}^{c_0/2}}{t'} + c_0 C_1 \Bigg(\frac{\sigma^2 Q_2'}{t'}\Bigg)^{c_0/2} + \frac{C_2 \sigma^2}{t'}. %,
	\end{align}
	%where $\E_t$ is the conditional expectation over $\Omega_t^{'}$, and $C_1$ and $C_2$ are constants as defined in Theorem~\ref{thm:DMK}.
\end{corollary}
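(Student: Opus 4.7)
The plan is to apply Theorem~\ref{thm:DMK} directly with a judicious choice of the free parameter $Q$, and then to separately bound the two resulting terms using the mini-batching identity $\sigma_B^2 \leq \sigma^2/B$ and the standing assumption $B \leq (t')^{1-2/c_0}$, recalling that $t' = tB$.

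First, I would set $Q := Q_1' + \sigma_B^2 Q_2'$. The hypotheses $Q_1' \geq Q_1$ and $Q_2' \geq Q_2/\sigma_B^2$ immediately give $Q \geq Q_1 + Q_2$, which validates the application of Theorem~\ref{thm:DMK} and yields
\begin{align*}
\E_t\{f(\bw_t)\} - f(\bw^*) \leq C_1 \Big(\frac{Q+1}{t+Q+1}\Big)^{c_0/2} + C_2\, \frac{\sigma_B^2}{t+Q+1}
\end{align*}
on the same high-probability event $\cap_{t>0}\Omega_t'$. The second term is the easier of the two: using $t + Q + 1 \geq t = t'/B$ together with $\sigma_B^2 \leq \sigma^2/B$, it collapses to $C_2 \sigma^2/t'$, which matches the third summand of the claimed bound.

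The bulk of the work is in the first term. I would proceed in two steps. First, since $t + Q + 1 \geq t \geq (t')^{2/c_0}$ (this is where the assumption $B \leq (t')^{1-2/c_0}$ is crucial), we obtain $(t+Q+1)^{c_0/2} \geq t'$, and also $(t+Q+1)^{c_0/2} \geq (Bt)^{c_0/2} = (t')^{c_0/2}$. Second, since $Q_1' \geq Q_1 \geq 1$, a power-mean (convexity) inequality of the form $(a_1 + a_2 + 1)^{c_0/2} \leq K(c_0)\bigl(a_1^{c_0/2} + a_2^{c_0/2}\bigr)$ applied to $(Q+1)^{c_0/2}$ splits the numerator into a $Q_1'^{c_0/2}$ piece and a $(\sigma_B^2 Q_2')^{c_0/2}$ piece. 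The first piece, divided by $(t+Q+1)^{c_0/2} \geq t'$, produces the $Q_1'^{c_0/2}/t'$ term. For the second piece, I would write $(\sigma_B^2 Q_2')^{c_0/2}/(t+Q+1)^{c_0/2} \leq (\sigma^2 Q_2'/B)^{c_0/2}/t^{c_0/2} = (\sigma^2 Q_2')^{c_0/2}/(Bt)^{c_0/2} = (\sigma^2 Q_2'/t')^{c_0/2}$, which is exactly the middle summand of the claimed bound.

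The main obstacle is book-keeping the multiplicative constant that emerges from the split of $(Q+1)^{c_0/2}$: the claim absorbs everything into a clean factor of $c_0$, whereas the natural convexity inequality produces a $c_0$-dependent constant like $2^{c_0/2-1}$ or $3^{c_0/2}$. Collapsing this constant to $c_0$ (or an even more slack multiple of it) requires either a slightly tighter inequality or using the bound $\max(a,b) \leq a+b$ after a case split on whether $Q_1' \geq \sigma_B^2 Q_2'$ or not; I expect either route to give a constant that can be folded into the $c_0 C_1$ factor of the statement. Once this constant is managed, the three-term bound follows by combining the pieces above.
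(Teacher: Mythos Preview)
Your proposal is correct and follows essentially the same route as the paper: set $Q = Q_1' + \sigma_B^2 Q_2'$, invoke Theorem~\ref{thm:DMK}, split $(Q+1)^{c_0/2}$ via a convexity/power-mean inequality, and then use $\sigma_B^2 \leq \sigma^2/B$, $t = t'/B$, and $B \leq (t')^{1-2/c_0}$ exactly as you describe. Your observation about the constant is apt: the paper handles it just as loosely, writing $2C_1(Q/t)^{c_0/2}$ and then $c_0 C_1[(Q_1'/t)^{c_0/2} + (\sigma_B^2 Q_2'/t)^{c_0/2}]$ without justifying that the split constant collapses to $c_0$ for all $c_0>2$, so this is a cosmetic issue in the stated bound rather than a flaw in your argument.
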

\begin{proof}
	Substituting $t = t'/B$ in \eqref{eqn:FinalResult} and using simple upper bounds yield
	\begin{align*}
	   \E_{t}\left\{f(\bw_t) - f(\bw^*)\right\} \leq C_1\Big(\frac{Q + 1}{Q + t}\Big)^{\frac{c_0}{2}} + C_2\Big(\frac{ \sigma_B^2}{t}\Big) \leq 2 C_1\Big(\frac{Q}{t}\Big)^{\frac{c_0}{2}} + C_2\Big(\frac{\sigma_B^2}{t}\Big).
	\end{align*}
	Next, substituting $Q=Q_1'+\sigma_B^2 Q_2'$ in this expression gives us
	\begin{align}\label{eqn:E_TB}
	   \E_{t}\left\{\Psi_{t}\right\}\leq c_0 C_1\Big(\frac{Q_1'}{t}\Big)^{\frac{c_0}{2}} + c_0 C_1\Big(\frac{\sigma_B^2 Q_2'}{t}\Big)^{\frac{c_0}{2}} + C_2\Big(\frac{\sigma_B^2}{t}\Big).
	\end{align}
    Since $\sigma_B^2 \leq \sigma^2/B$ and $t = t'/B$, \eqref{eqn:E_TB} reduces to the following expression:
    \begin{align*}
	   \E_{t}\left\{f(\bw_t) - f(\bw^*)\right\} \leq c_0 C_1 \Bigg(\frac{B Q_1'}{t'}\Bigg)^{c_0/2}+c_0 C_1 \Bigg(\frac{ \sigma^2 Q_2'}{t'}\Bigg)^{c_0/2}+\frac{C_2 \sigma^2}{t'}.
	\end{align*}
    The proof now follows from the assumption that $B \leq (t')^{1-\tfrac{2}{c_0}}$.
\end{proof}

In words, Corollary~\ref{cor:BatchNoLatency} states that \DMK~achieves the optimal excess risk of $O(1/t')$ for the $1$-PCA problem, which corresponds to a speed-up gain by a factor of $B$, as long as $B=O((t')^{1-\tfrac{2}{c_0}})$ and network resources are provisioned to ensure $R_s \leq B R_e$.

We can also leverage this result to demonstrate how the distributed mini-batch framework of this paper helps us tradeoff computation resources for communication resources. Suppose we are in a compute-rich distributed environment, in which exact averaging requires $R$ rounds of communications, and it is desired to achieve order-optimal risk of $O(1/t')$ for \DMK. This requires that $R_c$ be fast enough to ensure completion of the communications phase within the time between the end of the computation phase and the arrival of next mini-batch of data; using the definitions from Section~\ref{sec:ProblemFormulation}, this means:
\begin{align}\label{eqn:B_bound}
    \frac{R}{R_c} \leq \frac{B}{R_s} - \frac{B}{N R_p} \ \Longrightarrow \ R_c \geq \frac{NRR_sR_p}{B\left(N R_p - R_s\right)}.
\end{align}
We can see from this lower bound that increasing the mini-batch size $B$ up to a certain point, while keeping everything else fixed, relaxes the requirement on the communications rate within the network without affecting the quality of the final solution.
%
% *****WUB: Replaced by the previous paragraph*****
% This result tells us that if we choose $B=O((t')^{1-\tfrac{2}{c_0}})$, optimal excess risk is achieved using \DMK. Now we demonstrate how this result is valuable for network with slow communication links. Using definitions of $R_s$ and $R_p$ from Section~\ref{sec:ProblemFormulation}, in order to finish communication phase before arrival of next mini-batch we need $R_c$ to fast enough to fit all the communications within the amount of time left after computation
% \begin{align*}
%     \textnormal{Communication time}&=\frac{1}{R_c}=(\textnormal{Arrival time for }B\text{ samples})+(\text{Processing time for }B\text{ samples})\\
% &=\frac{B}{R_s}+\frac{B}{N R_p}.
% \end{align*}
% This gives us the following lower bound on $R_c$
% \begin{align}\label{eqn:B_bound}
%     R_c=\Omega\Big(\frac{1}{B}\frac{N R_p - R_s}{NR_s R_p}\Big).
% \end{align}
% From this lower bound we can see that increasing mini-batch size $B$ relaxes the requirement on communication rate to process all the samples. Hence, for networks with slow communication links, choosing a large value of $B$ decreases the minimum required $R_c$ resulting in a broader range of operatibility for \DMK.

We conclude this section by extending Theorem~\ref{thm:DMK} to the under-provisioned setting in which $R_s > B R_e$, possibly due to slower communications links. Similar to our discussion for the DMB algorithm, we express $R_s$ as $R_s = (B+\mu)R_e$ for some $\mu \in \Z_+$ that corresponds to the number of samples that must be discarded at the splitter per iteration due to the mismatch between $R_s$ and $B R_e$. %Finally, we extend result in Theorem~\ref{thm:DMK} for resource constrained settings such that it is not possible to choose a value of $B$ that satisfies both \eqref{eqn:B_bound} and $B=O((t')^{1 - \tfrac{2}{c_0}})$ simultaneously, which will result in $R_s/B>R_e$ leading a situation requiring to discard $\mu=B(R_s/R_e - 1)$ samples per iteration of \DMK.
The following result captures the impact of this data loss on the convergence behavior of \DMK.
\begin{corollary}\label{cor:latency}
Let the parameters and constants be as specified in Corollary~\ref{cor:latency}, and define the final number of algorithmic iterations for \DMK~as $t^\mu :=t'/(B+\mu)$. Then, as long as the assumptions in Theorem~\ref{thm:DMK} hold and the network-wide mini-batch size satisfies $B\leq (t')^{1-\tfrac{2}{c_0}}$, there exists a sequence $(\Omega_{t}^{'})_{t \in \Z_+}$ of nested subsets of the sample space $\Omega$ such that $\bbP\left(\cap_{t>0}\Omega_t^{'}\right)\geq 1-\delta$ and
\begin{align}\label{eqn:BoundLatency}
	\E_{t^\mu}\left\{f(\bw_{t^\mu})\right\} - f(\bw^*) \leq c_0 C_1 \Bigg(\frac{(B+\mu)Q_1'}{t'}\Bigg)^{c_0/2}+c_0 C_1 \Bigg(\frac{(B+\mu) \sigma^2 Q_2'}{Bt'}\Bigg)^{c_0/2}+\frac{C_2 \sigma^2(B+\mu)}{Bt'}. %,
	\end{align}
	%where $\E_t^\mu$ is the conditional expectation over $\Omega_t^{'}$, and $C_1$ and $C_2$ are constants as defined in Theorem~\ref{thm:DMK}.
\end{corollary}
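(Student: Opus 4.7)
The plan is to mirror the proof of Corollary~\ref{cor:BatchNoLatency}, with two bookkeeping adjustments that capture the effect of discarding $\mu$ samples per splitting round. First, the number of algorithmic iterations reduces from $t'/B$ to $t^\mu = t'/(B+\mu)$, since $(B+\mu)$ samples arrive at the splitter per round but only $B$ are retained for computation. Second---and this is the only point at which one must be genuinely careful---the per-iteration mini-batched sample covariance is still computed from exactly $B$ samples, so its variance is governed by $\sigma_B^2$ and the bound $\sigma_B^2 \leq \sigma^2/B$ (rather than $\sigma^2/(B+\mu)$) is what we are entitled to apply.

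Starting from Theorem~\ref{thm:DMK} applied at the algorithmic iteration count $t = t^\mu$, I would first use the elementary estimates $(Q+1)/(t^\mu + Q + 1) \leq 2Q/t^\mu$ and $1/(t^\mu + Q + 1) \leq 1/t^\mu$ to rewrite the right-hand side as
\begin{align*}
\E_{t^\mu}\{f(\bw_{t^\mu})\} - f(\bw^*) \;\leq\; 2C_1 \left(\frac{Q}{t^\mu}\right)^{c_0/2} + C_2 \,\frac{\sigma_B^2}{t^\mu},
\end{align*}
and then substitute $t^\mu = t'/(B+\mu)$ to lift the factor $(B+\mu)$ into the numerators. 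Next, I would set $Q = Q_1' + \sigma_B^2 Q_2'$ and apply the subadditivity inequality $(a+b)^{c_0/2} \leq 2^{c_0/2 - 1}(a^{c_0/2} + b^{c_0/2})$, valid for $c_0 > 2$, to separate the initialization-driven contribution $Q_1'$ from the variance-driven contribution $\sigma_B^2 Q_2'$. Absorbing the $2^{c_0/2}$ numerical factors into the stated $c_0$ constant---exactly as in the proof of Corollary~\ref{cor:BatchNoLatency}---produces a term proportional to $((B+\mu)Q_1'/t')^{c_0/2}$ and a term proportional to $((B+\mu)\sigma_B^2 Q_2'/t')^{c_0/2}$.

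The last step is to invoke $\sigma_B^2 \leq \sigma^2/B$ on both the $(\sigma_B^2 Q_2')^{c_0/2}$ term and the $C_2 \sigma_B^2$ remainder, which converts $(B+\mu)\sigma_B^2/t'$ into $(B+\mu)\sigma^2/(Bt')$ and yields precisely the form claimed in \eqref{eqn:BoundLatency}. The only real obstacle is the discipline of \emph{not} replacing $B$ by $B+\mu$ inside $\sigma_B^2$: it is tempting to regard the effective batch as having grown to $B+\mu$, but the discarded $\mu$ samples never enter any covariance computation and therefore do not participate in variance averaging. Keeping the two roles of the batch size straight---$B+\mu$ governing the iteration count (and hence the $1/t^\mu$ scaling), while $B$ alone governing per-iteration variance reduction---is precisely what produces the $(B+\mu)/B$ inflation factors in the last two terms of \eqref{eqn:BoundLatency}, and it is the clean quantitative signature of the cost incurred by under-provisioning. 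The hypothesis $B \leq (t')^{1-2/c_0}$ is not actively used in the derivation itself; it is retained so that, in the resourceful limit $\mu = 0$, the bound collapses to the $O(1/t')$ rate of Corollary~\ref{cor:BatchNoLatency}.
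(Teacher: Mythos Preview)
Your proposal is correct and takes essentially the same approach the paper intends: the paper does not write out an explicit proof of this corollary, but the argument is clearly meant to mirror that of Corollary~\ref{cor:BatchNoLatency} with the substitution $t = t^\mu = t'/(B+\mu)$ in place of $t = t'/B$, which is exactly what you do. Your emphasis on keeping the two roles of the batch size separate---$(B+\mu)$ for the iteration count, $B$ alone for $\sigma_B^2 \leq \sigma^2/B$---is the key bookkeeping point, and your observation that the hypothesis $B \leq (t')^{1-2/c_0}$ is not actively invoked in deriving \eqref{eqn:BoundLatency} is also accurate.
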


It can be seen from this result that as long as the number of discarded samples per iteration in \DMK~satisfies $\mu = O(B)$, we will have sample-wise order-optimal convergence rate of $O(1/t')$ in the network. This result concerning the impact of discarded samples in under-provisioned distributed systems is similar to the one reported in Theorem~\ref{thm:DMB} for the DMB algorithm.
%
% *****WUB: Replaced by the previous paragraph*****
% From this result, it is easy to see that as long as $\mu=B(R_s/R_e - 1)=O(B)$ we will have order optimal convergence rate. In particular, the expression for $\mu$ tells us that the ratio $R_s/R_e$ needs to stay small relative to $B$ in order to limit discarded samples to $\mu=O(B)$. In the following, these theoretical findings are corroborated with the help of experiments conducted over real-world and synthetic data.

\subsection{Numerical Experiments for DM-Krasulina}\label{subsec:ExperimentsKrasulinas}
In this section, we provide results of numerical experiments on both synthetic and real-world data to demonstrate the impact of mini-batch size on the performance of \DMK's method.

\subsubsection{Synthetic data}\label{subsubsec:Synthetic_DMK}
\begin{figure}[t]
	\centering
	\subfigure[Impact of the mini-batch size on the convergence rate of \DMK~for the resourceful regime. Note that the $B=1$ plot is effectively Krasulina's method.]{
		\includegraphics[width=0.45\columnwidth]{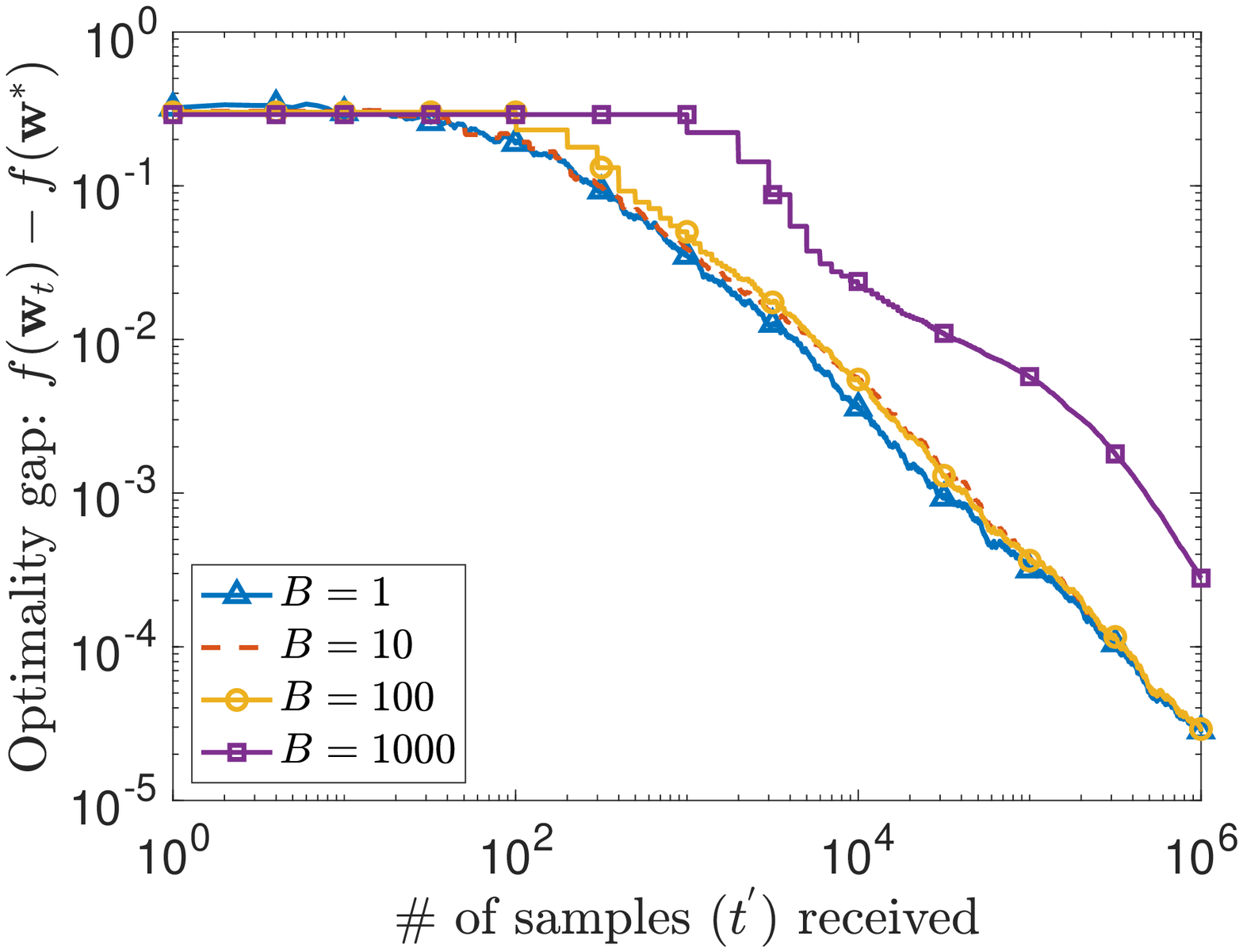}
		\label{fig:SyntheticPCA_NoLatency}}
	\qquad
	\subfigure[Performance of \DMK~in a resource-constrained regime (i.e., $R_s> B R_e$), which causes loss of $\mu$ samples per iteration; here, $(N,B)=(10,100)$.]{
		\includegraphics[width=0.45\columnwidth]{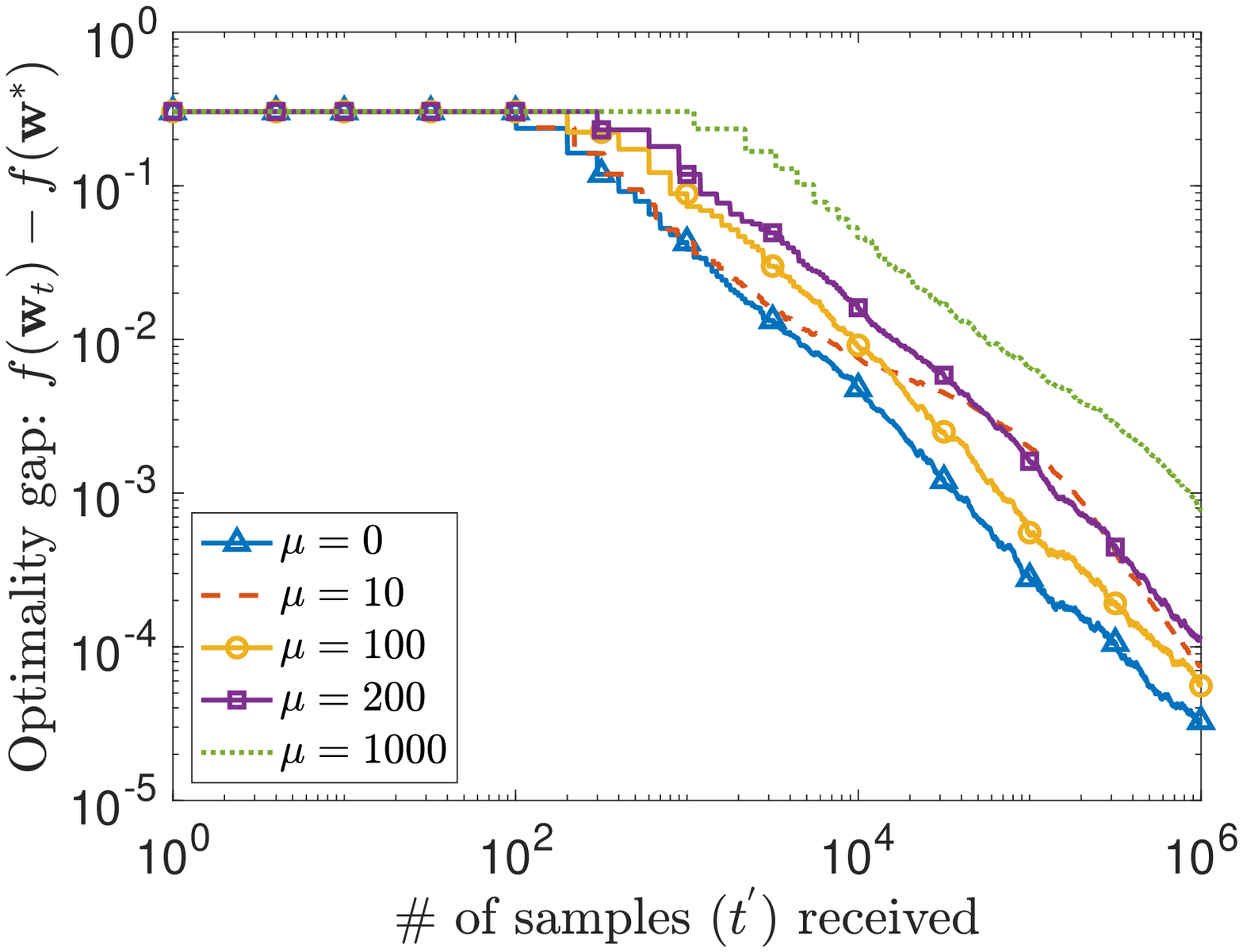}
		\label{fig:SyntheticPCA_Latency}}
	\caption{Convergence behavior of \DMK~for the case of synthetic data under two scenarios: (a) No data loss ($\mu = 0$) and (b) loss of $\mu > 0$ samples per algorithmic iteration.}
\end{figure}

For a covariance matrix $\bSigma\in\R^{10\times 10}$ with $\lambda_1(\bSigma)=1$ and eigengap $\lambda_1(\bSigma) - \lambda_2(\bSigma)=0.1$, we generate $t'=10^6$ samples from a normal distribution $\cN(\bzero, \bSigma)$. The first experiment in this case deals with the resourceful regime, i.e., $R_S \leq BR_e$, with mini-batches of sizes $B \in \{1, 10, 100, 1000\}$. Results of these experiments are shown in Fig.~\ref{fig:SyntheticPCA_NoLatency}, which correspond to stepsize $\eta_t=c/t$ and parameter $c=10$ that was selected after multiple trial-and-error runs. As predicted by Corollary~\ref{cor:BatchNoLatency}, we see the excess risk after $t=t'/B$ iterations of \DMK~is on the order of $O(1/t')$ for $B \in \{1, 10, 100\}$, while it is not optimal anymore for $B=1000$.

Next, we demonstrate the performance of \DMK~for resource constrained settings, i.e., $R_s > B R_e$, which causes the system to discard $\mu := (R_s/R_e - B)$ samples per iteration. Using the same data generation setup as before, we run \DMK~for a network of 10 nodes ($N=10$) with network-wide mini-batch of size $B=100$ (i.e., $B/N=10$). We consider different mismatch factors between streaming, processing, and communication rates in this experiment, which result in the number of samples being discarded as $\mu \in \{0, 10, 100, 200, 1000\}$. The results are plotted in Fig.~\ref{fig:SyntheticPCA_Latency}, which show that the values of excess risk for $\mu \in \{10, 100, 200\}$ are comparable to that for $\mu=0$, but the error for $\mu=1000$ is an order of magnitude worse than the nominal error.

\subsubsection{Real-world Data}\label{subsubsec:Real_DMK}
\revise{We next demonstrate the performance of DM-Krasulina on CIFAR-10 dataset~\cite{krizhevsky2009learning}, which consists of roughly $5\times 10^4$ training samples with $d=3072$.} Our first set of experiments for this dataset uses the stepsize $\eta_t=c/t$ with $c \in \{8\times 10^4, 8\times 10^4, 9\times 10^4, 10^5, 10^5\}$ for network-wide mini-batch sizes $B \in \{1, 10, 100, 1000, 5000\}$ in the resourceful regime ($\mu = 0$). The results, which are averaged over 50 random initializations and random shuffling of data, are given in Fig.~\ref{fig:MNIST_NoLatency}. It can be seen from this figure that the final error relatively stays the same as $B$ increases from $1$ to $1000$, but it starts getting affected significantly as the network-wide mini-batch size is further increased to $B=5000$. Our second set of experiments for the CIFAR-10 dataset corresponds to the resource-constrained regime with $(N,B)=(10,100)$ and stepsize parameter $c = 8\times 10^4$ for the number of discarded samples $\mu \in \{0, 10, 100, 200, 500\}$. The results, averaged over 200 trials and given in Fig.~\ref{fig:MNIST_Latency}, show that the system can tolerate loss of some data samples per iteration without significant increase in the final error; the increase in error, however, becomes noticeable as $\mu$ approaches $B$. Both these observations are in line with the insights of the theoretical results.

%For Higgs dataset we used $c=0.07$ and mini-batch of sizes, $B=\{1, 10^2, 10^3, 10^4, 2\times 10^4\}$. We can see the same trend here with suboptimal rate for mini-batches of size $B=10^4$ and $B=2\times 10^4$, while all the other batch-size choices give us $O(1/t)$ rate. Furthermore, with latency in computing network wide average we observe that for batch size $B=10^3$ we achieve optimal error rate as far as we are discarding up to $10^3$ samples per iteration round.
\begin{figure}
	\centering
		\subfigure[CIFAR-10 Data ($\mu=0$): Impact of network-wide mini-batch size $B$ on the convergence behavior of \DMK~for the resourceful regime.]{
	\includegraphics[width=0.45\columnwidth]{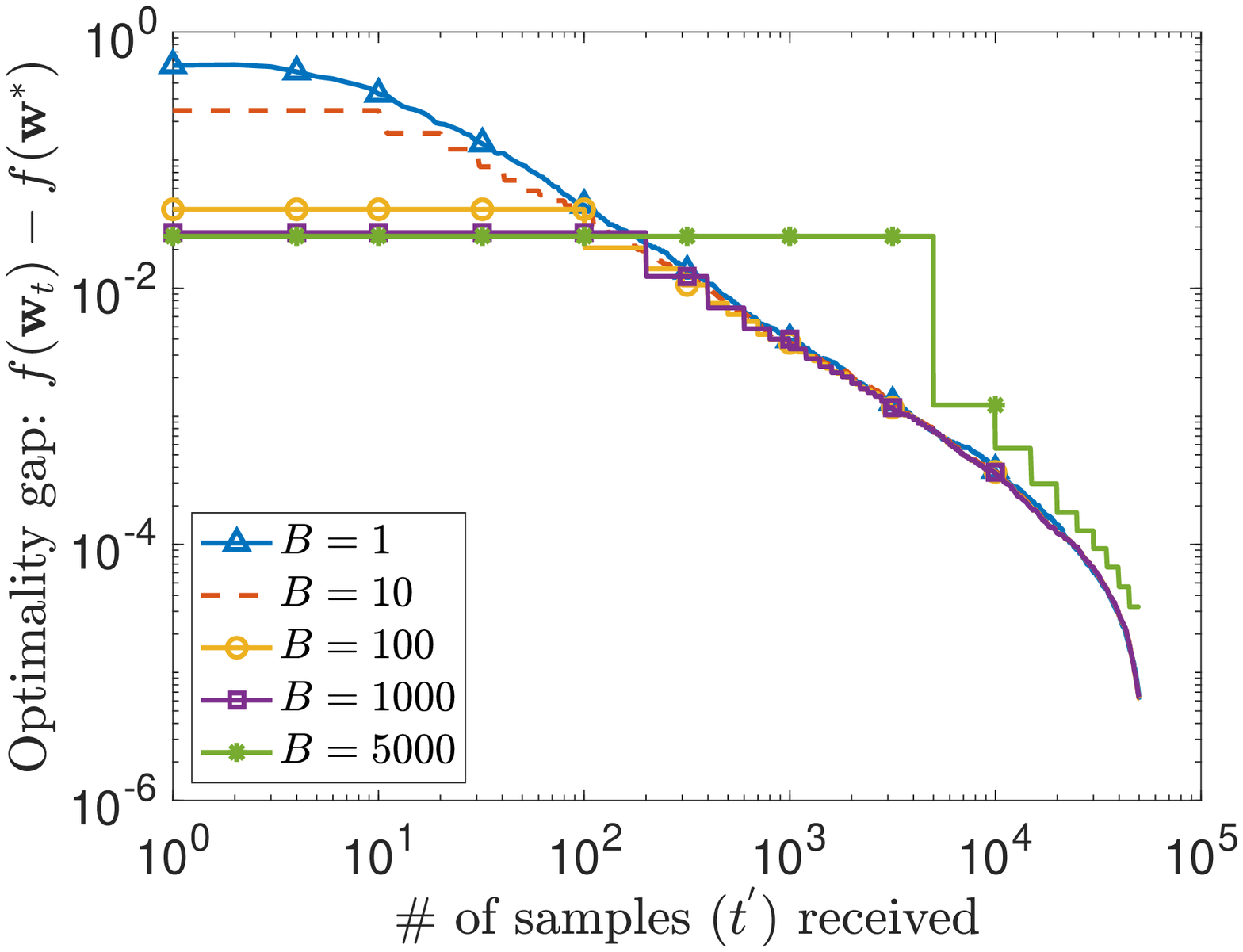}
        \label{fig:MNIST_NoLatency}}
		\qquad
		\centering
	    	\subfigure[CIFAR-10 Data ($N=10$; $B=100$): Convergence behavior of \DMK~in a resource-constrained regime, which causes loss of $\mu$ samples per iteration.]{
	    	\includegraphics[width=0.45\columnwidth]{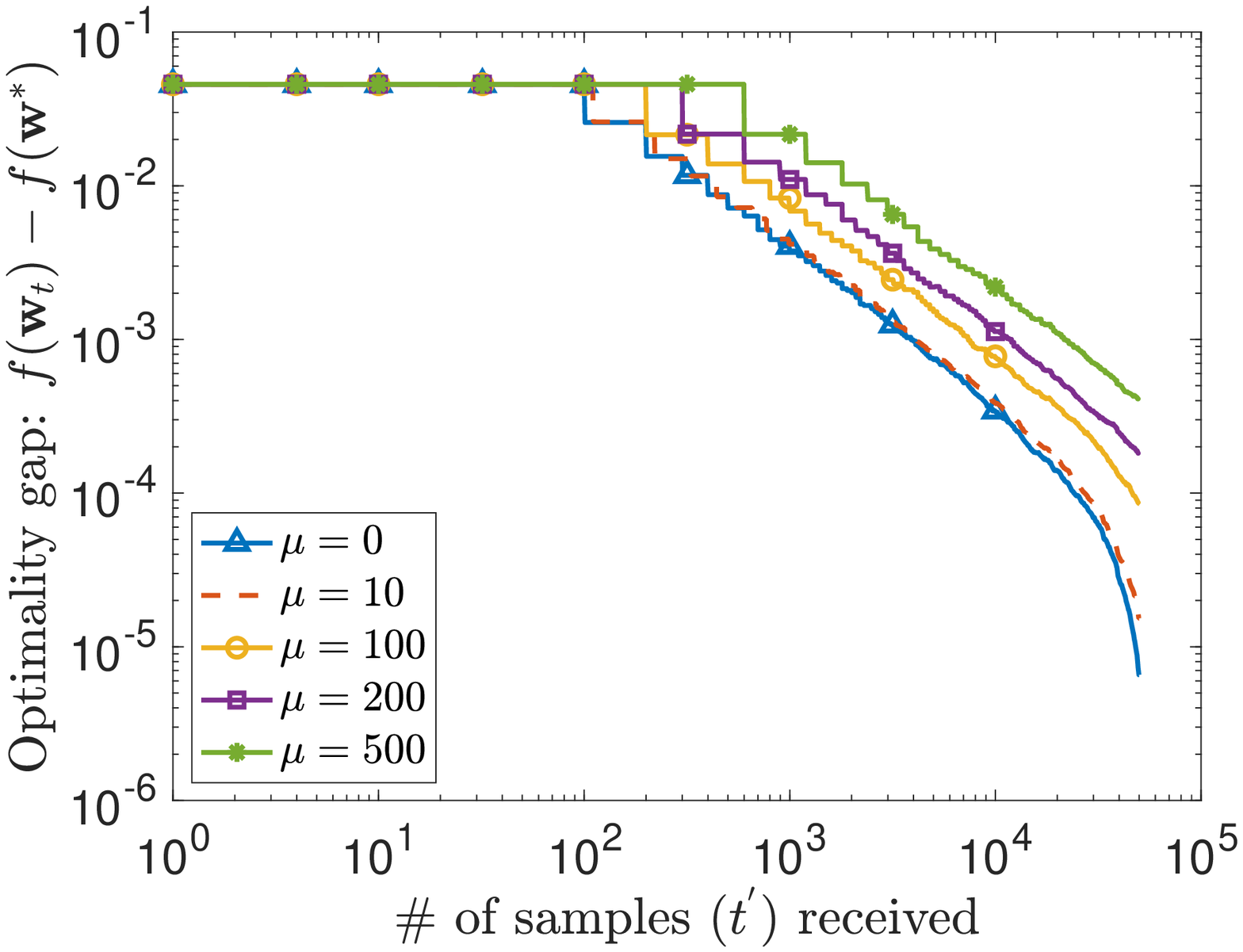}
            \label{fig:MNIST_Latency}}
		\caption{Performance of \DMK~for the CIFAR-10 dataset under two scenarios: (a) No data loss ($\mu = 0$) and (b) loss of $\mu > 0$ samples per algorithmic iteration.}
\end{figure}

\section{Distributed Stochastic \revise{Approximation} Using Inexact Averaging}\label{sec:Decentralized}
In this section, we discuss recent results for distributed learning from fast streaming data when the averages computed among nodes in the network are {\em inexact}. As mentioned earlier, inexact averaging occurs in networks where the communications topology either changes with time or it is unknown in advance to facilitate construction of an MPI infrastructure for {\tt AllReduce}-style computations. Similar to Section~\ref{sec:DM_Krasulina}, we require here that nodes compute distributed averages of their stochastic gradients in order to reduce their variance and speed up convergence. Unlike Algorithms~\ref{algo:DMB} and \ref{algo:DMB_Krasulina}, however, these averages are computed via $R$ rounds of averaging consensus, as described in Section~\ref{sect:distributed.learning}. This introduces a fundamental trade-off: the more consensus rounds $R$ used per algorithmic iteration, the smaller the effective gradient noise and averaging error, but the longer it takes to complete each iteration.

Similar to the DMB algorithm described in Section~\ref{subsec:ConvexMinibatch}, we mitigate this trade-off by careful mini-batching of gradient samples. Each node $n$ first locally averages the gradients for its local mini-batch of $B/N$ data samples, after which nodes \emph{approximately} average the mini-batched gradients via consensus iterations. Such mini-batching again speeds up the effective processing rate, allowing the network to process more samples. In this section we detail the precise conditions under which this speed-up is enough to achieve near-optimum convergence rates.

We focus exclusively on convex loss functions $\ell(\bw,\bz)$ in this section and present two algorithms for tackling fast streaming data under inexact averaging: \emph{distributed stochastic gradient descent} (D-SGD) and \emph{accelerated distributed stochastic gradient descent} (AD-SGD). Both these algorithms are distributed variants of the SGD and accelerated SGD methods described in Section \ref{sect:so}: after computing local mini-batch gradients and performing distributed averaging consensus, nodes in these algorithms take (accelerated) SGD steps with respect to the averaged gradients.

\subsection{Algorithms for Distributed Stochastic Convex Approximation}
We now present algorithmic details for D-SGD and AD-SGD, both of which are distilled and unified versions of algorithms published in \cite{tsianos2016efficient,NoklebyBajwa.ITSIPN19}.\footnote{In particular, \cite{tsianos2016efficient} presents a distributed learning strategy based on {\em dual averaging}, a method for stochastic convex optimization that has convergence rates similar to those of SGD. In contrast, \cite{NoklebyBajwa.ITSIPN19} presents a strategy based on {\em mirror descent}, a generalization of SGD-style methods. For clarity of exposition, we present results in here under the SGD framework.} The operating assumption here is that there is sufficient provisioning of resources within the system to ensure $R_s \leq BR_e$. In the next section, we present the convergence rates of D-SGD and AD-SGD under this assumption, and explicitly describe the regimes of system parameters in which these methods have optimum convergence rates. An important feature of these forthcoming results is the impact of acceleration in these regimes. Accelerated methods provide additional ``headroom,'' allowing for larger mini-batches that can process faster data streams while still yielding optimum convergence rates.

The algorithmic descriptions in the following make use of a symmetric, doubly stochastic matrix $\bA \in \R^{N \times N}$ that is consistent with the network graph $G$, as discussed in Section~\ref{sssec:inexact.ave}. We suppose that the second-largest eigenvalue magnitude obeys $|\lambda_2(\bA)| < 1$, where the inequality must be strict. This rather mild assumption is guaranteed, {\em inter alia}, by choosing $\bA$ to have elements strictly greater than zero for all elements corresponding to an edge of a connected graph $G$, i.e., each node includes its entire neighborhood (including itself) in the local convex combination it computes for each consensus round.

\subsubsection{Description of D-SGD}\label{sect:dsgd.description}
The D-SGD algorithm generalizes SGD with Polyak--Ruppert averaging to the setting of distributed, streaming data. We mathematically detail the steps of D-SGD in Algorithm~\ref{alg:standard}, and summarize them here. At the beginning of every iteration $t$ of D-SGD, each node $n$ receives a mini-batch $\{\bz_{n,b,t}\}_{b=1}^{B/N}$ of $B/N$ i.i.d.~data samples. Each node $n$ afterwards computes $\bg_{n,t}$, the average gradient over its local mini-batch, and then engages in $R \in \Z_+$ rounds of averaging consensus, where the parameter $R$ satisfies the constraints in \eqref{eqn:Rounds}. This results in an approximate average of all $N$ mini-batches at each node $n$, which we denote by $\bh_{n,t,R}$. Each node finally takes an SGD step using $\bh_{n,t,R}$ and engages in Polyak--Ruppert averaging to obtain the estimate $\bw_{n,t}^\mathsf{av}$.

\begin{algorithm}[t]
    \textbf{Require:} Provisioning of compute and communications resources to ensure $R_s \leq B R_e$\\
    \textbf{Input:} Data stream $\{\bz_{t'} \stackrel{\text{i.i.d.}}{\sim} \cD\}_{t' \in \Z_+}$ that is split into $N$ streams of mini-batched data $\{\bz_{n,b,t}\}_{b=1,t\in\Z_+}^{B/N}$ across $N$ nodes, doubly stochastic matrix $\bA$, number of consensus rounds $R$, and stepsize sequence $\left\{\eta_t \in \R_+\right\}_{t \in \Z_+}$\\
	\textbf{Initialize:} All compute nodes initialize with $\bw_{n,0} = \bzero \in \R^d$
	%\algsetup{indent=1em}
	\begin{algorithmic}[1]
		\For{$t=1,2,\dots$,}
		    \State $\forall n \in \{1, \dots, N\}, \ \bg_{n,t} \leftarrow \bzero \in \R^d$
		    %\State Initialize $\bg_n \leftarrow 0\;\forall n \in \{1, \dots, N\}$
		    %\State \textbf{(In Parallel)} Processor $n$ receives a mini-batch $\{\bz_{n,b,t}\}_{b=1}^{B/N}$ and updates $\bg_{n}$ locally as follows:
		    \For{$b=1,\dots,B/N$} \Comment{Node $n$ receives the mini-batch $\{\bz_{n,b,t}\}_{b=1}^{B/N}$ and updates $\bg_{n,t}$ locally}
		        \State $\forall n \in \{1, \dots, N\}, \ \bg_{n,b,t} \leftarrow \nabla \ell(\bw_{n,t}, \bz_{n,b,t})$
		        \State $\forall n \in \{1, \dots, N\}, \ \bg_{n,t} \leftarrow \bg_{n,t} + \frac{1}{B/N}\bg_{n,b,t}$
		        %\State Compute $\bg \leftarrow \nabla_{\bw_{n,t}}\ell(\bw_{n,t}, \bz_{n,b,t})$
		        %\State $\bg_{n} \leftarrow \bg_{n} + \frac{1}{B/N}\bg$
            \EndFor
		    \State $\forall n \in \{1, \dots, N\}, \ \mathbf{h}_{n,t,0} \gets \bg_{n,t}$ \Comment{Get mini-batched gradients}
		    \For{$r=1,\dots,R$ and $n=1,\dots,N$} \Comment{$R$ rounds of averaging consensus}
         	    \State $\mathbf{h}_{n,t,r} \gets \sum_{m = 1}^N a_{n,m}\mathbf{h}_{m,t,r-1}$
            \EndFor
		    \For{$n=1,\dots,N$}
        	    \State $\bw_{n,t+1} \gets [\bw_{n,t} - \eta_t \mathbf{h}_{n,t,R}]_\mathcal{W}$ \Comment{Projected SGD step}
                \State $\mathbf{w}_{n,t+1}^\textsf{av} \gets \left(\sum_{\tau = 0}^{t}\eta_\tau\right)^{-1} \sum_{\tau=0}^{t}\eta_\tau\bw_{n,\tau+1}$ \Comment{Averaging of iterates}
            \EndFor
		\EndFor
	\end{algorithmic}
	{\bf Return:} Decentralized estimates $\{\bw_{n,t}^\textsf{av}\}_{n\in \cV}$ of the Bayes optimal solution after receiving $t'=Bt$ samples
    \caption{Distributed Stochastic Gradient Descent (D-SGD)
    \label{alg:standard}}
\end{algorithm}

\subsubsection{Description of AD-SGD}\label{sect:adsgd.description}
The AD-SGD algorithm generalizes {\em accelerated} SGD, as presented in Section \ref{sect:so}, to the distributed setting. The generalization is similar to D-SGD's extension of the SGD procedure: Nodes collect mini-batches $\{\bz_{n,b,t}\}_{b=1}^{B/N}$ from their data streams, compute average gradients $\bg_{n,t}$, and get approximate gradient averages $\bh_{n,t,R}$ via $R$ rounds of averaging consensus. However, instead of taking a standard SGD step, compute nodes take an accelerated SGD step using $\bh_{n,t,R}$. This involves each node maintaining iterates $\bu_{n,t}$, $\bv_{n,t}$, and $\bw_{n,t}$ as in accelerated SGD, which are averaged and updated according to two sequences of stepsizes $\beta_t \in [1,\infty)$ and $\eta_t \in \R_+$. We mathematically detail the steps of AD-SGD in Algorithm~\ref{alg:accelerated}.

%The setting for AD-SGD is the same as in Section \ref{sect:mirror.descent}. As in Section \ref{sect:dsamd.description}, we suppose a mixing matrix $\mathbf{A} \in \mathbb{R}^{m \times m}$ that is symmetric, doubly stochastic, consistent with $G$, and has nonzero spectral gap. The main distinction between accelerated and standard mirror descent is the way one averages iterates. Rather than simply average the sequence of iterates, one maintains several distinct sequences of iterates, carefully averaging them along the way. This involves two sequences of step sizes $\beta_t \in [1,\infty)$ and $\gamma_t \in \mathbb{R}$, which are not held constant. Again we suppose that there is a predetermined number of algorithmic rounds $t$ and samples processed $t' = Bt$. We detail the steps of AD-SGD in Algorithm \ref{alg:accelerated}.

\begin{algorithm}[t]
    \textbf{Require:} Provisioning of compute and communications resources to ensure $R_s \leq B R_e$\\
    \textbf{Input:} Incoming mini-batched data streams at $N$ compute nodes, expressed as $\{\bz_{n,b,t}\}_{b=1,t\in\Z_+}^{B/N}$, doubly stochastic matrix $\bA$, number of consensus rounds $R$, and stepsize sequences $\left\{\eta_t,\beta_t \in \R_+\right\}_{t \in \Z_+}$\\
	\textbf{Initialize:} All compute nodes initialize with $\bu_{n,0}, \mathbf{v}_{n,0}, \mathbf{w}_{n,0} = \bzero \in \R^d$
	%\algsetup{indent=1em}
	\begin{algorithmic}[1]
		\For{$t=1,2,\dots$,}
        	\State $\forall n \in \{1, \dots, N\}, \ \mathbf{u}_{n,t} \gets \beta_t^{-1}\mathbf{v}_{n,t} + (1-\beta^{-1}_t)\mathbf{w}_{n,t}$
        	\State $\forall n \in \{1, \dots, N\}, \ \bg_{n,t} \leftarrow \bzero \in \R^d$
		    %\State Initialize $\bg_n \leftarrow 0\;\forall n \in \{1, \dots, N\}$
		    %\State \textbf{(In Parallel)} Processor $n$ receives a mini-batch $\{\bz_{n,b,t}\}_{b=1}^{B/N}$ and updates $\bg_{n}$ locally as follows:
		    \For{$b=1,\dots,B/N$} \Comment{Node $n$ receives the mini-batch $\{\bz_{n,b,t}\}_{b=1}^{B/N}$ and updates $\bg_{n,t}$ locally}
		        \State $\forall n \in \{1, \dots, N\}, \ \bg_{n,b,t} \leftarrow \nabla \ell(\bu_{n,t}, \bz_{n,b,t})$
		        \State $\forall n \in \{1, \dots, N\}, \ \bg_{n,t} \leftarrow \bg_{n,t} + \frac{1}{B/N}\bg_{n,b,t}$
		        %\State Compute $\bg \leftarrow \nabla \ell(\bu_{n,t}, \bz_{n,b,t})$
		        %\State $\bg_n\leftarrow \bg_n + \frac{1}{B/N}\bg$
		    \EndFor
		    \State $\forall n \in \{1, \dots, N\}, \ \mathbf{h}_{n,t,0} \gets \bg_{n,t}$ \Comment{Get mini-batched gradients}
            %\State $\mathbf{h}_{n,t,0} \gets \bg_n \;\forall n \in \{1, \dots, N\}$ \Comment{Get mini-batched gradients}
            \For{$r=1,\dots,R$ and $n=1,\dots,N$} \Comment{$R$ rounds of averaging consensus}
         	    \State $\mathbf{h}_{n,t,r} \gets \sum_{m = 1}^N a_{n,m}\mathbf{h}_{m,t,r-1}$
            \EndFor
		    %\For{$r=1:R$, $n=1:N$}
         	%    \State $\mathbf{h}_{n,t,r} \gets \sum_{m \in \mathcal{N}_i} a_{nm}\mathbf{h}_{m,t,r-1}$ \Comment{Consensus rounds}
            %\EndFor
		    \For{$n=1,\dots,N$} \Comment{Project A-SGD step}
        	    \State $\mathbf{v}_{n,t+1} \gets [\mathbf{u}_{n,t} - \eta_t \mathbf{h}_{n,t,r}]_\mathcal{W}$
                \State $\mathbf{w}_{n,t+1} \gets \beta_t^{-1}\mathbf{v}_{n,t+1} + (1-\beta_t^{-1})\mathbf{w}_{n,t}$
            \EndFor
        \EndFor
	\end{algorithmic}
	{\bf Return:} Decentralized estimates $\{\bw_{n,t}\}_{n\in \cV}$ of the Bayes optimal solution after receiving $t'=Bt$ samples
\caption{Accelerated Distributed Stochastic Gradient Descent (AD-SGD)
    \label{alg:accelerated}}
\end{algorithm}

% \begin{algorithm}[t]
%   \caption{Accelerated distributed stochastic gradient descent (AD-SGD)
%     \label{alg:accelerated1}}
%     \begin{algorithmic}[1]
%     \Require Doubly-stochastic matrix $\mathbf{A}$, step size sequences $\gamma_s$, $\beta_s$, number of consensus rounds $r$, batch size $b$, and stream of mini-batched gradients $\theta_i(s)$.
%     \For{$i=1:m$}
%     	\State $\mathbf{x}_i(1),\mathbf{x}^\mathrm{md}_i(1),\mathbf{x}^\mathrm{ag}_i(1) \gets 0$ \Comment{Initialize search points}
%     \EndFor
%     \For {$s=1:S$}
%         \For {$i=1:m$}
%         	\State $\mathbf{w}_i^\mathrm{md}(s) \gets \beta_s^{-1}\mathbf{w}_i(s) + (1-\beta^{-1}_s)\mathbf{w}_i^\mathrm{ag}(s)$
%             \State $\mathbf{h}_i^0(s) \gets \theta_i(s)$ \Comment{Get mini-batched gradients}
%         \EndFor

%         \For{$q=1:r$, $i=1:m$}
%         	\State $\mathbf{h}_i^q(s) \gets \sum_{j \in \mathcal{N}_i} a_{ij}\mathbf{h}_j^{q-1}(s)$ \Comment{Consensus rounds}
%         \EndFor
%         \For{$i=1:m$}
%         	\State $\mathbf{w}_i(s+1) \gets P_{\mathbf{w}_i(s)}(\gamma_s \mathbf{h}_i^r(s))$ \Comment{Prox mapping}
%             \State $\mathbf{w}^\mathrm{ag}_i(s+1) \gets \beta_s^{-1}\mathbf{w}_i(s+1) + (1-\beta_s^{-1})\mathbf{w}_i^\mathrm{ag}(s)$
%         \EndFor
%     \EndFor
%   \end{algorithmic}
%   \Return $\mathbf{x}_i^{\mathrm{ag}}(S+1), i=1,\dots,m.$
% \end{algorithm}

\subsection{Convergence Results and Scaling Laws}
Here, we present results on the convergence speeds of D-SGD and AD-SGD as well as their gap to the ideal case in which data streams can be centrally processed by a single powerful machine, or, equivalently, compute nodes can process instantaneously and communicate at infinite rates. We begin by presenting results for D-SGD.

\begin{theorem}\label{thm:mirror.descent.convergence.rate}
    Let the loss function $\ell(\bw, \bz)$ be convex and smooth with $L$-Lipschitz gradients and gradient noise variance $\sigma^2$, and suppose a bounded model space with expanse $D_\mathcal{W}$. %Let $\ell(\bw, \bz)$ be convex and have an  $L$-Lipschitz gradient in $\bw$ for each $\bz \in \cZ$, suppose that the stochastic gradient $\nabla_{\bw}\ell(\bw,\bz)$ has $\sigma^2$-bounded variance for all $\bw\in\cW$, and suppose a  bounded model space with expanse $D_\mathcal{W}$.
    Further suppose that nodes engage in $R$ rounds of consensus averaging in each iteration of D-SGD. Then, there exist stepsizes $\eta_t$ such that the expected excess risk at each node $n$ after $t$ iterations is bounded by
    \begin{equation}\label{eqn:DSAMD.convergence.rate}
    	\E\big\{f(\mathbf{w}_n^{\textsf{\emph{av}}}(t))\big\} - f(\bw^*) \leq \frac{2L}{t} + \sqrt{\frac{4\Delta_t^2}{t}} +  \sqrt{\frac{1}{2}}\frac{\Xi_t D_\mathcal{W}}{L},
    \end{equation}
where
\begin{equation}
	\Xi_t := \left(\frac{\sigma}{\sqrt{B/N}}\right)(1+ N^2 |\lambda_2(\bA)|^R)((1+N^2 |\lambda_2(\bA)|^R)^{t}-1)
\end{equation}
and
\begin{equation}
    \Delta_t^2 := 4\sigma^2/B + 2 \left(\frac{\sigma}{\sqrt{B/N}}\right)^2(1+N^4 |\lambda_2(\bA)|^{2R}) ((1+N^2 |\lambda_2(\bA)|^R)^{t}-1)^2
    + 4 |\lambda_2(\bA)|^{2R}\sigma^2 N^3/B
\end{equation}
quantify the moments of the effective gradient noise.
\end{theorem}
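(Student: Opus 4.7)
The plan is to adapt the analysis behind Theorem~\ref{thm:SGD_Mini} (centralized SGD with Polyak--Ruppert averaging) to the distributed setting by tracking two separate error sources: the statistical gradient noise that mini-batching reduces, and the network-induced disagreement between local iterates that inexact consensus only partially removes. The bound's three pieces have a natural correspondence to this decomposition: $2L/t$ is the deterministic smoothness term inherited from centralized SGD, $\sqrt{4\Delta_t^2/t}$ is the effective stochastic noise term driven by the (inflated) mini-batch variance $\Delta_t^2$, and $\sqrt{1/2}\,\Xi_t D_\mathcal{W}/L$ is the bias incurred because each node takes its step using $\bh_{n,t,R}$ rather than the true network-wide mean gradient $\bar{\bg}_t := \frac{1}{N}\sum_n \bg_{n,t}$.

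First, I would introduce a ``virtual'' centralized iterate $\bar{\bw}_t := \frac{1}{N}\sum_n \bw_{n,t}$ and exploit the fact that $\bA$ is doubly stochastic, so $\frac{1}{N}\sum_n \bh_{n,t,R} = \bar{\bg}_t$ exactly. Averaging the projected SGD updates across nodes, $\bar{\bw}_t$ obeys a stochastic gradient recursion with respect to $\bar{\bg}_t$ (modulo a projection residual that is standard to handle for bounded $\cW$). If every node's gradient were evaluated at $\bar{\bw}_t$, one could invoke Theorem~\ref{thm:SGD_Mini} directly with effective gradient-noise variance $\sigma^2/B$. The technical content of the theorem is to pay for the fact that gradients are actually evaluated at the local iterates $\bw_{n,t}$, and that each node updates using $\bh_{n,t,R}$ rather than $\bar{\bg}_t$.

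Second, I would quantify the per-node drift $\|\bw_{n,t}-\bar{\bw}_t\|_2$ and the per-node consensus error $\|\bh_{n,t,R}-\bar{\bg}_t\|_2$ using the contraction property of averaging consensus. Stacking the $\bh_{n,t,R}$ into a matrix $\bH_t$ and the $\bg_{n,t}$ into $\bG_t$, a standard spectral argument gives $\|\bH_t - \bone \bar{\bg}_t^\tT\|_F \leq N|\lambda_2(\bA)|^R\|\bG_t - \bone\bar{\bg}_t^\tT\|_F$. Combining with $L$-Lipschitz gradients and Definition~\ref{def:gradient.noise.variance} (which controls the per-node mini-batch variance by $\sigma^2/(B/N)$) produces a recursive inequality of the form
\begin{equation*}
\max_n \|\bw_{n,t}-\bar{\bw}_t\|_2 \leq \bigl(1+c_1 N^2 |\lambda_2(\bA)|^R\bigr)\max_n \|\bw_{n,t-1}-\bar{\bw}_{t-1}\|_2 + \eta_t \cdot \text{(mini-batch noise)},
\end{equation*}
whose unrolling is exactly what generates the geometric factors $(1+N^2|\lambda_2(\bA)|^R)^t$ appearing inside $\Xi_t$ and $\Delta_t^2$, together with the prefactor $\sigma/\sqrt{B/N}$.

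Third, I would plug the resulting ``noisy gradient oracle at $\bar{\bw}_t$'' into the proof skeleton of Theorem~\ref{thm:SGD_Mini}: each iteration contributes (i) a deterministic smoothness term (summing to $2L/t$ after selecting $\eta_t = 1/(L+(\sigma/D_\cW)\sqrt{t})$), (ii) a variance term with the inflated variance $\Delta_t^2$ (yielding $\sqrt{4\Delta_t^2/t}$ by the usual martingale/square-root argument), and (iii) a bias term from the systematic deviation of $\bh_{n,t,R}$ from the true gradient, controlled by $\Xi_t$ and converted into a risk bound via $\ell$-smoothness and the diameter $D_\mathcal{W}$. A final step passes from the excess risk of $\bar{\bw}_t^{\textsf{av}}$ to that of each local $\bw_{n,t}^{\textsf{av}}$ using the drift bound and convexity of $f$.

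The main obstacle is the recursive coupling between the node-disagreement dynamics and the stochastic gradient noise: noisy gradients themselves excite local drift, and drift in turn inflates the effective gradient noise seen by $\bar{\bw}_t$. Unrolling this coupling cleanly requires keeping the geometric factor $(1+N^2|\lambda_2(\bA)|^R)^t$ explicit rather than absorbing it into $O(\cdot)$ notation, which is why the bound is most informative in the regime $N^2 |\lambda_2(\bA)|^R \ll 1/t$ --- precisely the regime that corollaries of this theorem use to prescribe how large $R$ must be, as a function of $B$ and $N$, for D-SGD to attain order-optimal convergence.
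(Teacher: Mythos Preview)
Your proposal is correct and follows essentially the same route as the paper's proof sketch: interpret D-SGD as SGD driven by an effective gradient oracle whose \emph{bias} is bounded by $\Xi_t$ and whose \emph{variance} is bounded by $\Delta_t^2$, derive these bounds by unrolling the recursive coupling between consensus error and iterate drift (which produces the $(1+N^2|\lambda_2(\bA)|^R)^t$ factors), and then insert them into the standard analysis of \cite{lan2012optimal}. One small mismatch: the stepsize you quote, $\eta_t = 1/(L+(\sigma/D_\cW)\sqrt{t})$, is the DMB stepsize from Theorem~\ref{thm:DMB}; the paper instead states Theorem~\ref{thm:mirror.descent.convergence.rate} under the finite-horizon constant stepsize of Remark~\ref{rem:stepsize}, though this only affects constants and not the structure of the argument.
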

\begin{proof}[Proof Sketch]
    The convergence analysis follows that of standard results of SGD-style methods, with careful analysis of the equivalent gradient noise resulting from distributed consensus averaging. In particular, $\Delta_t^2$ bounds the equivalent noise variance, and it has two components: $4\sigma^2/B$, which is the equivalent variance if distributed gradients are averaged exactly, and additional terms that express variance increases due to inexact averaging. These latter terms go to zero geometrically as $R \to \infty$. The term $\Xi_t$ bounds the gradient {\em bias}, which is entirely due to averaging errors and which also goes to zero as $R \to \infty$. Inserting these bounds on the bias and variance into the analysis of SGD (see, e.g., \cite{lan2012optimal}) gives the result. See \cite{NoklebyBajwa.ITSIPN19} for a detailed proof.
\end{proof}

\begin{remark}
\revise{The result in Theorem~\ref{thm:mirror.descent.convergence.rate} is stated in terms of the ideal stepsizes discussed in Remark~\ref{rem:stepsize}, which suppose a finite and known time horizon $t$. Although the time horizon may not be known in advance, the conditions for optimality discussed in the following depend on the time horizon; as such, we retain these ideal stepsizes to make explicit this dependence of the results on the time horizon / final iteration count $t$.}
\end{remark}

The convergence rate in Theorem \ref{thm:mirror.descent.convergence.rate} has the same form as that of standard SGD given in Theorem~\ref{thm:SGD_Mini}. The main difference is in the bias and variance of the stochastic gradients, which depend on the processing and communications rates relative to the data streaming rate, and therefore on how many rounds of averaging consensus nodes can carry out per iteration of D-SGD.

The critical question for D-SGD is how fast communication needs to be for order-optimum convergence speed, i.e., the convergence speed that one would obtain if nodes had noiseless access to other nodes' gradient estimates in each iteration. Recall that the system has received $t' = tB$ data samples after $t$ data-splitting instances. Centralized SGD---with access to all $tB$ data samples in sequence---achieves the convergence rate $O\left(\frac{L}{Bt} + \frac{\sigma}{\sqrt{Bt}} \right)$, where the final term dominates the error as a function of $t$ if $\sigma^2  > 0$. In the following corollary we state conditions under which the convergence rate of D-SGD matches this term.
\begin{corollary}\label{cor:mirror.descent.consensus.rounds}
	The optimality gap for D-SGD at each node $n$ satisfies
     \begin{equation}
        \E\big\{f(\mathbf{w}_n^\textsf{\emph{av}}(t))\big\} - f(\bw^*) = O\left(\frac{\sigma}{\sqrt{t'}} \right),
    \end{equation}
    provided the network-wide mini-batch size $B$, the communications rate $R_c$, and the number of nodes $N$ satisfy
    \begin{align*}
    	B/N &= \Omega\left(1 + \frac{\log(t')}{\rho\log(1/|\lambda_2(\bA)|)}\right), \quad B/N = O\left(\frac{\sigma  \sqrt{t'}}{N}\right),\\
        R_c &= \Omega\left(\frac{R_s \log(t')}{\sigma \sqrt{t'}\log(1/|\lambda_2(\bA)|)} + \frac{R_s}{R_p N} \right), \quad t' = \Omega\left(\frac{N^2}{\sigma^2}\right),
    \end{align*}
    where
    \begin{equation*}
        \rho := N \frac{R_c}{R_s} - \frac{1}{R_p}
    \end{equation*}
    is the ratio of the ``effective'' communications rate per sample, which discounts the time spent in computation, and the rate at which streaming data arrive at the system.
\end{corollary}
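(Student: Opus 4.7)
The starting point is the three-term bound in Theorem~\ref{thm:mirror.descent.convergence.rate}. The plan is to show that, under the stated conditions, each of the three terms $2L/t$, $\sqrt{4\Delta_t^2/t}$, and $\Xi_t D_{\cW}/(L\sqrt{2})$ is $O(\sigma/\sqrt{t'})$ with $t'=Bt$. The dominant ``ideal'' variance contribution in $\Delta_t^2$ is the term $4\sigma^2/B$, which already contributes $O(\sigma/\sqrt{Bt})=O(\sigma/\sqrt{t'})$; the task is therefore to choose the number of consensus rounds $R$ and the mini-batch size $B$ so that all remaining contributions are driven down below this benchmark, and then to translate the resulting constraints on $R$ into constraints on $R_c$ via the feasibility bound~\eqref{eqn:Rounds}.

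First I would control the extraneous pieces of $\Delta_t^2$ and the bias $\Xi_t$. Both contain the geometric factor $|\lambda_2(\bA)|^R$ amplified by polynomial factors in $N$ and by the compounding term $(1+N^2|\lambda_2(\bA)|^R)^t$. The key technical step is to impose
\[
N^2 |\lambda_2(\bA)|^R \;\le\; \frac{c}{t}
\]
for a small enough absolute constant $c$; by Bernoulli/Taylor expansion this gives $(1+N^2|\lambda_2(\bA)|^R)^t-1 = O(1/t)$, which in turn forces the compounding contributions inside $\Delta_t^2$ and $\Xi_t$ to be $O(\sigma^2/(Bt))$ and $O(\sigma\sqrt{N/B}/t)$, respectively. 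Likewise, the stand-alone term $4|\lambda_2(\bA)|^{2R}\sigma^2 N^3/B$ inside $\Delta_t^2$ is absorbed into $\sigma^2/B$ as soon as $|\lambda_2(\bA)|^{2R} N^3 = O(1)$, which is implied by the same condition. Solving $N^2 |\lambda_2(\bA)|^R \le c/t$ yields
\[
R \;=\; \Omega\!\left(\frac{\log(t')}{\log(1/|\lambda_2(\bA)|)}\right),
\]
and a final check shows that the resulting bias contribution $O(\sigma\sqrt{N/B}/t)$ is $O(\sigma/\sqrt{t'})$ precisely when $t'=\Omega(N^2/\sigma^2)$, which is where that sample-complexity threshold in the corollary enters. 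The main obstacle in this step is bookkeeping: the polynomial-in-$N$ prefactors in $\Xi_t$ and $\Delta_t^2$ couple nontrivially with the exponentiation in $t$, so one has to carefully choose the constant hidden in the $\Omega(\cdot)$ for $R$ so that the resulting bound is tight and simultaneously dominates \emph{all} of the spurious terms.

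Second, I would translate the lower bound on $R$ together with the remaining requirement on $B$ into the stated conditions on $B/N$ and $R_c$. The feasibility constraint~\eqref{eqn:Rounds} reads $R \le \lfloor B R_c(1/R_s - 1/(NR_p))\rfloor = \lfloor (B/N)\rho\rfloor$, so the requirement $R=\Omega(\log(t')/\log(1/|\lambda_2(\bA)|))$ is equivalent to
\[
\frac{B}{N} \;=\; \Omega\!\left(\frac{\log(t')}{\rho\log(1/|\lambda_2(\bA)|)}\right),
\]
which is the first $B/N$ bound. The upper bound $B/N = O(\sigma\sqrt{t'}/N)$ comes from forcing the deterministic term $2L/t$ to be $O(\sigma/\sqrt{Bt})$: rewriting $L/t = L B/t'$ and requiring this to be $O(\sigma/\sqrt{t'})$ gives $B = O(\sigma\sqrt{t'})$ (with the $L$ folded into the constant). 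Finally, combining $R = \Omega(\log t'/\log(1/|\lambda_2|))$ with the upper bound on $B$ and the identity $\rho = NR_c/R_s - 1/R_p$ yields the stated lower bound on $R_c$, namely $R_c = \Omega\!\left(R_s\log(t')/(\sigma\sqrt{t'}\log(1/|\lambda_2(\bA)|)) + R_s/(R_pN)\right)$, where the second summand simply guarantees $\rho>0$ so that a positive number of consensus rounds is even feasible. All three conditions then combine to yield the target rate $O(\sigma/\sqrt{t'})$ at every node.
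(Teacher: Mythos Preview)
The paper does not actually supply a proof of this corollary; it states the result, discusses its implications, and defers the details to \cite{NoklebyBajwa.ITSIPN19}. Your overall strategy---start from the three-term bound of Theorem~\ref{thm:mirror.descent.convergence.rate}, drive the consensus-error pieces of $\Delta_t^2$ and $\Xi_t$ below the ideal $\sigma^2/B$ level by taking $R$ large enough, and then convert the resulting lower bound on $R$ into the stated constraints on $B/N$ and $R_c$ via the feasibility inequality~\eqref{eqn:Rounds}---is exactly the natural route and matches what the paper's surrounding discussion implies.

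There is, however, a concrete slip in your first step. You impose $N^2|\lambda_2(\bA)|^R \le c/t$ and then claim that ``by Bernoulli/Taylor expansion'' $(1+N^2|\lambda_2(\bA)|^R)^t - 1 = O(1/t)$. That is not right: with $\epsilon := N^2|\lambda_2(\bA)|^R \le c/t$ one only gets $(1+\epsilon)^t \le e^{\epsilon t} \le e^c$, so $(1+\epsilon)^t - 1 = O(1)$, not $O(1/t)$. With merely $O(1)$ control, the bias term becomes $\Xi_t = O(\sigma\sqrt{N/B})$, which can never be $O(\sigma/\sqrt{t'})$ for $N\ge 1$, and the second piece of $\Delta_t^2$ becomes $O(\sigma^2 N/B)$, which is an $N$-fold inflation of the target variance. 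The fix is easy and does not disturb your conclusions: impose the slightly stronger condition $N^2|\lambda_2(\bA)|^R \le c/t^2$ (or $c/(Nt)$, etc.), which genuinely yields $(1+\epsilon)^t - 1 \le \epsilon t \cdot e^{\epsilon t} = O(1/t)$ and makes the rest of your bookkeeping go through. Since the induced requirement on $R$ differs only by a constant factor inside the logarithm, the final $R = \Omega(\log(t')/\log(1/|\lambda_2(\bA)|))$ condition---and hence the $B/N$ and $R_c$ conditions you derive from it---are unchanged.
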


This corollary describes the dependence of the convergence rate of D-SGD on the processing, communications and streaming rates, network topology, and network-wide mini-batch size. We now point out a few connections between this result and the one for the DMB algorithm described in Section \ref{subsec:ConvexMinibatch}. In the case of the DMB algorithm, recall that a maximum local mini-batch size of $B/N = O(\sqrt{t'}/N)$ is prescribed to ensure that the $O(1/t')$ term in the SGD convergence bound does not dominate. Corollary~\ref{cor:mirror.descent.consensus.rounds} further prescribes a {\em minimum} local mini-batch size $B/N$ needed to ensure that nodes have time to carry out sufficient consensus. This condition fails to obtain when the communications rate is too slow to accommodate the required mini-batch size. Indeed, for all else constant, the optimum local mini-batch size is merely $\Omega(\log(t'))$, and the condition on $R_c$ essentially ensures $B/N = O(t^{1/2})$.

Further, Corollary~\ref{cor:mirror.descent.consensus.rounds} dictates the relationship between the size of the network and the total number of data samples obtained at each node. Leaving the other terms constant, Corollary~\ref{cor:mirror.descent.consensus.rounds} requires $t' = \Omega(N^2)$, which implies that the total number of data samples processed {\em per node} should scale faster than the number of nodes in the network. This is a relatively mild condition for big data applications; indeed, many applications involve data streams that are large relative to the size of the network.

Along similar lines, ignoring other constants and log terms, Corollary~\ref{cor:mirror.descent.consensus.rounds} indicates that a communications rate of $R_c = \Omega\big(R_s/\sqrt{t'} + R_s/(R_p N) \big)$ is sufficient for order optimality. Thus, if the total number of data samples {\em per node} grows faster than the number of nodes, the required communications rate approaches the ratio between the sampling and processing rates of the network, i.e., \emph{communications need only be fast enough that processing is not the bottleneck}. This implies that for fixed networks or network families in which the spectral gap $1-|\lambda_2(\bA)|$ is bounded away from zero, even slow communications is sufficient for near-optimum learning.

Next, we bound the expected gap to optimality of the AD-SGD iterates.
\begin{theorem}\label{thm:accelerated.optimality.gap}
    Let the loss function $\ell(\bw, \bz)$ be convex and smooth with $L$-Lipschitz gradients and gradient noise variance $\sigma^2$, and suppose a bounded model space with expanse $D_\mathcal{W}$. %Let $\ell(\bw, \bz)$ be convex and have an  $L$-Lipschitz gradient in $\bw$ for each $\bz \in \cZ$, suppose that the stochastic gradient $\nabla_{\bw}\ell(\bw,\bz)$ has $\sigma^2$-bounded variance for all $\bw\in\cW$, and suppose a  bounded model space with expanse $D_\mathcal{W}$.
    Further suppose nodes engage in $R$ rounds of consensus averaging in each iteration of AD-SGD, and use stepsizes $\eta_t = (t+1)/2\eta$, for $0 < \eta < 1/(2L)$ and $\beta_t = (t+1)/2$. Then, the expected excess risk at each node $n$ after $t$ iterations of AD-SGD is bounded by
    \begin{equation*}
    	\E\{f(\mathbf{w}_n(t))\} - f(\bw^*) \leq \frac{8 L}{t^2} + 4 \sqrt{\frac{4\Delta_t^2}{t}} + \sqrt{32}\Xi_t,
    \end{equation*}
    where
    \begin{equation*}
    	\Delta_t^2 = 2(\sigma/\sqrt{B/N})^2((1+ 2\eta_t N^2 L|\lambda_2(\bA)|^R)^t-1)^2 + \frac{4 \sigma^2}{B/N}(|\lambda_2(\bA)|^{2R}N^2 + 1/N)
     \end{equation*}
     and
     \begin{equation*}
        \Xi_t = (\sigma/\sqrt{B/N})(1+B^2|\lambda_2(\bA)|^R)((1+2\eta_t N^2 L |\lambda_2(\bA)|^R)^t-1).
    \end{equation*}
\end{theorem}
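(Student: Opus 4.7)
The plan is to follow the same high-level template as the proof of Theorem~\ref{thm:mirror.descent.convergence.rate}, namely: write AD-SGD as a centralized accelerated SGD run on network-averaged iterates plus a perturbation, quantify the bias and variance of that perturbation, and then plug these into the accelerated SGD rate from Theorem~\ref{thm:ASGD} (which descends from \cite{lan2012optimal}). The main new ingredient compared to D-SGD is that the momentum recursion couples three sequences $(\bu_{n,t},\bv_{n,t},\bw_{n,t})$, so the drift analysis and the recursive bound on the gradient perturbation must track all three simultaneously.

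First, I would define the network averages $\bar{\bu}_t := \tfrac{1}{N}\sum_n \bu_{n,t}$, $\bar{\bv}_t := \tfrac{1}{N}\sum_n \bv_{n,t}$, $\bar{\bw}_t := \tfrac{1}{N}\sum_n \bw_{n,t}$, and use the fact that $\bA$ is doubly stochastic to verify that the consensus subroutine preserves averages, so that $\tfrac{1}{N}\sum_n \bh_{n,t,R} = \tfrac{1}{N}\sum_{n,b} \nabla \ell(\bu_{n,t},\bz_{n,b,t})$. This lets me rewrite the recursion in the averaged variables as an accelerated SGD step on $f$ at the point $\bar{\bu}_t$, plus an error term that decomposes naturally into: (i) a mean-zero mini-batch gradient noise with variance $\sigma^2/B$, (ii) a Lipschitz-bias term of the form $\tfrac{1}{N}\sum_n [\nabla f(\bu_{n,t})-\nabla f(\bar{\bu}_t)]$ arising from the fact that each node evaluates gradients at its own iterate, and (iii) a residual consensus-averaging error governed by $|\lambda_2(\bA)|^R$.

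Next, I would derive the drift bound $\|\bu_{n,t}-\bar{\bu}_t\|_2$ recursively. The accelerated recursion rewrites $\bu_{n,t}$ as an affine combination of past $\bv_{n,\tau}$ and $\bw_{n,\tau}$, each of which is a projected step from averaged quantities plus a consensus residual of size $O(|\lambda_2(\bA)|^R)$ times the magnitude of the per-iteration gradient. Feeding this recursion into itself produces the characteristic $(1+2\eta_t N^2 L|\lambda_2(\bA)|^R)^t$ factors. Using the $L$-Lipschitz property to convert the drift into a bias on $\nabla f$, and using Definition~\ref{def:gradient.noise.variance} together with independence across $n$ and $b$ to bound the zero-mean stochastic part, yields the stated bias $\Xi_t$ and variance $\Delta_t^2$; note that the $(|\lambda_2(\bA)|^{2R}N^2+1/N)$ factor in the variance exactly captures the residual noise contribution after the $R$-round consensus is projected back out.

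Finally, I would apply the accelerated stochastic approximation analysis of~\cite{lan2012optimal} (the proof behind our Theorem~\ref{thm:ASGD}), which for the prescribed stepsizes $\eta_t=(t+1)\eta/2$ with $\eta<1/(2L)$ and $\beta_t=(t+1)/2$ gives the three-term bound $8L/t^2 + 4\sqrt{4\Delta_t^2/t} + \sqrt{32}\,\Xi_t$ at the averaged point $\bar{\bw}_t$. To translate this bound to a local iterate $\bw_{n,t}$, I would invoke smoothness once more together with the drift bound on $\|\bw_{n,t}-\bar{\bw}_t\|_2$, which is of the same order as the terms already present and hence absorbed into the constants. The hardest part, in my view, is controlling the compounding of errors through the momentum coupling: a naive bound produces an exponent that blows up with $t$ unless the drift recursion is carefully linearized and $|\lambda_2(\bA)|^R$ is used to tame the geometric growth factor. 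Matching the precise $(1+2\eta_t N^2 L|\lambda_2(\bA)|^R)^t$ scaling in the statement is where most of the technical work lies, and where following the D-SGD derivation of Theorem~\ref{thm:mirror.descent.convergence.rate} in~\cite{NoklebyBajwa.ITSIPN19} gives the right template.
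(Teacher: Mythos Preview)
Your proposal is correct and follows essentially the same approach as the paper's proof sketch: bound the bias $\Xi_t$ and variance $\Delta_t^2$ of the effective gradient noise induced by inexact consensus and iterate drift, then insert these into the accelerated SGD analysis of~\cite{lan2012optimal}, exactly as in the D-SGD case of Theorem~\ref{thm:mirror.descent.convergence.rate} but now tracking the coupled $(\bu,\bv,\bw)$ sequences. Your sketch is in fact more detailed than the paper's own (which defers all technicalities to~\cite{NoklebyBajwa.ITSIPN19}), and your identification of the momentum-coupling recursion as the place where the $(1+2\eta_t N^2 L|\lambda_2(\bA)|^R)^t$ factor arises is on target.
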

\begin{proof}[Proof sketch]
    The result again follows from a careful analysis of the bias and variance of the equivalent gradient noise. As before, as $R \to \infty$, the variance term $\Delta_t^2$ has order $O(\sigma^2/B)$ and the bias term $\Xi_t$ vanishes. Putting these quantities into the analysis of accelerated SGD gives the result; we refer the reader to \cite{NoklebyBajwa.ITSIPN19} for details.
\end{proof}

\begin{remark}
\revise{Theorem~\ref{thm:accelerated.optimality.gap} is also stated in terms of the ideal stepsizes given in \cite{lan2012optimal}, which suppose a finite and known time horizon; we again retain this form of the result in order to keep explicit the dependence on time horizon.}
\end{remark}

% As with D-SGD, we study the conditions under which AD-SGD achieves order-optimum convergence speed. The centralized version of accelerated mirror descent, after processing the $mT$ data samples that the network sees after $S$ mini-batch rounds, achieves the convergence rate
% \begin{equation*}
% 	O(1)\left[ \frac{L}{(mT)^2} + \frac{\sigma}{\sqrt{mT}}\right].
% \end{equation*}
% This is the optimum convergence rate under any circumstance. In the following corollary, we derive the conditions under which the convergence rate matches the second term, which usually dominates the error when $\sigma^2 > 0$.
\begin{corollary}\label{cor:accelerated.descent.consensus.rounds}
	The excess risk for AD-SGD at each node $n$ satisfies
    \begin{equation*}
    	\E\big\{f(\mathbf{w}_n(t)\big\} - f(\bw^*) = O\left(\frac{\sigma}{\sqrt{t'}} \right),
    \end{equation*}
    provided
    \begin{align*}
    B/N &= \Omega\left(1 + \frac{\log(t')}{\rho\log(1/|\lambda_2(\bA)|)}\right), \quad B/N = O\left(\frac{\sigma^{1/2}(t')^{3/4}}{N}\right), \\
        R_c &= \Omega\left(\frac{R_s \log(t')}{\sigma (t')^{3/4}\log(1/|\lambda_2(\bA)|)} + \frac{R_s}{R_p N} \right), \quad t' = \Omega\left(\frac{N^{4/3}}{\sigma^2}\right),
    \end{align*}
    where again $\rho := N \frac{R_c}{R_s} - \frac{1}{R_p}$ is the ratio of the effective communications rate per sample and the streaming rate.
\end{corollary}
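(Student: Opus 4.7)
The plan is to instantiate the bound of Theorem~\ref{thm:accelerated.optimality.gap} in terms of the total sample count $t' = Bt$ and to determine separately, for each of the three terms $8L/t^2$, $4\sqrt{4\Delta_t^2/t}$, and $\sqrt{32}\,\Xi_t$, the conditions on $B$, $R$, $N$, and $t'$ that keep that term within $O(\sigma/\sqrt{t'})$. The resulting restrictions on the mini-batch size $B$ give the upper bound in the corollary, while the restrictions on the number of consensus rounds $R$ translate---via the budget constraint \eqref{eqn:Rounds}---into lower bounds on the communications rate $R_c$ and on the local mini-batch size $B/N$. Mirroring Corollary~\ref{cor:mirror.descent.consensus.rounds}, the final compatibility condition $B/N \geq 1$ combined with the upper bound on $B$ will produce the required lower bound on $t'$.

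First, the deterministic optimization error $8L/t^2$ becomes $8LB^2/(t')^2$ after substituting $t = t'/B$. Requiring this to be $O(\sigma/\sqrt{t'})$ yields $B = O(\sigma^{1/2}(t')^{3/4})$, which divided by $N$ is exactly the stated upper bound on $B/N$. Notice that this is a looser restriction on $B$ than in the non-accelerated case (where the $L/t$ rate forces $B = O(\sigma\sqrt{t'})$); this relaxation is the mechanism by which acceleration tolerates larger mini-batches and hence faster streams. Combining $B/N \geq 1$ with $B = O(\sigma^{1/2}(t')^{3/4})$ yields the lower bound on $t'$, written in the same scaling form as Corollary~\ref{cor:mirror.descent.consensus.rounds} (replacing $\sigma\sqrt{t'} \geq N$ by $\sigma^{1/2}(t')^{3/4} \geq N$, which trades the exponent $2$ for $4/3$).

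Second, I would show that the variance and bias terms reduce to the ideal $O(\sigma^2/B)$ and $O(\sigma/\sqrt{t'})$ scalings provided $R$ is large enough. Inspecting $\Delta_t^2$, the two contributions $\tfrac{4\sigma^2 N^3 |\lambda_2(\bA)|^{2R}}{B}$ and $\tfrac{2\sigma^2 N}{B}\bigl((1+2\eta_t N^2 L |\lambda_2(\bA)|^R)^t - 1\bigr)^2$ must each be dominated by $\sigma^2/B$. Using $\eta_t = O(t/L)$ from the prescribed stepsize, a first-order expansion gives $(1+2\eta_t N^2 L|\lambda_2(\bA)|^R)^t - 1 = O(t^2 N^2 |\lambda_2(\bA)|^R)$ whenever the exponent is $o(1)$, so both contributions collapse once $|\lambda_2(\bA)|^R$ is of order $1/\mathrm{poly}(t,N)$. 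The bias term $\Xi_t$ is handled analogously and imposes the strictest requirement, which after simplification becomes $R \log(1/|\lambda_2(\bA)|) = \Omega(\log(t'))$, giving $R = \Omega\bigl(\log(t')/\log(1/|\lambda_2(\bA)|)\bigr)$.

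Finally, I would convert this lower bound on $R$ into the stated constraints on $B/N$ and $R_c$ by plugging into \eqref{eqn:Rounds}, which bounds the number of consensus rounds the system has time to complete per iteration as $R \leq \lfloor B R_c (1/R_s - 1/(N R_p))\rfloor$. Solving for the local mini-batch size gives $B/N = \Omega(1 + \log(t')/(\rho \log(1/|\lambda_2(\bA)|)))$, and solving for $R_c$ at the extreme $B = O(\sigma^{1/2}(t')^{3/4})$ gives $R_c = \Omega\bigl(R_s \log(t')/(\sigma (t')^{3/4} \log(1/|\lambda_2(\bA)|)) + R_s/(R_p N)\bigr)$, precisely the conditions claimed. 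The main obstacle I anticipate is keeping the bias-driven exponential factor $(1+2\eta_t N^2 L|\lambda_2(\bA)|^R)^t - 1$ under tight control: the stepsize $\eta_t$ grows linearly with $t$, so even mild slack in the required $R$ blows this factor up super-polynomially and breaks the bias bound. Managing this via a careful choice of $R$ that is logarithmic in $t'$ and $N$, while simultaneously respecting the round budget \eqref{eqn:Rounds}, is where the main care is needed.
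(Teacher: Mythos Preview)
The paper does not provide an explicit proof of this corollary; it states the result immediately after Theorem~\ref{thm:accelerated.optimality.gap} and offers only the brief discussion that acceleration relaxes the mini-batch constraint from $B/N = O(t^{1/2})$ to $B/N = O(t^{3/4})$, referring to \cite{NoklebyBajwa.ITSIPN19} for details. Your proposal reconstructs precisely the argument the paper leaves implicit: it parallels the derivation of Corollary~\ref{cor:mirror.descent.consensus.rounds} from Theorem~\ref{thm:mirror.descent.convergence.rate}, term by term, with the only substantive change being that the $L/t^2$ deterministic term (rather than $L/t$) governs the upper bound on $B$, which cascades into the $3/4$ exponents throughout. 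Your treatment of the bias and variance terms, the use of the round budget \eqref{eqn:Rounds} to translate the $R$ requirement into constraints on $B/N$ and $R_c$, and your identification of the growing stepsize $\eta_t = O(t)$ as the delicate point are all on target and consistent with the proof sketch of Theorem~\ref{thm:accelerated.optimality.gap}.

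One small point worth flagging: when you combine $B/N \geq 1$ with $B = O(\sigma^{1/2}(t')^{3/4})$, the algebra gives $t' = \Omega(N^{4/3}/\sigma^{2/3})$, not $t' = \Omega(N^{4/3}/\sigma^2)$ as stated in the corollary. This is either a typo in the paper or a consequence of an additional $\sigma$-dependent constraint you would uncover when bounding the variance term $4\sqrt{4\Delta_t^2/t}$ more carefully (the leading piece $4\sigma^2/(BN)$ in $\Delta_t^2$ contributes $O(\sigma/\sqrt{Nt})$ to the risk, and matching this to $O(\sigma/\sqrt{t'})$ is automatic, so the discrepancy likely lies elsewhere). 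It does not affect the correctness of your overall strategy.
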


Notice that the crucial difference between the two schemes is that AD-SGD has a convergence rate of $1/t^2$ in the absence of noise. This faster term, which is often negligible in centralized SGD, means that AD-SGD tolerates more aggressive mini-batching without an impact on the order of the convergence rate. As a result, the condition on $R_c$ is relaxed by $1/4$ in the exponent of $t'$. This is because the condition $B/N = O(t^{1/2})$, which holds for standard stochastic methods, is relaxed to $B/N = O(t^{3/4})$ for accelerated SGD. Thus, the use of accelerated methods increases the domain in which order-optimum rate-limited learning is guaranteed.

% Similar to Corollary \ref{cor:mirror.descent.consensus.rounds}, Corollary \ref{cor:accelerated.descent.consensus.rounds} prescribes a relationship between $m$ and $T$, but the relationship for AD-SGD is $T~=~\Omega(m^{1/3})$, holding all but $m,T$ constant. This again is due to the relaxed mini-batch condition $b = O(T^{3/4})$ for accelerated SGD. Furthermore, ignoring the $\log$ term, Corollary \ref{cor:accelerated.descent.consensus.rounds} indicates that a communications ratio $\rho = \Omega\left(\frac{m^{1/4}}{T^{3/4}}\right)$ is needed for well-connected graphs such as expander graphs. In this case, as long as $T$ grows faster than the cube root of $m$, order-optimum convergence rates can be obtained even for small communications ratio. Thus, the use of accelerated methods increases the domain in which order optimum rate-limited learning is guaranteed.

\subsection{Numerical Experiments for D-SGD and AD-SGD}
% To demonstrate the scaling laws predicted by Corollaries \ref{cor:mirror.descent.consensus.rounds} and \ref{cor:accelerated.descent.consensus.rounds} and to investigate the empirical performance of D-SAMD and AS-SAMD, we consider supervised learning via binary logistic regression. Specifically, following the same streaming data model as in Section~\ref{sec:ProblemFormulation} we assume each node observes a stream of pairs $\bz_{n,t} = (\bx_{n,t},y_{n,t})$ of
% data points $\bx_{n,t} \in \mathbb{R}^d$ and their labels $y_{n,t} \in \{0,1\}$, from which it learns a classifier with the log-likelihood function
% \begin{equation*}
% 	\ell(\mathbf{\bw},w_0,\bz) = y (\mathbf{x}^{\tT}\mathbf{w} + w_0) - \log(1+\exp(\mathbf{x}^T\mathbf{w} + w_0))
% \end{equation*}
% where $\mathbf{w} \in \mathbb{R}^d$ and $w_0 \in \mathbb{R}$ are regression coefficients.

% The learning task is to learn the optimum regression coefficients $\mathbf{w},w_0$. The convex objective function is the negative of the log-likelihood function, averaged over the unknown distribution of the data, i.e.
% \begin{equation*}
% 	f(\mathbf{w}) = -\E \big\{\ell(\mathbf{w},w_0,\mathbf{x},y)\big\}.
% \end{equation*}
% Minimizing $f$ is equivalent to performing maximum likelihood estimation of the regression coefficients \cite{bishop.book06}.

In order to demonstrate the scaling laws predicted by Corollaries~\ref{cor:mirror.descent.consensus.rounds} and~\ref{cor:accelerated.descent.consensus.rounds} and to investigate the empirical performance of D-SGD and AD-SGD, we once again resort to binary linear classification using logistic regression. Similar to Section~\ref{subsec:numerical_DMB}, we examine performance on synthetic data in order to have a ``ground truth`` data distribution against which to compare performance. For the sake of richness, we generate data in here using a slightly different probabilistic model than in Section~\ref{subsec:numerical_DMB}. Specifically, we suppose the data follow conditional Gaussian distributions: For $y_{t'} \in \{-1,+1\}$, we let $\bx_{t'} \sim \mathcal{N}(\mu_{y_{t'}},\sigma_x^2\bI)$, where $\mu_{y_{t'}} \in \{\mu_{-1}, \mu_1\}$ is one of two mean vectors, and $\sigma_x^2 > 0$ is the noise variance (not to be confused with gradient noise variance $\sigma^2$.) For the experiments, we pick $d=20$, choose $\sigma_x^2=2$, and draw the elements $\mu_{-1}$ and $\mu_1$ randomly from the standard normal distribution.

We compare the performance of D-SGD and AD-SGD against several other schemes. As a best-case scenario, we consider {\em centralized} counterparts of D-SGD and AD-SGD, meaning that all $B$ data samples and their associated gradients at each data-splitting instance are available at a single machine, which carries out SGD and {\em accelerated} SGD. Both these algorithms naturally have the best average performance. As a baseline, we consider {\em local} (accelerated) SGD, in which nodes simply perform SGD on their own data streams without collaboration. This scheme benefits from an insensitivity to the \emph{mismatch ratio} $\rho$ (defined in Corollary~\ref{cor:accelerated.descent.consensus.rounds}), i.e., it does not require any mini-batching, and therefore it represents a minimum standard for performance.

Finally, we consider a communications-constrained adaptation of DGD~\cite{Nedic.Ozdaglar.ITAC2009} (see Section~\ref{sect:distributed.learning} for a description of DGD). Note that DGD implicitly supposes $\rho=1$; to handle the $\rho < 1$ case, we consider two adaptations: {\em naive} DGD, in which data samples that arrive between computation+communications rounds are simply discarded, and {\em mini-batched} DGD, in which nodes compute {\em local} mini-batches of size $B/N=1/\rho$, take gradient updates using the local mini-batch, and carry out a consensus round. While it is not designed for the communications-limited scenario, DGD has good performance in general, so it represents a natural alternative against which to compare the performance of D-SGD and AD-SGD.

For network topology, we use  {\em expander graphs}, which are families of graphs that have spectral gap $1-|\lambda_2(\bA)|$ bounded away from zero as $N \to \infty$. In particular, we use 6-regular graphs, i.e., regular graphs in which each node has six neighbors, drawn uniformly from the ensemble of such graphs. Because $|\lambda_2(\bA)|$ is strictly bounded above by 1 for expander graphs, one can more easily examine whether performance of D-SGD and AD-SGD agrees with the ideal scaling laws discussed in Corollaries~\ref{cor:mirror.descent.consensus.rounds} and~\ref{cor:accelerated.descent.consensus.rounds}. At the same time, because D-SGD and AD-SGD make use of imperfect averaging, expander graphs also allow us to examine non-asymptotic behavior of the two schemes. Per Corollaries~\ref{cor:mirror.descent.consensus.rounds} and~\ref{cor:accelerated.descent.consensus.rounds}, we choose $B/N = \frac{1}{10}\frac{\log(t')}{\rho \log(1/|\lambda_2(\bA)|)}$. While such scaling is guaranteed to be sufficient for optimum asymptotic performance, we chose the multiplicative constant $1/10$ via trial-and-error to give good non-asymptotic performance.

In Fig.~\ref{fig:expander} we plot the performance of different methods averaged over 600 Monte Carlo trials. We take $\rho = 1/2$, and consider the regimes $t' = N^2$ (Fig.~\ref{fig:expander}(a)) and $t' = N^{3/2}$ (Fig.~\ref{fig:expander}(b)). For D-SGD, the stepsizes are taken to be $\eta_t = 2.5/\sqrt{t}$. For AD-SGD, we take $\beta_t = t/2$ as prescribed in \cite{lan2012optimal}, as well as $\eta_t = 8/(t+1)^{3/2}$ when $t' = N^{3/2}$ and $\eta_t = 14/(t+1)^{3/2}$ when $t' = N^2$. We arrived at the constants in front of $\eta_t$ via trial-and-error.
% when $t' = N^{3/2}$, $\gamma_t = 14$ for AD-SGD when $t' = N^2$, and $\gamma = 8$ for AD-SGD when $t' = N^{3/2}$.
We see that AD-SGD and D-SGD outperform local methods, while their performance is roughly in line with asymptotic theoretical predictions. The performance of DGD, on the other hand, depends on the regime: For $t' = N^2$, it appears to have order-optimum performance, whereas for $t' = N^{3/2}$ it has suboptimum performance on par with local methods. The reason for the dependency of DGD on regime is not immediately clear and suggests the need for further study into DGD-style methods in the case of rate-limited networks.

\begin{figure}[t]
  \centering
  \subfigure[$t' = N^2$]
    {\includegraphics[width=0.45\textwidth]{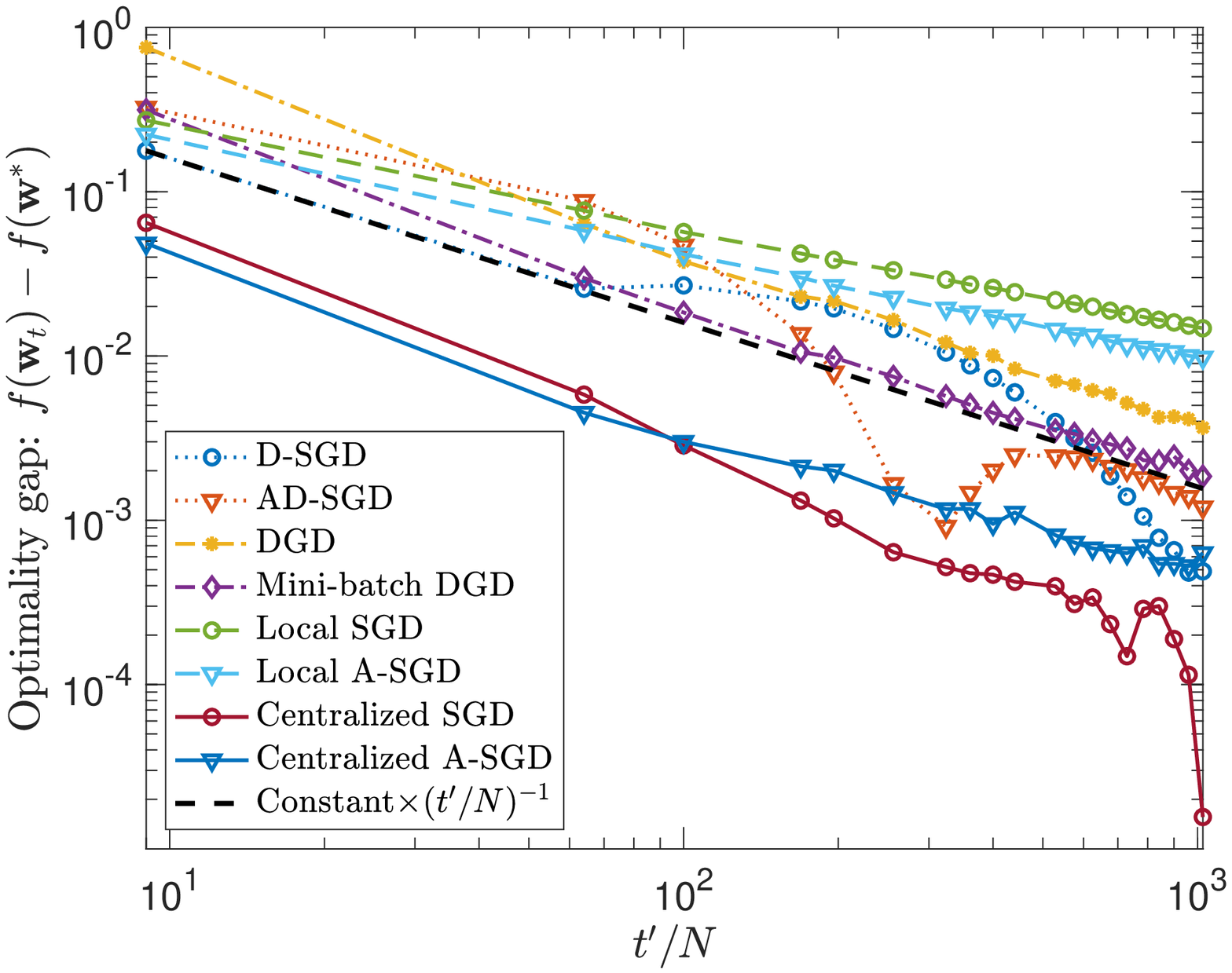}}
    \hfill
  \subfigure[$t' = N^{3/2}$]
    {\includegraphics[width=0.45\textwidth]{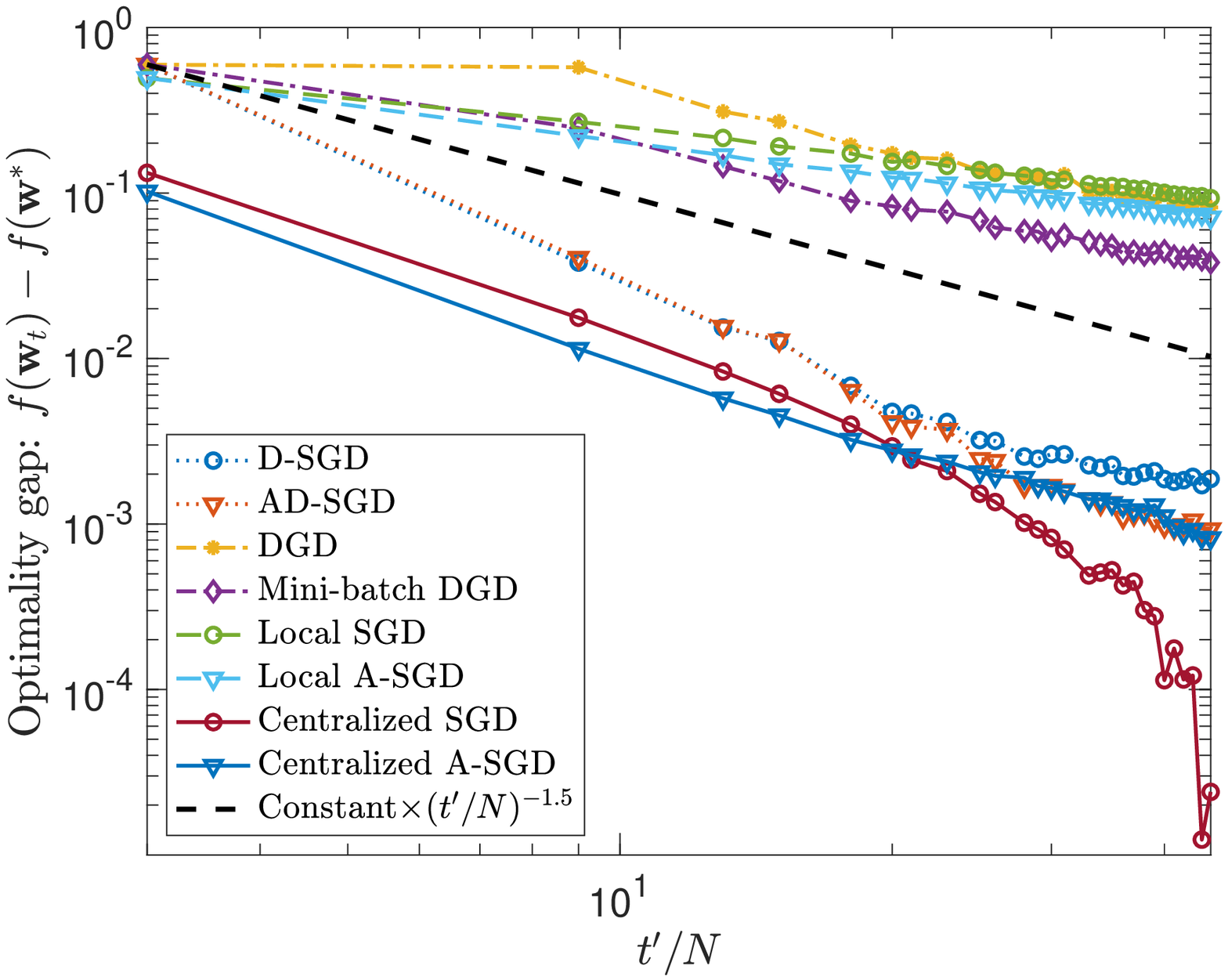}}
   \caption{Performance of different distributed and centralized first-order methods on $6$-regular expander graphs, for $\rho = 1/2$, measured in terms of the excess risk for binary logistic regression.}
	\label{fig:expander}
\end{figure}

% \subsubsection{Erd\H{o}s-Renyi Graphs}
% Finally, we consider {\em Erd\H{o}s-Renyi} graphs, in which a random fraction (in this case $0.1$) of possible edges are chosen. These graphs are not expanders, and their spectral gaps are not bounded. Therefore, order-optimum performance is not easy to guarantee, since the conditions on the rate and the size of the network depend on $\lambda_2(\bA)$, which is not guaranteed to be well behaved. We again take $\rho = 1/2$, consider the regimes $t' = N^2$ and $t' =  N^{3/2}$, and again we choose $B/N = \frac{1}{10}\frac{\log(t')}{\rho \log(1/\lambda_2(\bA))}$. The step sizes are chosen to be the same as for expander graphs in both regimes.
%
% Once again, we observe a clear distinction in performance between local and distributed methods; in particular, all distributed methods (including DGD) appear to show near-optimum performance in both regimes. However, as expected the performance is somewhat more volatile than in the case of expander graphs, especially for the case of $t' = N^2$, and it is possible that the trends seen in these plots will change as $t'$ and $N$ increase.
%
% \begin{figure}[htb]
%   \centering
%   \subfigure[$t' = N^2$]
%     {\includegraphics[width=0.45\textwidth]{Figs/ER_T_m.eps}}
%     \hfill
%   \subfigure[$t' = N^{3/2}$]
%     {\includegraphics[width=0.45\textwidth]{Figs/ER_T_Sq_m.eps}}
%   \caption{Performance on Erd\H{o}s-Renyi graphs, for $\rho = 1/2$.}
% 	\label{fig:erdos}
% \end{figure}

\section{Conclusion and Future Directions}\label{sec:Conclusion}
The development, characterization, and implementation of efficient learning from fast (and distributed) streams of data is a challenge for researchers and practitioners for the coming decade. In this paper, we have laid out results that suggest that such learning is possible---even when the processing rates of individual compute nodes and/or network communications are slow relative to the streaming rate of data. In particular, we have framed this problem as a distributed stochastic approximation problem, in which streams of independent and identically distributed (i.i.d.) data samples arrive at compute nodes that are connected by communications networks and that exchange messages at a fixed rate---which may be slower than the rate at which samples arrive.

Within this framework, we have discussed distributed first-order stochastic optimization methods that efficiently solve the learning problem, both in systems with robust communications networks that can implement exact {\tt AllReduce}-style aggregation of data, and in systems with decentralized communications networks that implement approximate aggregation of data via averaging consensus. We have also presented performance guarantees for these methods for general convex problems and for the ``well-behaved'' nonconvex problem of principal component analysis (PCA), for both of which global optimization is possible.

A critical component of these methods is explicit \emph{local mini-batching}, in which nodes average together the gradients (or gradient-like quantities) of multiple samples. Nodes then need only communicate the gradient of the entire local mini-batch, which substantially reduces the communications burden of distributed learning. A consistent through-line of these results is that both the (implicit) network-wide and the (explicit) local mini-batch sizes must be chosen carefully; small mini-batches do not slow down algorithmic iterations sufficiently to counter the fast streaming rate and/or reduce the communications load, and large mini-batches slow them down so far that convergence is slowed down. We give both a precise characterization of the necessary mini-batch size and the network constraints---in terms of the size, topology, and communications rates---under which it is possible to obtain convergence rates that are as fast (order-wise) as the ideal case in which there are ample compute nodes and communications between the nodes is perfect and instantaneous. Perhaps surprisingly, these results show that even relatively slow communication links are often sufficient for optimal distributed learning.

We conclude this paper with discussion of a subset of research questions that these results leave open.

{\bf Nonconvex losses.} The results presented here are for convex losses or for the PCA problem, a special case in which all local optima are global, and first-order methods can converge on a global optimum. But many important machine learning problems, including deep learning, are highly nonconvex. Recently, several works have proposed and analyzed methods for distributed nonconvex optimization~\cite{tatarenko2017non,wai2017decentralized,zeng2018nonconvex}. However, to the best of our knowledge, the question of general nonconvex learning from fast, distributed data streams with global guarantees remains open. %As mentioned above, SGD-style methods converge on stationary points for nonconvex functions, and as mentioned above several works have proposed and analyzed methods for distributed nonconvex optimization \cite{tatarenko2017non,wai2017decentralized,zeng2018nonconvex}. However, to the best of our knowledge the question of nonconvex learning from fast, distributed data streams remains open.

One of the major challenges in extending the results presented here is pinpointing the impact of inexact averaging on the potential mini-batching gains. When nodes compute inexact averages of their gradients, their iterates may diverge. This divergence may be catastrophic in the case of nonconvex learning, as nodes' iterates may end up in different basins of attraction. In this scenario, gradients will be taken with respect to increasingly distant operating points, and averaging them according to the recipe of this paper may not result in good search directions.

{\bf Message quantization.} This work supposes nodes exchange real-valued messages, whereas messages are quantized in digital communications networks. Further, in addition to local mini-batching, nodes can speed up communications by explicitly quantizing their messages, thereby reducing the network throughput required to exchange messages. But quantization introduces additional noise into the system, which requires further analysis and algorithmic control. A variety of methods have been proposed for tackling quantization-aware learning, both in centralized and distributed settings~\cite{gupta2015deep,zhou2016dorefa,wen2017terngrad}, including {\em signSGD} \cite{bernstein2018signsgd}, in which nodes exchange one-bit quantized gradients. An open question is when and whether such schemes are optimal for distributed learning from fast streaming data.

%***************
% Generated by IEEEtran.bst, version: 1.14 (2015/08/26)

\end{document}